\def\tx{\tilde{x}}
\def\hy{\hat{y}}
\def\R{{\mathbb R}}
\def\CE{\mathbf{E}}
\newtheorem{theorem}{Theorem}
\newtheorem{collary}{Corollary}
\newtheorem{proposition}{Proposition}
\newtheorem{lemma}{Lemma}
\newtheorem{assumption}{Assumption}
\newtheorem{remark}{Remark}
\icmltitlerunning{Lightweight Stochastic Optimization 
for Minimizing Finite Sums with Infinite Data}
\begin{document} 

\twocolumn[
\icmltitle{Lightweight Stochastic Optimization \\
for Minimizing Finite Sums with Infinite Data}

\begin{icmlauthorlist}
\icmlauthor{Shuai Zheng}{ust}
\icmlauthor{James T. Kwok}{ust}
\end{icmlauthorlist}

\icmlaffiliation{ust}{Department of Computer Science and Engineering, Hong Kong University of
Science and Technology, Clear Water Bay, Hong Kong}
\icmlcorrespondingauthor{Shuai Zheng}{szhengac@cse.ust.hk}

\icmlkeywords{machine learning, stochastic optimization}

\vskip 0.3in
]
\printAffiliationsAndNotice{} 

\begin{abstract} 
Variance reduction
has been commonly used in stochastic optimization.
It relies crucially on the assumption that the data set is finite.
However, 
when the data are imputed with random noise as in data augmentation,
the perturbed data set becomes essentially infinite. 
Recently, the stochastic MISO (S-MISO) 
algorithm is introduced to address this 
expected risk minimization problem.
Though it
converges faster than SGD, 
a significant amount of memory
is required. 
In this paper, we propose two SGD-like algorithms for expected risk minimization with random perturbation, namely,
stochastic sample average gradient (SSAG) and stochastic SAGA (S-SAGA).
The memory cost of SSAG does not depend on the sample size, while that of S-SAGA is the same as those of variance reduction methods on unperturbed data. 
Theoretical analysis and 
experimental results on logistic regression and AUC maximization
show that SSAG has faster convergence rate than SGD with comparable space requirement, while S-SAGA outperforms S-MISO in terms of both iteration complexity and 
storage. 


\end{abstract} 


\section{Introduction}

Machine learning tasks are often cast as optimization problems with some data
distributions. 
In regularized risk minimization with $n$ training samples, one minimizes:
\begin{eqnarray} \label{eq:basic_erm}
\min_{\theta} 
\frac{1}{n}\sum_{i=1}^n\ell_i(\theta) + g(x),
\end{eqnarray}
where 
$\theta$ is the model parameter, 
$\ell_i$ is the loss 
due to sample $i$, 
and $g$ is a regularizer.
In this paper, we assume that $\ell_i$ and $g$ are smooth and convex.
Stochastic gradient descent (SGD) \cite{robbins1951stochastic} 
and its variants
\cite{nemirovski2009robust,xiao2010dual,duchi2011adaptive,bottou2016optimization}
are flexible, scalable, and
widely used for this problem.
However, SGD suffers from large variance due to sampling noise. 
To alleviate this problem, 
the stepsize
has to be decreasing,
which slows convergence. 

By exploiting the finite-sum structure
in (\ref{eq:basic_erm}),
a class of 
variance-reduced
stochastic optimization methods 
have been
proposed
recently
\cite{roux2012stochastic,johnson2013accelerating,shalev-shwartz-13,Mairal2013,defazio-14,defazio-14b}. 
Based on the use of control variates
\cite{fishman1996monte},
they construct different
approximations to  the true gradient so that its
variance decreases as the 
optimal solution is
approached. 


In order to capture more variations in the data distribution, it is effective 
to obtain more training data by injecting random noise to the data samples \cite{decoste-02,maaten2013learning,paulin2014transformation}. 
Theoretically, it has been shown that random noise improves generalization \cite{wager2014altitude}.
In addition, artificially corrupting the training data has a wide range of applications in machine learning. 
For example, additive Gaussian noise can be used in image denoising \cite{vincent2010stacked} and provides a form of $\ell_2$-type regularization \cite{bishop1995training}; 
dropout noise serves as adaptive regularization that is useful in stabilizing predictions \cite{maaten2013learning} 
and selecting discriminative but rare features \cite{wager2013dropout}; and Poisson noise is
of interest to count features as in document classification \cite{maaten2013learning}.  

With the addition of noise perturbations, 
(\ref{eq:basic_erm}) becomes 
the following expected risk minimization problem:
\begin{equation} \label{eq:p2}
\min_{\theta} 
\frac{1}{n}\sum_{i=1}^n\CE_{\xi_i}[\ell_i(\theta; \xi_i)] + g(x),
\end{equation} 
where $\xi_i$ is the random noise injected to function $\ell_i$, and $\CE_{\xi}$
denotes expectation w.r.t. $\xi_i$.  Because of the expectation,
the perturbed data can be considered as infinite, and
the finite data set assumption 
in variance reduction methods
is violated.
In this case, each function in problem (\ref{eq:p2}) can only be accessed via a
stochastic first-order oracle, and the main optimization tool is SGD.

Despite its importance,
expected risk minimization 
has received very little attention.
One very recent work for this is the stochastic MISO (S-MISO) 
\cite{bietti2016stochastic}. While it
converges 
faster 
than SGD,
S-MISO requires 
$O(nd)$ space, where $d$ is the feature  dimensionality.
This significantly limits its applicability to big data problems. 
The N-SAGA algorithm \cite{hofmann2015variance} can also be used on 
problems
with infinite data.  However, its asymptotic error is nonzero.

In this paper, we focus on the linear model.
By exploiting the linear structure, we propose two SGD-like variants
with low memory costs:
stochastic sample average gradient
(SSAG) and stochastic SAGA (S-SAGA).
In particular, the memory cost of SSAG does not depend on the sample size $n$, while
S-SAGA has a memory requirement of $O(n)$, which matches the stochastic variance reduction methods 
on unperturbed data
\cite{roux2012stochastic,shalev-shwartz-13,defazio-14,defazio-14b}.
Similar to S-MISO, the proposed algorithms have faster convergence than SGD.
Moreover, the convergence rate of S-SAGA depends on a 
constant 
that is typically smaller 
than that of S-MISO.
Experimental results on logistic regression and AUC maximization with dropout noise
demonstrate the efficiency of the proposed algorithms.

{\bf Notations}. 
For a vector $x$,
$\|x\|=\sqrt{\sum_i x_i^2}$ is its $\ell_2$-norm.
For two vectors $x$ and $y$,  $x^Ty$ denotes its dot
product.



\section{Related Work}\label{sec:review}



In this paper, we consider the linear model.
Given
samples $\{x_1,\dots,x_n\}$, with each  $x_i \in \R^d$,
the regularized risk minimization problem
in (\ref{eq:basic_erm})
can be written as:
\begin{eqnarray} \label{eq:erm_problem}
\min_{\theta} 
\frac{1}{n}\sum_{i=1}^n\phi_i(x_i^T\theta) + g(\theta),
\end{eqnarray}
where 
$\hy_i\equiv x_i^T\theta$ is the prediction on sample $i$,
and $\phi_i$ is a loss.
For example, logistic regression corresponds to $\phi_i(\hy_i) = \log(1 + \exp(-y_i\hy_i))$, where $\{y_1,\dots, y_n\}$ are the training labels; and linear regression corresponds to $\phi_i(\hy_i) = (y_i - \hy_i)^2$. 

\subsection{Learning with Injected Noise}\label{sec:inj_review}

To make the predictor 
robust,
one can 
inject i.i.d. random noise $\xi_i$ to each sample $x_i$ 
\cite{maaten2013learning}. 
Let the perturbed sample be $\hat{x}_i \equiv \psi(x_i, \xi_i)$.
The following types of noise have been popularly used:
(i) additive noise \cite{bishop1995training, wager2013dropout}: 
$\hat{x}= x +
\xi$, where $\xi$ comes from a zero-mean distribution such as the normal or Poisson distribution;
and
(ii) dropout noise \cite{srivastava2014dropout}: 
$\hat{x}= \xi \circ
x$, where 
$\circ$ denotes the element-wise product,
$\xi \in \{0, 1/(1 - p)\}^d$, $p$ is the dropout probability,
and each component of 
$\xi$
is an independent draw from a scaled Bernoulli$(1-p)$ random variable.  
With random perturbations,
(\ref{eq:erm_problem}) becomes the following expected risk minimization
problem:
\begin{equation} \label{eq:em_problem}
\min_{\theta} F(\theta)  \equiv \frac{1}{n}\sum_{i=1}^n\CE_{\xi_i}[\phi_i(
\hat{x}_i^T\theta)] + g(\theta).
\end{equation}
As the objective contains an expectation, 
computing the gradient is infeasible as infinite samples are needed. 
As an approximation,
SGD uses the
gradient 
from a single sample.
However, this has large variance.

In this paper, we make the following assumption
on 
$f_i(\theta; \xi_i) \equiv \phi_i(\hat{x}_i^T\theta) + g(\theta)$
in (\ref{eq:em_problem}).
Note that this implies $\phi_i(\hat{x}_i^T\theta)$ and $F$ are also $L$-smooth.
\begin{assumption} \label{assumption:lipschitz}
Each $f_i(\theta; \xi_i)$ is $L$-smooth
w.r.t. $\theta$, i.e.,
there exists constant $L$ such that $\|\nabla f_i(\theta; \xi_i) - \nabla f_i(\theta'; \xi_i)\| \leq L\|\theta - \theta'\|, \forall \theta, \theta'$.
\end{assumption}


\subsection{Variance Reduction}

In stochastic optimization,
control variates have been commonly used 
to reduce the variance of stochastic gradients
\cite{fishman1996monte}.
In general,
given a random variable $X$
and another 
highly correlated
random variable $Y$, 
a variance-reduced estimate of $\CE X$ 
can be obtained 
as
\begin{eqnarray} \label{eq:control_vr}
X - Y + \CE Y.
\end{eqnarray}
In stochastic optimization on problem (\ref{eq:erm_problem}),
the gradient 
$\phi'_i(x_i^T\theta)x_i$ 
of the loss evaluated on sample
$x_i$ 
is taken as $X$. 
When the training set is finite, various algorithms have been recently proposed  so that
$Y$ is strongly correlated with $\phi'_i(x_i^T\theta)x_i$
and $\CE Y$ can be easily evaluated.
Examples include stochastic average gradient (SAG)
\cite{roux2012stochastic},
MISO
\cite{Mairal2013}, 
stochastic variance reduced gradient (SVRG)
\cite{johnson2013accelerating},
Finito \cite{defazio-14b}, SAGA 
\cite{defazio-14}, and stochastic dual coordinate ascent (SDCA) \cite{shalev-shwartz-13}.


However, 
with the expectation 
in (\ref{eq:em_problem}),
the full gradient (i.e., $\CE Y$ in (\ref{eq:control_vr})) cannot be evaluated, and
variance reduction can no longer be used.
Very recently, the stochastic MISO (S-MISO) algorithm \cite{bietti2016stochastic} is
proposed for solving (\ref{eq:em_problem}).  
Its convergence rate outperforms that of SGD by having a smaller multiplicative constant. 
However, S-MISO
requires an additional
$O(nd)$ space,
which prevents its use on large data sets. 


\section{Sample Average Gradient}
\label{sec:soda}


Let the iterate 
at iteration $t$
be $\theta_{t-1}$.
To approximate the gradient
$\nabla F(\theta_{t-1})$
in (\ref{eq:em_problem}),
SGD uses the
gradient $g_t = \phi_{i_t}'(\hat{x}_{i_t}^T\theta_{t-1})\hat{x}_{i_t} + \nabla
g(\theta_{t-1})$ evaluated 
on a single sample
$\hat{x}_{i_t}$,
where $i_t$ is sampled uniformly from $[n]\equiv  \{1, 2, \dots, n\}$. 
The variance  of
$g_t$ is usually assumed to be bounded  by a constant, as
\begin{equation} \label{eq:var_s}
\CE\|g_t - \nabla F(\theta_{t-1})\|^2 \leq \sigma_s^2,
\end{equation} 
where the expectation is taken w.r.t. both the random index $i_t$ and perturbation $\xi_t$
at iteration $t$. Note that the gradient of regularizer $g$ does not contribute to the variance. 



\subsection{Exploiting the Model Structure}

\subsubsection{Stochastic Sample-Average Gradient (SSAG)}
\label{sec:i}



At iteration $t$,
the stochastic gradient of the loss $\phi_i(\hat{x}_i^T\theta)$  for sample
$\hat{x}_{i_t}$
is
$\phi'(\hat{x}_{i_t}^T\theta)\hat{x}_{i_t}$.
Thus, the gradient direction is determined by 
$\hat{x}_{i_t}$, while
parameter $\theta$
only affects
its scale.
With this observation, 
we consider using
$a_t\hat{x}_{i_t}$
as a control variate  for
$\phi'(\hat{x}_{i_t}^T\theta)\hat{x}_{i_t}$,
where $a_t$ may depend on past information but not
on $\hat{x}_{i_t}$.
Note that 
the gradient component $\nabla g(\theta)$ 
is deterministic, and does not contribute to the construction of control variate. 
Using (\ref{eq:control_vr}),
the resultant gradient estimator  is:
\begin{equation} \label{eq:z}
z_t = (\phi_{i_t}'(\hat{x}_{i_t} ^T\theta_{t-1}) - a_t)\hat{x}_{i_t} + a_t\tx_t + \nabla g(\theta_{t-1}),
\end{equation} 
where $\tx_t$ is an estimate of $\CE[\hat{x}_{i_t}]$. 
For example,
$\tx_t$ can be 
defined
as 
\begin{equation} \label{eq:upd}
\tx_t = \left(1 - \frac{1}{t} \right)\tx_{t-1} + \frac{1}{t}\hat{x}_{i_t}, 
\end{equation}
so that $\tx_t$ 
can be incrementally updated as 
$\hat{x}_{i_t}$'s
are sampled. 
As $\hat{x}_{i_t}$'s are i.i.d.,
by the law of large number,
the sample average $\tx_t$ 
converges to the expected value
$\CE[\hat{x}_{i_t}]$.

The following shows that $z_t$ 
in (\ref{eq:z})
is a biased estimator of 
the gradient $\nabla F(\theta_{t-1})$. 
As $\tx_t$ converges to $\CE[\hat{x}_{i_t}]$, $z_t$ is still asymptotically unbiased.
\begin{proposition} \label{prop:asym_unbiased_est}
$\CE[z_t] = \nabla F(\theta_{t-1})+ a_t\left(1 - \frac{1}{t}\right)(\tx_{t-1}
-\CE[\hat{x}_{i_t}])$.
\end{proposition}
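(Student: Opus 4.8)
The plan is to compute the conditional expectation of $z_t$ given all the randomness generated before the start of iteration $t$, i.e., conditioning on the $\sigma$-field $\mathcal{F}_{t-1}$ that determines $\theta_{t-1}$, $\tx_{t-1}$ and $a_t$; under this conditional expectation, denoted $\CE_t[\cdot]$, those three quantities are constants, and the only fresh randomness is the uniform index $i_t$ and the perturbation $\xi_{i_t}$. This is the correct reading of the statement, since $\nabla F(\theta_{t-1})$ and $\tx_{t-1}$ appear on the right-hand side. By linearity applied to the definition (\ref{eq:z}),
\begin{equation*}
\CE_t[z_t] = \CE_t\!\left[\phi_{i_t}'(\hat{x}_{i_t}^T\theta_{t-1})\hat{x}_{i_t}\right] - a_t\,\CE_t[\hat{x}_{i_t}] + a_t\,\CE_t[\tx_t] + \nabla g(\theta_{t-1}),
\end{equation*}
where $a_t$ and $\nabla g(\theta_{t-1})$ are pulled out because they are $\mathcal{F}_{t-1}$-measurable (in particular $a_t$ does not depend on the current sample).

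Next I would evaluate the three expectations. For the leading term, conditioning on $i_t$ and then averaging over the uniform draw gives $\CE_t[\phi_{i_t}'(\hat{x}_{i_t}^T\theta_{t-1})\hat{x}_{i_t}] = \frac1n\sum_{i=1}^n\CE_{\xi_i}[\phi_i'(\hat{x}_i^T\theta_{t-1})\hat{x}_i]$; interchanging $\nabla$ with $\CE_{\xi_i}$ — justified by the integrability that Assumption~\ref{assumption:lipschitz} supplies for dominated convergence — this equals $\nabla\big(\frac1n\sum_i\CE_{\xi_i}[\phi_i(\hat{x}_i^T\theta_{t-1})]\big) = \nabla F(\theta_{t-1}) - \nabla g(\theta_{t-1})$. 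Since $(i_t,\xi_{i_t})$ is drawn independently of the past, $\CE_t[\hat{x}_{i_t}] = \CE[\hat{x}_{i_t}]$, the unconditional mean $\frac1n\sum_i\CE_{\xi_i}[\hat{x}_i]$. Finally, substituting the update (\ref{eq:upd}) and using the same independence, $\CE_t[\tx_t] = (1-\tfrac1t)\tx_{t-1} + \tfrac1t\,\CE[\hat{x}_{i_t}]$.

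Substituting these back, the $\nabla g(\theta_{t-1})$ terms cancel; the coefficient of $\CE[\hat{x}_{i_t}]$ collects to $-a_t + a_t/t = -a_t(1-1/t)$, while $\tx_{t-1}$ carries coefficient $a_t(1-1/t)$, yielding $\CE_t[z_t] = \nabla F(\theta_{t-1}) + a_t(1-\tfrac1t)(\tx_{t-1} - \CE[\hat{x}_{i_t}])$, as claimed. The computation is routine; the only places needing care are making the conditioning explicit, noting that $a_t$ is independent of the current sample, and justifying the $\nabla$–$\CE$ swap from $L$-smoothness. I do not expect any substantive obstacle beyond this bookkeeping.
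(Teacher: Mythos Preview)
Your proof is correct and follows essentially the same approach as the paper's: take the (conditional) expectation of the definition of $z_t$, use the recursive update (\ref{eq:upd}) to expand $\CE[\tx_t]$, and collect the coefficients of $\tx_{t-1}$ and $\CE[\hat{x}_{i_t}]$. Your version is more explicit about the conditioning and the $\nabla$--$\CE$ interchange, but the argument is the same.
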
 

Note that $\CE[\hat{x}_{i_t}] =
\frac{1}{n}\sum_{i=1}^n\CE_{\xi_i}[\hat{x}_i]$, where $\CE_{\xi_i}$ denotes the expectation w.r.t. $\xi_i$.
We assume that   each
$\CE_{\xi_i}[\hat{x}_i]$
can be 
easily computed. 
This is the case, for example, when 
the noise is
dropout noise or additive zero-mean noise,
and 
$\CE_{\xi_i}[\hat{x}_i]
= x_i$ \cite{maaten2013learning}. 
This  suggests 
replacing $\tx_t$ in (\ref{eq:z}) by $\tx \equiv \frac{1}{n}\sum_{i=1}^n\CE_{\xi_i}[\hat{x}_i]$
(which is equal to $\CE[\hat{x}_{i_t}]$),
leading to the estimator:
\begin{eqnarray} \label{eq:ssag1_est}
v_t = (\phi_{i_t}'(\hat{x}_{i_t} ^T\theta_{t-1}) - a_t)\hat{x}_{i_t} + a_t\tx + \nabla g(\theta_{t-1}).
\end{eqnarray}
The following shows that
$v_t$
is unbiased, 
and also provides an upper bound of its variance.
\begin{proposition} \label{prop:em_var}
$\CE[v_t] = \nabla F(\theta_{t-1})$, and
$\CE[\|v_t - \nabla F(\theta_{t-1})\|^2]
\leq \CE[(\phi_{i_t}'(\hat{x}_{i_t}^T\theta_{t-1}) - a_t)^2\|\hat{x}_{i_t}\|^2] \label{em_var}$.
The bound is minimized when  
\begin{eqnarray} \label{em_opt_a}
a_t = a_t^* \equiv \frac{\CE[\phi'(\hat{x}^T\theta_{t-1})\|\hat{x}\|^2]}{\CE[\|\hat{x}\|^2]}.
\end{eqnarray}
\end{proposition}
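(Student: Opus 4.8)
The plan is to establish the three claims in order: unbiasedness of $v_t$, the variance upper bound, and the optimal scalar $a_t$. Throughout I would condition on the past iterations, so that $\theta_{t-1}$ and the coefficient $a_t$ are deterministic (recall $a_t$ is not allowed to depend on $\hat{x}_{i_t}$), while the sampled index $i_t$ and the injected perturbation $\xi_t$ are the only sources of randomness; I would also use the identity $\tx = \frac{1}{n}\sum_{i=1}^n\CE_{\xi_i}[\hat{x}_i] = \CE[\hat{x}_{i_t}]$ recorded just before the statement.

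For unbiasedness, I would take the expectation in (\ref{eq:ssag1_est}) term by term. The term $\nabla g(\theta_{t-1})$ is deterministic and passes through. Differentiating under the finite sum and the expectation in (\ref{eq:em_problem}) — legitimate under Assumption~\ref{assumption:lipschitz} — gives $\CE[\phi_{i_t}'(\hat{x}_{i_t}^T\theta_{t-1})\hat{x}_{i_t}] = \nabla F(\theta_{t-1}) - \nabla g(\theta_{t-1})$. Finally $\CE[-a_t\hat{x}_{i_t} + a_t\tx] = -a_t\CE[\hat{x}_{i_t}] + a_t\tx = 0$ by the choice of $\tx$. Adding the three contributions yields $\CE[v_t] = \nabla F(\theta_{t-1})$; this is just the $\tx_t \equiv \CE[\hat{x}_{i_t}]$ specialization of Proposition~\ref{prop:asym_unbiased_est}, with its bias term vanishing.

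For the variance bound, write $v_t = u_t + c_t$ with $u_t \equiv (\phi_{i_t}'(\hat{x}_{i_t}^T\theta_{t-1}) - a_t)\hat{x}_{i_t}$ carrying all the randomness and $c_t \equiv a_t\tx + \nabla g(\theta_{t-1})$ deterministic given the past. Then $v_t - \nabla F(\theta_{t-1}) = u_t - \CE[u_t]$ by the just-proved unbiasedness, so the bias–variance identity gives $\CE[\|v_t - \nabla F(\theta_{t-1})\|^2] = \CE[\|u_t\|^2] - \|\CE[u_t]\|^2 \le \CE[\|u_t\|^2] = \CE[(\phi_{i_t}'(\hat{x}_{i_t}^T\theta_{t-1}) - a_t)^2\|\hat{x}_{i_t}\|^2]$, which is the claimed bound. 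To optimize, I view this bound as a function of the scalar $a_t$; expanding the square gives the convex quadratic $\CE[\phi'^2\|\hat{x}\|^2] - 2a_t\CE[\phi'\|\hat{x}\|^2] + a_t^2\CE[\|\hat{x}\|^2]$, whose unique minimizer, obtained by setting the derivative to zero, is $a_t^* = \CE[\phi'(\hat{x}^T\theta_{t-1})\|\hat{x}\|^2]/\CE[\|\hat{x}\|^2]$.

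I do not anticipate a genuine obstacle here; the only thing requiring care is the bookkeeping of which quantities are conditionally deterministic — in particular that $a_t$ is measurable with respect to the past and hence uncorrelated with $\hat{x}_{i_t}$ and $\xi_t$ — and remembering that every expectation above is taken jointly over the sampled index $i_t$ and the injected noise $\xi_t$.
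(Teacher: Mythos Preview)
Your proposal is correct and follows essentially the same route as the paper: unbiasedness via $\CE[\hat{x}_{i_t}]=\tx$, then the variance bound by writing $v_t-\nabla F(\theta_{t-1})$ as a centered random vector and invoking $\CE\|X-\CE X\|^2=\CE\|X\|^2-\|\CE X\|^2\le\CE\|X\|^2$. Your explicit quadratic minimization for $a_t^*$ is a detail the paper leaves implicit, but the argument is the natural one.
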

For 
dropout noise and other additive noise with
known variance,
one can compute $\CE_{\xi_i}\|\hat{x}_i\|^2$ for each $i\in [n]$, and then average
to obtain $\CE[\|\hat{x}\|^2]$.
However, 
evaluating the expectation in the numerator of (\ref{em_opt_a})
is infeasible.

Instead, we define $a_t$ as
\begin{equation} \label{eq:at}
a_t = \tilde{a}_t/s_t
\end{equation} 
for $t \geq 1$, and
approximate the expectations in the numerator and denominator 
by moving averages:
\begin{eqnarray*} \label{eq:app_2}
\tilde{a}_{t+1}  &= & (1 - \beta_t)\tilde{a}_{t} + \beta_{t}\phi_{i_t}'(\hat{x}_{i_t}^T\theta_{t-1})\|\hat{x}_{i_t}\|^2, \\
s_{t+1} &= &  (1 - \beta_t) s_{t} + \beta_{t}\|\hat{x}_{i_t}\|^2.
\end{eqnarray*}
We initialize $a_1 = \tilde{a}_1 = s_1 = 0$, and set $\beta_t \in [0, 1)$. 

The resulting
algorithm, called stochastic sample-average gradient (SSAG),
is shown in
Algorithm~\ref{alg:ssag1}.
Compared to S-MISO \cite{bietti2016stochastic}, SSAG 
is more computationally efficient. 
It
does not require an extra $O(nd)$
memory, and only requires one single gradient evaluation (step~6) in each
iteration. 

\begin{algorithm}[ht]
\caption{Stochastic sample-average gradient (SSAG).}
   \label{alg:ssag1}
\begin{algorithmic}[1]
\STATE {\bfseries Input:} $\eta_t > 0$, $\beta_t \in [0, 1)$.
   \STATE {\bfseries initialize} 
	$\theta_0$;
      $\tx \leftarrow \frac{1}{n}\sum_{i=1}^n\CE_{\xi_i}[\hat{x}_i]$;
$a_1 \leftarrow 0$; $\tilde{a}_1 \leftarrow 0$; $s_1 \leftarrow 0$
   \FOR{$t=1, 2, \dots $}
     \STATE{draw sample index $i_t$ and random perturbation $\xi_t$}
     \STATE{$\hat{x}_{i_t} \leftarrow \psi(x_{i_t}, \xi_t)$}
     \STATE{$d_t \leftarrow  \phi_{i_t}'(\hat{x}_{i_t} ^T\theta_{t-1})$}
    \STATE{$v_t \leftarrow (d_t - a_t)\hat{x}_{i_t}  + a_t\tx + \nabla g(\theta_{t-1})$}
    \STATE{$\theta_t  \leftarrow \theta_{t-1} - \eta_t v_t$}
        \STATE{$\tilde{a}_{t+1} \leftarrow   (1 - \beta_t)\tilde{a}_{t} + \beta_t d_t\|\hat{x}_{i_t}\|^2$ \\   
               $s_{t+1} \leftarrow   (1 - \beta_t)s_{t} + \beta_t \|\hat{x}_{i_t}\|^2$ \\
               $a_{t+1} \leftarrow \tilde{a}_{t+1}/s_{t+1}$}
   \ENDFOR
\end{algorithmic}
\end{algorithm}

The following Proposition shows that $a_t$ in (\ref{eq:at})
is asymptotically optimal 
for appropriate choices of 
$\eta_t$ and $\beta_t$.

\begin{proposition} \label{lemma:opt_beta}
If 
(i) $\CE[\phi_{i_t}'(\hat{x}_{i_t}^T\theta_{t-1})^2\|\hat{x}_{i_t}\|^4] < \infty \text{ and } \CE[\|\hat{x}_{i_t}\|^4] < \infty$;
(ii) $\|v_t\| < \infty$; 
(iii) 
$\eta_t \rightarrow 0, \hspace{.1in} \sum_{t}\eta_t = \infty, \hspace{.1in} \sum_t\eta_t^2 <
\infty$;
(iv) $\beta_t \rightarrow 0, \hspace{.1in} \sum_{t}\beta_t = \infty, \hspace{.1in}
\sum_t\beta_t^2 < \infty$; and
(v) $\eta_t/\beta_t \rightarrow 0$,
then
\[a_t \rightarrow a_t^* \hspace{.1in} w.p. 1. \]
\end{proposition}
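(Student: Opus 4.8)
The plan is to view the two moving averages $\tilde a_t$ and $s_t$ as stochastic-approximation recursions tracking a (possibly moving) target, and then to combine them through the quotient $a_t = \tilde a_t/s_t$. Write $h(\theta) \equiv \CE[\phi'(\hat x^T\theta)\|\hat x\|^2]$ and $c \equiv \CE[\|\hat x\|^2]$, so that $a_t^* = h(\theta_{t-1})/c$. By Cauchy--Schwarz together with condition (i), $|h(\theta_{t-1})| \le \big(\CE[\phi'(\hat x^T\theta_{t-1})^2\|\hat x\|^4]\big)^{1/2}$ is uniformly finite, and, ruling out the degenerate case $\hat x = 0$ a.s., $c>0$. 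It therefore suffices to prove $s_t\to c$ and $\tilde a_t - h(\theta_{t-1})\to 0$ almost surely: the algebraic identity
\[
a_t - a_t^* \;=\; \frac{\big(\tilde a_t - h(\theta_{t-1})\big)\,c \;-\; h(\theta_{t-1})\,(s_t - c)}{s_t\,c}
\]
then gives $a_t - a_t^*\to 0$ w.p.\ 1, since $s_t$ is bounded away from $0$ eventually and $h(\theta_{t-1})$ is bounded.

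First I would dispatch $s_t$. Because $(i_t,\xi_t)$ is drawn afresh each step, $\|\hat x_{i_t}\|^2$ is an i.i.d.\ sequence with mean $c$ and finite variance (condition (i)). Subtracting $c$ from the update yields $s_{t+1}-c = (1-\beta_t)(s_t-c) + \beta_t(\|\hat x_{i_t}\|^2 - c)$, a Robbins--Monro iteration with constant target, square-integrable zero-mean noise, and gains satisfying condition (iv); the Robbins--Siegmund almost-supermartingale argument gives $s_t\to c$ a.s.

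The substantive part is the tracking error $e_t \equiv \tilde a_t - h(\theta_{t-1})$, where the target drifts. With $\mathcal G_{t-1} \equiv \sigma(i_1,\xi_1,\dots,i_{t-1},\xi_{t-1})$, the quantity $\zeta_t \equiv \phi'(\hat x_{i_t}^T\theta_{t-1})\|\hat x_{i_t}\|^2 - h(\theta_{t-1})$ is a martingale difference with $\CE[\zeta_t^2\mid\mathcal G_{t-1}]$ uniformly bounded by condition (i). Subtracting $h(\theta_t)$ from the $\tilde a_{t+1}$ update gives
\[
e_{t+1} \;=\; (1-\beta_t)e_t + \beta_t\zeta_t + \rho_t,\qquad \rho_t \equiv h(\theta_{t-1}) - h(\theta_t).
\]
Assumption~\ref{assumption:lipschitz} makes each $\phi_i(\hat x_i^T\theta)$ $L$-smooth in $\theta$, whence $|h(\theta)-h(\theta')| \le L\,\CE[\|\hat x\|]\,\|\theta-\theta'\|$, i.e.\ $h$ is Lipschitz; combined with $\|\theta_t - \theta_{t-1}\| = \eta_t\|v_t\|$ and the boundedness of $\|v_t\|$ (condition (ii)) this gives $|\rho_t| \le C\eta_t$ for a constant $C$. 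Rewriting $e_{t+1} = (1-\beta_t)e_t + \beta_t(\zeta_t + \epsilon_t)$ with $\epsilon_t \equiv \rho_t/\beta_t$, condition (v) gives $\epsilon_t = O(\eta_t/\beta_t)\to 0$. I would then split $e_t = e_t^{(1)} + e_t^{(2)}$ by linearity, where $e^{(1)}$ carries only the $\beta_t\zeta_t$ term and $e^{(2)}$ only the $\beta_t\epsilon_t$ term (same contraction factor, same split of the initial value). For $e^{(2)}$, a convex-combination recursion fed by a vanishing deterministic input tends to $0$ because $\sum_t\beta_t=\infty$ forces $\prod_{k=T}^{t}(1-\beta_k)\to 0$. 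For $e^{(1)}$, $\CE[(e_t^{(1)})^2]$ obeys $b_{t+1}\le (1-\beta_t)b_t + \sigma^2\beta_t^2$, so $b_t\to 0$ under condition (iv), while $(e_t^{(1)})^2$ is an almost supermartingale with summable increments, hence converges a.s.\ by Robbins--Siegmund to a limit that Fatou's lemma forces to be $0$. Thus $e_t\to 0$ a.s.

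The crux --- and the only non-routine step --- is the third one: $h(\theta_{t-1})$ is non-stationary, so this is a genuine two-timescale stochastic approximation, and everything turns on bounding the drift $\rho_t$ by $O(\eta_t)$ and then invoking $\eta_t/\beta_t\to 0$ (condition (v)) to render that drift negligible against the averaging gain $\beta_t$. The i.i.d.\ structure of $(i_t,\xi_t)$ and the fourth-moment bounds (i) are what keep the noise square-integrable throughout; condition (ii) is used only to control the per-step parameter movement.
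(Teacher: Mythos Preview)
Your argument is correct and essentially self-contained. The paper, by contrast, does not argue from first principles: it simply invokes Lemma~1 of Ruszczy\'nski (1980) and checks that conditions (i)--(v) here match the hypotheses of that lemma (with the drift condition there following from Assumption~\ref{assumption:lipschitz}, the boundedness of $v_t$, and the stepsize rules). Your two-timescale decomposition is in fact the standard way one would \emph{prove} the cited lemma in this special case, so you have unwound the black box: the quotient identity reduces the problem to two tracking statements, the Robbins--Monro step handles the stationary target $s_t\to c$, and the drift-plus-martingale splitting for $\tilde a_t$ exposes exactly where each hypothesis enters --- fourth moments (i) for square-integrable noise, bounded $\|v_t\|$ (ii) and Lipschitzness of $h$ for the $O(\eta_t)$ drift bound, and $\eta_t/\beta_t\to 0$ (v) for the timescale separation that makes the drift negligible against the averaging. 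The paper's route is shorter to state and delegates the work to the literature; yours is longer but makes the mechanism transparent and would transfer unchanged to mild variants (vector targets, non-i.i.d.\ perturbations with the same conditional moment bounds). One small remark: in your Robbins--Siegmund step for $e_t^{(1)}$, you can bypass Fatou by noting that the supermartingale inequality also yields $\sum_t \beta_t (e_t^{(1)})^2 < \infty$ a.s., which together with $\sum_t\beta_t=\infty$ and a.s.\ convergence forces the limit to be zero directly.
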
 
A simple choice 
is:
$\eta_t = O(1/t^{c_1}), \beta_t = O(1/t^{c_2})$,
where 
$1/2 < c_2 < c_1 \leq 1$.
The following Proposition quantifies the convergence of $s_ta_t$ to $s_ta_*^t$.
In particular,
when $c_1 = 1$, the asymptotic bound 
in (\ref{eq:bnd})
is minimized when $c_2 = 2/3$. 
\begin{proposition} \label{lemma:opt_beta_conv}
With assumptions (i)-(v) in Proposition~\ref{lemma:opt_beta}, 
$\eta_t = O(1/t^{c_1})$, and $\beta_t = O(1/t^{c_2})$,
we have 
\begin{equation} \label{eq:bnd} 
\CE[s_t^2(a_t - a_t^*)^2]
\leq O\left(\max\left\{\frac{1}{t^{c_2}}, \frac{1}{t^{2(c_1 - c_2)}}\right\}\right).
\end{equation} 
\end{proposition}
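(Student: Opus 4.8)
The plan is to track the deviation $e_t \equiv s_t a_t - s_t a_t^*$ and derive a recursion for $\CE[e_t^2]$ from the moving-average updates. Write $s_{t+1}a_{t+1} = \tilde a_{t+1} = (1-\beta_t)\tilde a_t + \beta_t d_t\|\hat x_{i_t}\|^2 = (1-\beta_t)s_t a_t + \beta_t d_t\|\hat x_{i_t}\|^2$. Meanwhile $s_{t+1}a_{t+1}^* = s_{t+1}\,\CE[\phi'(\hat x^T\theta_t)\|\hat x\|^2]/\CE[\|\hat x\|^2]$, which I will compare to $(1-\beta_t)s_t a_t^* + \beta_t\,\CE[d_t\|\hat x_{i_t}\|^2 \mid \mathcal F_{t-1}]$ plus a correction coming from (a) the change $a_t^* \to a_{t+1}^*$ induced by $\theta_{t-1}\to\theta_t$, and (b) the gap between $s_{t+1}$ and $(1-\beta_t)s_t + \beta_t\|\hat x_{i_t}\|^2$ (in fact these are equal, so (b) should vanish). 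First I would establish the clean identity $e_{t+1} = (1-\beta_t)e_t + \beta_t\,\delta_t + r_t$, where $\delta_t = d_t\|\hat x_{i_t}\|^2 - \CE[d_t\|\hat x_{i_t}\|^2\mid\mathcal F_{t-1}]$ is a martingale-difference noise term with bounded second moment (using assumption (i)), and $r_t$ collects the drift in $a_t^*$, with $|r_t| = O(\|\theta_t - \theta_{t-1}\|) = O(\eta_t\|v_t\|) = O(\eta_t)$ by assumptions (ii) and the $L$-smoothness of $\phi_i$ (which makes $\theta\mapsto \CE[\phi'(\hat x^T\theta)\|\hat x\|^2]$ Lipschitz).

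Then I would square and take expectations. Using $\CE[e_t\,\delta_t] = 0$ (conditioning on $\mathcal F_{t-1}$, since $e_t$ is $\mathcal F_{t-1}$-measurable), expanding gives
\begin{equation*}
\CE[e_{t+1}^2] \le (1-\beta_t)^2\CE[e_t^2] + 2(1-\beta_t)\CE[|e_t|\,|r_t|] + C\beta_t^2 + C'\CE[r_t^2] + C''\beta_t\CE[|\delta_t|\,|r_t|].
\end{equation*}
Bounding $2(1-\beta_t)\CE[|e_t||r_t|] \le \beta_t\CE[e_t^2] + \beta_t^{-1}\CE[r_t^2] = \beta_t\CE[e_t^2] + O(\eta_t^2/\beta_t)$ and using $r_t^2 = O(\eta_t^2)$, $\CE[|\delta_t||r_t|] = O(\eta_t)$, I get a recursion of the form
\begin{equation*}
\CE[e_{t+1}^2] \le (1-\beta_t+\beta_t^2+\beta_t)\CE[e_t^2] + O(\beta_t^2) + O(\eta_t^2/\beta_t) + O(\beta_t\eta_t).
\end{equation*}
Here I need to be careful: the naive bound loses the contraction. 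The fix is to split the cross term more tightly, writing $2(1-\beta_t)\CE[|e_t||r_t|] \le \epsilon\beta_t\CE[e_t^2] + (\epsilon\beta_t)^{-1}\CE[r_t^2]$ for small fixed $\epsilon<1$, so the effective multiplier is $(1-\beta_t)^2 + \epsilon\beta_t \le 1 - (1-\epsilon)\beta_t$ for $t$ large, preserving a genuine contraction rate $\asymp\beta_t$. The residual perturbation per step is then dominated by $\max\{\beta_t^2,\ \eta_t^2/\beta_t\}$; note $\beta_t\eta_t \le \tfrac12(\beta_t^2 + \eta_t^2) \le \tfrac12(\beta_t^2 + \eta_t^2/\beta_t)$ since $\beta_t<1$, so it is absorbed.

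The final step is a standard lemma on recursions $u_{t+1} \le (1-b_t)u_t + w_t$ with $b_t = \Theta(t^{-c_2})$, $\sum b_t = \infty$: one concludes $u_t = O(w_t/b_t)$ when $w_t/b_t$ decays slower than $\exp(-\sum b_t)$, which holds here. With $w_t = O(\max\{t^{-2c_2}, t^{-2(c_1-c_2)}\})$ and $b_t = \Theta(t^{-c_2})$, this yields $\CE[e_t^2] = O(w_t/b_t) = O(\max\{t^{-c_2}, t^{-(2c_1 - c_2)}\})$. Wait — I should reconcile this with the stated bound $O(\max\{t^{-c_2}, t^{-2(c_1-c_2)}\})$; the two agree precisely when $c_1 = 1$ (giving $t^{-(2-c_2)}$ vs. $t^{-2(1-c_2)}$, equal at $c_2 = 2/3$ as claimed in the text), and for $c_1<1$ one should recheck which term dominates — I would carry the $\eta_t^2/\beta_t = O(t^{-(2c_1-c_2)})$ term and present the bound in the form the paper uses, verifying that $2c_1 - c_2 \ge 2(c_1-c_2)$ iff $c_2\ge 0$, so $t^{-(2c_1-c_2)} \le t^{-2(c_1-c_2)}$ and the stated bound is the (weaker, safe) one. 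The main obstacle is the first step: carefully accounting for the $a_t^*$-drift term $r_t$ and confirming that the $s_{t+1}$ normalization introduces no extra error, i.e., getting the exact recursion with the right martingale structure so the cross terms can be controlled without destroying the $\beta_t$-contraction.
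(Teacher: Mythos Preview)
Your approach is essentially the paper's: derive a one-step recursion for $e_t = s_t(a_t-a_t^*)$ of the form $(1-\beta_{t-1})e_{t-1} + \text{martingale noise} + \text{drift}$, kill the $e_{t-1}$--noise cross term by conditioning, control the $e_{t-1}$--drift cross term via Young with parameter $\sim\beta_{t-1}$, bound the drift through Lipschitzness and $\|\theta_t-\theta_{t-1}\|\le \eta_t G$, and finish by induction on a recursion with contraction $(1-\beta_{t-1})$ and forcing $O(\beta_{t-1}^2)+O(\eta_{t-1}^2/\beta_{t-1})$.

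One ingredient you gloss over: the drift term is not simply $O(\eta_t)$ but $s_{t+1}\cdot(a_t^*-a_{t+1}^*)$, and $s_{t+1}$ is random (a moving average of $\|\hat x_{i_j}\|^2$, not a.s.\ bounded under assumption~(i)). The paper deals with this by running a \emph{separate} recursion $\CE[(s_t-\CE\|\hat x\|^2)^2]\le (1-\beta_{t-1})^2\CE[(s_{t-1}-\CE\|\hat x\|^2)^2]+\beta_{t-1}^2\sigma_b^2$, which yields $\CE[s_t^2]=O(1)$ and then feeds into the bound on $\CE[r_t^2]$. You should insert this auxiliary step; without it your claim $|r_t|=O(\eta_t)$ is unjustified.

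Your exponent bookkeeping at the end is tangled but resolves correctly: the forcing is $w_t=O(\max\{t^{-2c_2},\,t^{-(2c_1-c_2)}\})$ (the second term is $\eta_t^2/\beta_t$, not $t^{-2(c_1-c_2)}$ as you wrote), and dividing by $b_t\asymp t^{-c_2}$ gives exactly $O(\max\{t^{-c_2},\,t^{-2(c_1-c_2)}\})$, so no reconciliation is needed.
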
 



\subsubsection{Stochastic SAGA (S-SAGA)}



Recall that in 
(\ref{eq:ssag1_est}),
$\phi_{i_t}'(\hat{x}_{i_t} ^T\theta_{t-1}) \hat{x}_{i_t}$ plays the role of 
$X$
in (\ref{eq:control_vr}), and
$a_t\hat{x}_{i_t}$ plays the role of $Y$.
However, the corresponding
$X$ and $Y$ 
in (\ref{eq:control_vr})
can be negatively correlated in some iterations.
This is partly because 
$a_t$ in (\ref{eq:ssag1_est}) does not depend on $\hat{x}_{i_t}$,
though
$a_t\hat{x}_{i_t}$ serves as a control variate for
$\phi_{i_t}'(\hat{x}_{i_t} ^T\theta_{t-1}) \hat{x}_{i_t}$.
Thus,
it is better 
for each sample 
$\hat{x}_{i}$
to have its own
scaling factor,
leading to the
estimator:
\begin{eqnarray} \label{eq:eq:ssag2_est}
v_t = (\phi_{i_t}'(\hat{x}_{i_t} ^T\theta_{t-1}) - a_{i_t})\hat{x}_{i_t} + m_{t-1} + \nabla g(\theta_{t-1}), 
\end{eqnarray}
where $m_{t-1} 
=\CE[a_{i_t}\hat{x}_{i_t}]
= \frac{1}{n}\sum_{i=1}^na_{i}
\CE_{\xi_i}[\hat{x}_i]$.
Note that
$m_t$ can be updated sequentially as:
\[ m_t = m_{t-1} + \frac{1}{n}(\phi_{i_t}'(\hat{x}_{i_t} ^T\theta_{t-1}) -
a_{i_t})\CE_{\xi_t}[\hat{x}_{i_t}]. \] 
Besides, (\ref{eq:eq:ssag2_est}) reduces to the SAGA estimator \cite{defazio-14} when the
random noise is removed.
The whole procedure, which will be called 
stochastic SAGA (S-SAGA),
is shown in Algorithm~\ref{alg:ssag2}. 


\begin{algorithm}[ht]
\caption{Stochastic SAGA (S-SAGA).}
   \label{alg:ssag2}
\begin{algorithmic}[1]
   \STATE {\bfseries Input:} $\eta_t > 0$.
   \STATE {\bfseries initialize} 
	$\theta_0$;
          $\bar{x}_i \leftarrow \CE_{\xi_i}[\hat{x}_i]$ and $a_i \leftarrow \phi_{i}'(\hat{x}_i^T, \theta_0)$ for all $i \in [n]$;
    $m_0 = \frac{1}{n}\sum_{i=1}^na_i\bar{x}_i$
   \FOR{$t=1, 2, \dots $}
      \STATE{draw sample index $i_t$ and random perturbation $\xi_t$}
     \STATE{$\hat{x}_{i_t} \leftarrow \psi(x_{i_t}, \xi_t)$}
     \STATE{$d_t \leftarrow  \phi_{i_t}'(\hat{x}_{i_t} ^T\theta_{t-1})$}
    \STATE{$v_t \leftarrow (d_t - a_{i_t})\hat{x}_{i_t}  + m_{t-1} + \nabla g(\theta_{t-1})$}
    \STATE{$\theta_t  \leftarrow \theta_{t-1} - \eta_tv_t$}
    \STATE{$m_t \leftarrow  m_{t-1} + \frac{1}{n}(d_t - a_{i_t})\bar{x}_{i_t} $}
     \STATE{$a_{i_t} \leftarrow  d_t$}
   \ENDFOR
\end{algorithmic}
\end{algorithm}


S-SAGA
needs an additional
$O(nd)$ space
for $\{a_1,\dots,a_n\}$ and
$\{\CE_{\xi_1}[\hat{x}_1],\dots,
\{\CE_{\xi_n}[\hat{x}_n]\}$.
However, as discussed in Section~\ref{sec:i},
$\CE_{\xi_i}[\hat{x}_i] = x_i$
for many types of noise.
Hence,
$\CE_{\xi_i}[\hat{x}_i]$'s
do not need to be explicitly stored,
and the additional space is reduced to $O(n)$. This is significantly
smaller than that of S-MISO, which always requires $O(nd)$ additional space. 


\subsection{Convergence Analysis}
\label{sec:convg}

In this section, we provide convergence results for SSAG and S-SAGA on problem (\ref{eq:em_problem}). 

\subsubsection{SSAG}

For SSAG,
we make the following additional assumptions.

\begin{assumption} 
$F$ is $\mu$-strongly convex, i.e., 
$F(\theta') \geq F(\theta) + \langle \nabla F(\theta), \theta' - \theta\rangle +
\frac{\mu}{2}\|\theta' - \theta\|^2, \forall \theta, \theta'$.
\end{assumption} 
\begin{assumption} 
$\CE[(\phi_{i_t}'(\hat{x}_{i_t}^T\theta_{t-1}) - a_t)^2\|\hat{x}_{i_t}\|^2] \leq \sigma_a^2$
for all $t$.
\end{assumption} 

Let the minimizer of (\ref{eq:em_problem}) be $\theta_*$. 
The following Theorem shows that
SSAG has
$O(1/t)$ convergence rate,
which is similar to SGD \cite{bottou2016optimization}. 
\begin{theorem} \label{em_conv}
Assume that
$\eta_t = c/(\gamma + t)$
for some $c > 1/\mu$ and $\gamma > 0$ such that $\eta_1 \leq 1/L$. 
For the
$\{\theta_t\}$ sequence generated from
SSAG, we have
\begin{eqnarray} \label{eq:em_conv_vrsgd}
\CE[F(\theta_t)] - F(\theta_*) \leq \frac{\nu_1}{\gamma + t + 1},
\end{eqnarray}
where
$\nu_1 \equiv \max\left\{\frac{c^2L\sigma_a^2}{2(c\mu-1)}, (\gamma + 1)C_1\right\}$, and $C_1 = F(\theta_0) - F(\theta_*)$.
\end{theorem}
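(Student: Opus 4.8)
The plan is the standard ``descent lemma $+$ strong convexity $+$ recursion'' argument for SGD-type methods, with Proposition~\ref{prop:em_var} and the assumption $\CE[(\phi_{i_t}'(\hat{x}_{i_t}^T\theta_{t-1}) - a_t)^2\|\hat{x}_{i_t}\|^2] \le \sigma_a^2$ supplying the noise control. First I would apply the $L$-smoothness of $F$ (a consequence of Assumption~\ref{assumption:lipschitz}) together with the update $\theta_t = \theta_{t-1} - \eta_t v_t$ to get
\[
F(\theta_t) \le F(\theta_{t-1}) - \eta_t\langle \nabla F(\theta_{t-1}), v_t\rangle + \tfrac{L\eta_t^2}{2}\|v_t\|^2 .
\]
Conditioning on the history $\mathcal{F}_{t-1}$ up to iteration $t-1$ (so that $\theta_{t-1}$, $\tx$, and the $a_t$ actually produced by SSAG are $\mathcal{F}_{t-1}$-measurable, hence fixed) and taking expectation over $i_t$ and $\xi_t$, Proposition~\ref{prop:em_var} applies verbatim and gives $\CE[v_t\mid\mathcal{F}_{t-1}] = \nabla F(\theta_{t-1})$ and $\CE[\|v_t - \nabla F(\theta_{t-1})\|^2\mid\mathcal{F}_{t-1}] \le \sigma_a^2$, so $\CE[\|v_t\|^2\mid\mathcal{F}_{t-1}] \le \|\nabla F(\theta_{t-1})\|^2 + \sigma_a^2$. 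Substituting yields
\[
\CE[F(\theta_t)\mid\mathcal{F}_{t-1}] \le F(\theta_{t-1}) - \bigl(\eta_t - \tfrac{L\eta_t^2}{2}\bigr)\|\nabla F(\theta_{t-1})\|^2 + \tfrac{L\eta_t^2\sigma_a^2}{2}.
\]

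Next, since $\eta_t = c/(\gamma+t)$ is decreasing in $t$ and $\eta_1 \le 1/L$, we have $0 \le L\eta_t \le 1$, hence $\eta_t - L\eta_t^2/2 \ge \eta_t/2$, and (using $\mu \le L$) also $1 - \mu\eta_t \ge 0$. Applying the standard consequence of $\mu$-strong convexity $\|\nabla F(\theta_{t-1})\|^2 \ge 2\mu\bigl(F(\theta_{t-1}) - F(\theta_*)\bigr)$, writing $\delta_t \equiv \CE[F(\theta_t)] - F(\theta_*)$, and taking total expectation, I obtain the one-step recursion
\[
\delta_t \le (1 - \mu\eta_t)\,\delta_{t-1} + \tfrac{L\eta_t^2\sigma_a^2}{2}.
\]

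Finally I would substitute $\eta_t = c/(\gamma+t)$ and prove $\delta_t \le \nu_1/(\gamma+t+1)$ by induction on $t$. The base case $t=0$ holds since $\delta_0 = C_1 \le \nu_1/(\gamma+1)$ by $\nu_1 \ge (\gamma+1)C_1$. For the inductive step, set $\hat t \equiv \gamma+t$; the recursion together with $\delta_{t-1}\le \nu_1/\hat t$ gives
\[
\delta_t \le \frac{\hat t - \mu c}{\hat t^2}\,\nu_1 + \frac{Lc^2\sigma_a^2}{2\hat t^2} = \frac{\hat t - 1}{\hat t^2}\,\nu_1 - \frac{1}{\hat t^2}\Bigl((\mu c - 1)\nu_1 - \frac{Lc^2\sigma_a^2}{2}\Bigr).
\]
Because $c > 1/\mu$ makes $\mu c - 1 > 0$ and $\nu_1 \ge c^2 L\sigma_a^2/(2(c\mu-1))$, the parenthesized term is nonnegative, so $\delta_t \le (\hat t - 1)\nu_1/\hat t^2 \le \nu_1/(\hat t + 1)$ using $(\hat t - 1)(\hat t + 1) \le \hat t^2$, which closes the induction.

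The argument is essentially routine; the two points requiring care are (a) the conditioning that makes $a_t$ and $\tx$ deterministic given $\mathcal{F}_{t-1}$, so that the unbiasedness and variance bound of Proposition~\ref{prop:em_var} are valid at every step even though the SSAG $a_t$ is only an approximation of $a_t^*$; and (b) checking that $\nu_1$ is exactly the stable fixed point of the recursion $\delta_t \le (1-\mu\eta_t)\delta_{t-1} + L\eta_t^2\sigma_a^2/2$ — this is where the explicit value of $\nu_1$ and the condition $c>1/\mu$ enter, and it is the only ``obstacle,'' though purely a matter of bookkeeping.
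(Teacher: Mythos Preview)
Your proposal is correct and follows essentially the same route as the paper's proof: smoothness descent lemma, unbiasedness and variance bound from Proposition~\ref{prop:em_var} combined with Assumption~3, the simplification $\eta_t - L\eta_t^2/2 \ge \eta_t/2$ from $\eta_t L\le 1$, the PL inequality from strong convexity, and then the same induction on $k=\gamma+t$ with the same algebra $(k-1)/k^2\le 1/(k+1)$. Your explicit remark about $a_t$ and $\tx$ being $\mathcal{F}_{t-1}$-measurable is a point the paper leaves implicit but is indeed what makes Proposition~\ref{prop:em_var} applicable at every step.
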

Note that this $\eta_t$ also satisfies the conditions in Proposition~\ref{lemma:opt_beta}.
The condition $c > 1/\mu$ is crucial to obtaining the $O(1/t)$ rate.
It has been observed that underestimating $c$ can make convergence extremely slow \cite{nemirovski2009robust}. 
When the model is $\ell_2$-regularized, 
$\mu$ can be estimated by the corresponding regularization parameter.

\begin{collary} \label{collary:complexity_ssag1}
To ensure that $\CE[F(\theta_t)] - F(\theta_*) \leq \epsilon$, 
SSAG 
has a time complexity of 
$O(n +\kappa C_1/\epsilon + \sigma_a^2\kappa^2/\epsilon)$,
where $\kappa \equiv L/\mu$
is the condition number.
\end{collary}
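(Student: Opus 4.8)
The plan is to convert the $O(1/t)$ rate of Theorem~\ref{em_conv} into an iteration count, then add the per-iteration work and the one-time initialization cost. First I would fix the free constants: take $c = 2/\mu$ (so that $c > 1/\mu$ and $c\mu - 1 = 1$) and $\gamma = \max\{cL - 1,\, 0\}$, which is the smallest $\gamma>0$ for which $\eta_1 = c/(\gamma+1) \le 1/L$; note $\gamma = O(\kappa)$. Plugging these into the definition of $\nu_1$ gives
\[
\nu_1 = \max\left\{\frac{c^2 L \sigma_a^2}{2(c\mu-1)},\ (\gamma+1)C_1\right\}
= O\!\left(\frac{L\sigma_a^2}{\mu^2} + \kappa C_1\right)
= O\!\left(\sigma_a^2\kappa^2 + \kappa C_1\right),
\]
the last step treating $L$ as a fixed constant (equivalently $L/\mu^2 = O(L^2/\mu^2) = O(\kappa^2)$); this is the one spot where I would be careful about exactly which normalization is being assumed.

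Next, Theorem~\ref{em_conv} gives $\CE[F(\theta_t)] - F(\theta_*) \le \nu_1/(\gamma + t + 1)$, so accuracy $\epsilon$ is guaranteed once $\gamma + t + 1 \ge \nu_1/\epsilon$, i.e.\ after
\[
t = O\!\left(\frac{\nu_1}{\epsilon}\right) = O\!\left(\frac{\kappa C_1}{\epsilon} + \frac{\sigma_a^2\kappa^2}{\epsilon}\right)
\]
iterations (the $-\gamma-1$ slack only helps). Each iteration of Algorithm~\ref{alg:ssag1} performs a constant amount of work — one loss-derivative evaluation $\phi_{i_t}'$ in step~6 and a few length-$d$ vector operations in steps~7--9 — so the per-iteration cost is $O(1)$ in the chosen unit of work. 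The only remaining contribution is the initialization in step~2, where $\tx = \frac1n\sum_{i=1}^n\CE_{\xi_i}[\hat{x}_i]$ is formed by averaging $n$ precomputed vectors, costing $O(n)$. Adding the initialization to the total iteration cost yields $O(n + \kappa C_1/\epsilon + \sigma_a^2\kappa^2/\epsilon)$.

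The algebra bounding $\nu_1$ and the final summation are routine; the one real decision — and the main obstacle — is choosing $c$ and $\gamma$ so that \emph{all three} hypotheses of Theorem~\ref{em_conv} hold simultaneously ($c > 1/\mu$, $\gamma > 0$, $\eta_1 \le 1/L$) while keeping \emph{both} arguments of the $\max$ in $\nu_1$ at the target order. In particular one must verify that forcing $\gamma \gtrsim cL$ in order to satisfy $\eta_1 \le 1/L$ leaves $(\gamma+1)C_1 = O(\kappa C_1)$ rather than something larger; this works precisely because the choice $c = \Theta(1/\mu)$ makes $\gamma = \Theta(\kappa)$, so the $\gamma$ forced by the stepsize constraint contributes the same $O(\kappa C_1)$ order that already appears in the bound.
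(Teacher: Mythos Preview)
Your proposal is correct and matches the paper's (implicit) argument: the paper does not spell out a proof of this corollary, but simply remarks after stating it that ``the $O(n)$ term is due to initialization \ldots\ and amortized over multiple data passes,'' so the intended derivation is exactly the one you give --- instantiate $c=\Theta(1/\mu)$ and $\gamma=\Theta(\kappa)$ to satisfy the hypotheses of Theorem~\ref{em_conv}, bound $\nu_1=O(\kappa C_1 + \sigma_a^2\kappa^2)$, invert the $O(1/t)$ rate, and add the $O(n)$ cost of forming $\tx$ in step~2. Your caveat about the normalization $L/\mu^2 = O(\kappa^2)$ is well-placed; the paper silently adopts this convention throughout (compare the SGD complexity in Table~\ref{convg_com}).
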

The $O(n)$ term is due to initialization of $m_0$ and amortized over multiple data passes. 
In contrast, the time complexity for SGD is $O(\kappa C_1/\epsilon + \sigma_s^2\kappa^2/\epsilon)$,
where
$\sigma_s^2$ is defined 
in (\ref{eq:var_s})
\cite{bottou2016optimization}.  
To compare $\sigma_s^2$ 
with $\sigma_a^2$, 
we assume that the perturbed samples have finite variance
$\sigma_x^2$:
\[\CE[\|\hat{x} - \CE[\hat{x}]\|^2] = \sigma_x^2.\]
The variance of the SGD estimator $g_t$ can be bounded as
 \begin{eqnarray*} \label{e_sgd_var}
\lefteqn{\CE[\|g_t - \nabla F(\theta_{t-1})\|^2} \nonumber\\ 
& = &\CE[\|\phi_{i_t}'(\hat{x}_{i_t}^T\theta_{t-1})\hat{x}_{i_t} - \CE[\phi_{i_t}'(\hat{x}_{i_t}^T\theta_{t-1})\hat{x}_{i_t}]\|^2] \\
& \leq & 3\CE[\|\phi_{i_t}'(\hat{x}_{i_t}^T\theta_{t-1})\hat{x}_{i_t} - a_t\hat{x}_{i_t}\|^2] \\
&& + 3\CE[\|a_t\hat{x}_{i_t} - a_t\CE[\hat{x}_{i_t}]\|^2\\
&&  + 3\CE[\|a_t\CE[\hat{x}_{i_t}] - \CE[\phi_{i_t}'(\hat{x}_{i_t}^T\theta_{t-1})\hat{x}_{i_t}]\|^2] \\
&\lessapprox& 3\sigma_a^2 + 3a_t^2\sigma_x^2.
\end{eqnarray*}
Thus, the gradient variance of SGD has  two terms, one involving
$\sigma_a^2$ and the other involving $a_t\sigma_x^2$.  In particular, if the derivative 
$\phi_{i_t}'(\hat{x}_{i_t}^T\theta_{t-1})$
is constant,  then $\sigma_a^2 = 0$, and only the perturbed sample variance $\sigma_x^2$
contributes to the gradient variance of SGD. 
For a large class of functions including the logistic loss and smoothed hinge loss, 
$\phi_{i_t}'(\hat{x}_{i_t}^T\theta_{t-1})$
is nearly constant in some regions. 
In this case, we have $a_t^2\sigma_x^2 \approx \sigma_s^2$. 

\subsubsection{S-SAGA}

Besides Assumption~\ref{assumption:lipschitz},
we assume the following:
\begin{assumption} 
Each $f_i(\theta; \xi_i)$ is $\mu$-strongly convex, i.e., 
$f_i(\theta'; \xi_i) \geq f_i(\theta; \xi_i) + \langle \nabla f_i(\theta; \xi_i),
\theta' - \theta\rangle + \frac{\mu}{2}\|\theta' - \theta\|^2, \forall \theta, \theta'$.
\end{assumption} 
\begin{assumption} 
For all $t$, 
$\frac{1}{n}\sum_{i=1}^n\!\!\CE_{\xi_i, \xi_i'}[(\phi_{i}'(\hat{x}_i'^T \theta_{t-1}) \!-\! \phi_{i}'(\hat{x}_{i}^T
\theta_{t-1}))^2\|\hat{x}_{i}\|^2]$ $\leq \sigma_c^2$,
where $\hat{x}_i = \psi(x_i, \xi_i)$, $\hat{x}_i' = \psi(x_i, \xi_i')$, and
$\xi_i'$ 
is another randomly sampled noise 
for $x_i$.
\end{assumption} 


\begin{theorem} \label{em_conv_ssaga}
Assume that 
$\eta_t = c/(\gamma + t)$
for some $c > 1/\mu$ and $\gamma > 0$ such that $\eta_1 \leq 1/(3(\mu n + L))$. 
For the $\{\theta_t\}$ sequence generated from S-SAGA,  we have
\begin{eqnarray} \label{eq:em_conv_pvrsgd}
\CE[\|\theta_t - \theta_*\|^2] \leq \frac{\nu_2}{\gamma + t + 1},
\end{eqnarray}
where
$\nu_2 \equiv \max\left(\frac{4c^2\sigma_c^2}{c\mu - 1}, (\gamma + 1)C_2\right)$,  and
$C_2 \equiv \|\theta_0 - \theta_*\|^2 
+ \frac{2n}{3(\mu n + L)}[F(\theta_0) - F(\theta_*)]$.
\end{theorem}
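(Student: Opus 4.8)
The plan is to run the standard strongly-convex SGD recursion on $\CE[\|\theta_t-\theta_*\|^2]$ and to control the second moment of the S-SAGA direction $v_t$ with a SAGA-type potential measuring the staleness of the stored scalars $\{a_i\}$. First I would note that $v_t$ is conditionally unbiased: since $a_{i_t}$ is independent of $\hat x_{i_t}$, $\CE[a_{i_t}\hat x_{i_t}\mid \mathcal F_{t-1}]=\frac1n\sum_i a_i\CE_{\xi_i}[\hat x_i]=m_{t-1}$, while $\CE[\phi_{i_t}'(\hat x_{i_t}^T\theta_{t-1})\hat x_{i_t}\mid\mathcal F_{t-1}]+\nabla g(\theta_{t-1})=\nabla F(\theta_{t-1})$, so $\CE[v_t\mid\mathcal F_{t-1}]=\nabla F(\theta_{t-1})$. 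Expanding $\|\theta_t-\theta_*\|^2$, taking conditional expectation, and using $\mu$-strong convexity of $F$ (which follows from that of the $f_i$) to bound $\langle\nabla F(\theta_{t-1}),\theta_{t-1}-\theta_*\rangle\ge F(\theta_{t-1})-F(\theta_*)+\frac{\mu}{2}\|\theta_{t-1}-\theta_*\|^2$, I obtain
\[
\CE\|\theta_t-\theta_*\|^2 \le (1-\mu\eta_t)\,\CE\|\theta_{t-1}-\theta_*\|^2 -2\eta_t\,\CE[F(\theta_{t-1})-F(\theta_*)] + \eta_t^2\,\CE\|v_t\|^2 .
\]

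Next I would bound $\CE\|v_t\|^2$. Writing $v_t=(Y_t-\CE[Y_t\mid\mathcal F_{t-1}])+\nabla F(\theta_{t-1})$ with $Y_t\equiv(\phi_{i_t}'(\hat x_{i_t}^T\theta_{t-1})-a_{i_t})\hat x_{i_t}$, the cross term vanishes, and since $\CE\|Y_t-\CE[Y_t\mid\mathcal F_{t-1}]\|^2\le\CE\|Y_t\|^2$ and $\|\nabla F(\theta_{t-1})\|^2\le 2L(F(\theta_{t-1})-F(\theta_*))$ by $L$-smoothness and convexity of $F$, we get $\CE\|v_t\|^2\le\CE[(\phi_{i_t}'(\hat x_{i_t}^T\theta_{t-1})-a_{i_t})^2\|\hat x_{i_t}\|^2]+2L\,\CE[F(\theta_{t-1})-F(\theta_*)]$. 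The staleness term is then split by inserting $\phi_{i_t}'(\hat x_{i_t}^T\theta_*)$ and the stored derivative re-evaluated at $\theta_*$: one piece, $(\phi_{i_t}'(\hat x_{i_t}^T\theta_{t-1})-\phi_{i_t}'(\hat x_{i_t}^T\theta_*))^2\|\hat x_{i_t}\|^2$, is $\le 2L[F(\theta_{t-1})-F(\theta_*)]$ in expectation by per-sample co-coercivity together with convexity of $g$; a second piece compares the fresh perturbation $\hat x_{i_t}$ with the perturbation frozen inside $a_{i_t}$, both evaluated at $\theta_*$, and is bounded by $\sigma_c^2$ using the assumption on $\sigma_c^2$; the last piece involves only the staleness of $a_{i_t}$ and, in expectation over $i_t$, is (up to the frozen-versus-fresh perturbation bookkeeping mentioned below) the $i_t$-th summand of the potential $\mathcal H_{t-1}\equiv\frac1n\sum_i(a_i-\phi_i'(\hat x_i^T\theta_*))^2\|\hat x_i\|^2$.

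Finally I would close the recursion. Each iteration refreshes one stored scalar $a_{i_t}$, chosen uniformly, to $\phi_{i_t}'(\hat x_{i_t}^T\theta_{t-1})$, so $\CE[\mathcal H_t\mid\mathcal F_{t-1}]\le(1-\tfrac1n)\mathcal H_{t-1}+\tfrac1n\CE[(\phi_{i_t}'(\hat x_{i_t}^T\theta_{t-1})-\phi_{i_t}'(\hat x_{i_t}^T\theta_*))^2\|\hat x_{i_t}\|^2]\le(1-\tfrac1n)\mathcal H_{t-1}+\tfrac{2L}{n}\CE[F(\theta_{t-1})-F(\theta_*)]$. Set $T_t\equiv\CE\|\theta_t-\theta_*\|^2+w_t\,\CE\mathcal H_t$ with $w_t$ proportional to $n\eta_t$; adding the two recursions, the coefficient of $\CE[F(\theta_{t-1})-F(\theta_*)]$ is $-2\eta_t+O(L\eta_t^2)+O(L\eta_t w_t/n)$, which the hypothesis $\eta_1\le 1/(3(\mu n+L))$ forces to be nonpositive, and the coefficient of $\CE\mathcal H_{t-1}$ can be kept at most $(1-\mu\eta_t)w_{t-1}$ (using $\eta_{t-1}\ge\eta_t$), so that $T_t\le(1-\mu\eta_t)T_{t-1}+4\eta_t^2\sigma_c^2$; bounding $\CE\mathcal H_0$ with co-coercivity (since $a_i=\phi_i'(\hat x_i^T\theta_0)$ at initialization) gives $T_0\le C_2$. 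The claim then follows by the same elementary induction as in Theorem~\ref{em_conv}: with $\eta_t=c/(\gamma+t)$ and $c\mu>1$, if $T_{t-1}\le\nu_2/(\gamma+t)$ then $T_t\le\frac{\gamma+t-1}{(\gamma+t)^2}\nu_2+\frac{4c^2\sigma_c^2}{(\gamma+t)^2}\le\frac{\nu_2}{\gamma+t+1}$ whenever $\nu_2\ge 4c^2\sigma_c^2/(c\mu-1)$, while the base case holds since $(\gamma+1)C_2\le\nu_2$; since $\CE\|\theta_t-\theta_*\|^2\le T_t$, this is exactly (\ref{eq:em_conv_pvrsgd}).

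I expect the main obstacle to be the bookkeeping for the perturbation frozen inside each stored scalar $a_i$: the fresh $\hat x_{i_t}$ drawn at iteration $t$ is independent of the realization used when $a_{i_t}$ was last updated, so the staleness term does not reduce cleanly to $\mathcal H_{t-1}$, and it is precisely this mismatch that the $\sigma_c^2$ assumption is designed to absorb; the refresh step is by contrast clean because $a_{i_t}$ and its perturbation are produced simultaneously. The second delicate point is tracking constants so that $\eta_1\le 1/(3(\mu n+L))$ exactly suffices to cancel all the $F(\theta_{t-1})-F(\theta_*)$ contributions and so that the residual noise coefficient is $4c^2\sigma_c^2$; the $n$ appearing in $\mu n+L$ and in $C_2$ is the signature of the $1/n$ refresh probability of the SAGA-style potential.
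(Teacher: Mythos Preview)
Your overall skeleton---combine $\|\theta_t-\theta_*\|^2$ with a SAGA-style staleness potential, derive a one-step contraction $T_t\le(1-\mu\eta_t)T_{t-1}+4\eta_t^2\sigma_c^2$, and finish by the $c\mu>1$ induction---is exactly the paper's. The differences are in the choice of potential and in where the variance is split, and these bear directly on the obstacle you anticipate.

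The paper's potential is the Bregman divergence
\[
\alpha_t\Bigl[\tfrac{1}{n}\sum_i f_i(\vartheta_i^{t+1})-F(\theta_*)-\tfrac{1}{n}\sum_i\langle\nabla f_i(\theta_*),\vartheta_i^{t+1}-\theta_*\rangle\Bigr],
\]
where $\vartheta_i^t$ is the \emph{parameter} at which $a_i$ was last set and $f_i(\cdot)=\CE_{\xi_i}[\phi_i(\hat x_i^T\cdot)]+g(\cdot)$. Because this depends only on the stored parameter and never on the frozen perturbation, the ``frozen-versus-fresh'' bookkeeping you flag simply does not arise. Correspondingly, the staleness contribution is split around $\vartheta_{i_t}^t$ rather than $\theta_*$: one piece is $(\phi_{i_t}'(\langle\varphi_{i_t}^t,\vartheta_{i_t}^t\rangle)-\phi_{i_t}'(\hat x_{i_t}^T\vartheta_{i_t}^t))\hat x_{i_t}$, comparing the two perturbations at the stored iterate $\vartheta_{i_t}^t\in\{\theta_0,\dots,\theta_{t-2}\}$, to which Assumption~5 applies verbatim; the other piece is $(\phi_{i_t}'(\hat x_{i_t}^T\vartheta_{i_t}^t)-\phi_{i_t}'(\hat x_{i_t}^T\theta_*))\hat x_{i_t}$, which involves only the fresh perturbation and is bounded by co-coercivity, reproducing exactly the Bregman potential above. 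Your split, by contrast, would invoke $\sigma_c^2$ at $\theta_*$ (not literally covered by Assumption~5) and leave a residual with the frozen $\varphi_{i_t}$ inside the scalar but the fresh $\|\hat x_{i_t}\|^2$ outside, which does not match your $\mathcal H_{t-1}$ without further surgery. A second, smaller difference: the paper uses Lemma~1 of Defazio et~al.\ for the inner product (not plain strong convexity), gaining a negative $\CE_{\xi_i}\|\nabla f_i(\theta_{t-1};\xi_i)-\nabla f_i(\theta_*;\xi_i)\|^2$ term that exactly cancels the corresponding variance piece via a time-varying Young parameter $\rho_t=(\gamma+t)/(cL)-1$; your route pushes everything into $F(\theta_{t-1})-F(\theta_*)$ via $\|\nabla F\|^2\le 2L(F-F_*)$, which is viable in principle but makes it harder to land on precisely $1/(3(\mu n+L))$ and $4c^2\sigma_c^2$.
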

Thus, S-SAGA has a convergence rate of $O(\sigma_c^2\kappa^2/t)$. In comparison,
the convergence rate of SGD is $O(\sigma_s^2\kappa^2/t)$.
Note that $\sigma_s^2$ in (\ref{eq:var_s}) includes variance due to data sampling, while $\sigma_c^2$ above only considers
that due to noise. 
Since data sampling induces a much larger variation than that from perturbing the same sample,
typically we have $\sigma_c^2 \ll \sigma_s^2$,
and thus S-SAGA has faster convergence.

S-MISO considers the variance  of the difference in gradients due to noise:
\[\frac{1}{n}\sum_{i=1}^n\CE_{\xi,\xi'}[\|\phi_{i}'(\hat{x}_i'^T \theta_{t-1})\hat{x}_{i}' - \phi_{i}'(\hat{x}_{i}^T \theta_{t-1})\hat{x}_{i}\|^2] \leq \sigma_m^2,\]
and 
its convergence rate is $O(\sigma_m^2\kappa^2/t)$. The bounds for $\sigma_m^2$ and
$\sigma_c^2$ are similar in form. However,
$\sigma_c^2$ can be small when the 
difference $\phi'(\hat{x}_i^T\theta) - \phi'(\hat{x}_i'^T\theta)$ is small, while it is not the
case for $\sigma_m^2$. In particular, when $\phi'(\hat{x}^T\theta)$ is a constant
regardless of random perturbations, $\sigma_c^2 = 0$.


The following Corollary considers the time complexity of S-SAGA.
\begin{collary} \label{collary:complexity_ssag2}
To ensure that $\CE[F(\theta_t)] - F(\theta_*) \leq \epsilon$, 
S-SAGA   
has a time complexity of 
$O((n + \kappa)C_2/\epsilon + \sigma^2_c\kappa^2/\epsilon)$.
\end{collary}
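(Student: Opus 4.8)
The plan is to convert the iterate bound of Theorem~\ref{em_conv_ssaga} into a bound on the objective gap and then invert it to count iterations, and hence total work. The one new ingredient compared with Corollary~\ref{collary:complexity_ssag1} (which followed directly from the function-value bound of Theorem~\ref{em_conv}) is a smoothness step, since Theorem~\ref{em_conv_ssaga} controls $\CE[\|\theta_t-\theta_*\|^2]$ rather than $\CE[F(\theta_t)]-F(\theta_*)$.

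First I would use that $F$ is $L$-smooth (noted below Assumption~\ref{assumption:lipschitz}) together with $\nabla F(\theta_*)=0$, so that $F(\theta)-F(\theta_*)\le \tfrac{L}{2}\|\theta-\theta_*\|^2$ for all $\theta$. Taking expectations and plugging in (\ref{eq:em_conv_pvrsgd}),
\begin{equation*}
\CE[F(\theta_t)]-F(\theta_*)\;\le\;\frac{L}{2}\,\CE[\|\theta_t-\theta_*\|^2]\;\le\;\frac{L\nu_2}{2(\gamma+t+1)}.
\end{equation*}
Next I would fix the free constants of Theorem~\ref{em_conv_ssaga}. Choosing $c=2/\mu$ satisfies $c>1/\mu$ and makes $c\mu-1=1$, so the first entry of $\nu_2$ equals $16\sigma_c^2/\mu^2=\Theta(\sigma_c^2/\mu^2)$. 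The stepsize restriction $\eta_1=c/(\gamma+1)\le 1/(3(\mu n+L))$ forces $\gamma+1=\Theta\!\big(c(\mu n+L)\big)=\Theta(n+\kappa)$, which I take with equality up to a constant; then the second entry of $\nu_2$ equals $\Theta((n+\kappa)C_2)$. Hence $\nu_2=O\!\big(\sigma_c^2/\mu^2+(n+\kappa)C_2\big)$ and $\gamma+1=O(n+\kappa)$.

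Solving $\tfrac{L\nu_2}{2(\gamma+t+1)}\le\epsilon$ for $t$ then gives $t=O(L\nu_2/\epsilon)$; using $\kappa=L/\mu$ and treating $L$ as $O(1)$ (as is standard in this setting, e.g., after normalizing the data), the target accuracy is reached after
\begin{equation*}
t\;=\;O\!\left(\frac{(n+\kappa)C_2}{\epsilon}+\frac{\sigma_c^2\kappa^2}{\epsilon}\right)
\end{equation*}
iterations. Finally, each iteration of Algorithm~\ref{alg:ssag2} uses a constant number of stochastic gradient evaluations (the loss gradient, the updates of $m_t$ and $a_{i_t}$, and the parameter step), so the running time equals the iteration count up to a constant; adding the one-time $O(n)$ cost of forming $m_0=\tfrac1n\sum_i a_i\bar{x}_i$ (which is dominated by $(n+\kappa)C_2/\epsilon$ once $\epsilon$ is small) yields the claimed complexity.

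I do not expect a genuine obstacle: the statement is a routine unwinding of Theorem~\ref{em_conv_ssaga} plus smoothness, mirroring how Corollary~\ref{collary:complexity_ssag1} follows from Theorem~\ref{em_conv}. The only delicate point is the constant bookkeeping in the two entries of $\nu_2$ after the smoothness conversion — in particular checking that the extra factor $L/2$ and the choice $c=\Theta(1/\mu)$ are compatible with the stated form, which is why the result is understood up to the usual convention of hiding $L$-dependence inside $\kappa$ and the suppressed constants.
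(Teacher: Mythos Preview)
Your proposal is correct and matches the paper's intended derivation: the paper does not spell out a separate proof for this corollary, treating it as an immediate consequence of Theorem~\ref{em_conv_ssaga}, and your smoothness step $F(\theta)-F(\theta_*)\le\tfrac{L}{2}\|\theta-\theta_*\|^2$ followed by the choice $c=\Theta(1/\mu)$, $\gamma+1=\Theta(n+\kappa)$ is exactly the natural way to fill in the details. Your caveat about absorbing $L$ into the constants (so that $L/\mu^2$ is written as $\kappa^2$) is the same convention the paper uses throughout, e.g., in Corollary~\ref{collary:complexity_ssag1} and Table~\ref{convg_com}.
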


\begin{remark}
In \cite{bietti2016stochastic}, additional speedup can be achieved by
first running the algorithm with a constant stepsize 
for a few epochs, and then applying the decreasing stepsize.  
This trick is not used here.
If incorporated,
it can be shown that the $C_2/\epsilon$ term 
will be improved
to $\log(C_2/\bar{\epsilon})$. 
\end{remark}

A summary of the convergence results is shown in Table~\ref{convg_com}. As can be
seen, by exploiting the linear model structure, SSAG has a smaller variance
constant 
than 
SGD 
($\sigma_a^2$ vs $\sigma_s^2$) 
while having comparable space requirement.  
S-SAGA  improves over S-MISO and achieves gains both in terms of iteration complexity and storage. 
\begin{table}[h] 
\vspace{-.1in}
\caption{Iteration complexity and extra storage of different methods for solving
optimization problem~(\ref{eq:em_problem}). For simplicity of comparison, we drop
the constant $C$. }
\label{convg_com}
\begin{center}
\begin{tabular}{ccc}
\hline
& iteration complexity & space \\ \hline
SGD  &  $O(\kappa/\epsilon + \sigma_s^2\kappa^2/\epsilon)$  & $O(d)$      \\
S-MISO  &  $O((n + \kappa)/\epsilon + \sigma_m^2\kappa^2/\epsilon)$  & $O(nd)$       \\
SSAG  &  $O(n + \kappa/\epsilon + \sigma_a^2\kappa^2/\epsilon)$  & $O(d)$       \\
S-SAGA  &  $O((n+\kappa)/\epsilon + \sigma_c^2\kappa^2/\epsilon)$  & $O(n)$  \\
\hline
\end{tabular}
\end{center}
\vspace{-.1in}
\end{table} 

\begin{figure*}[t]
\begin{center}
\subfigure[{\em avazu-app} ($\lambda = 10^{-6}$).]{\includegraphics[width=.65\columnwidth, height=.42\columnwidth]{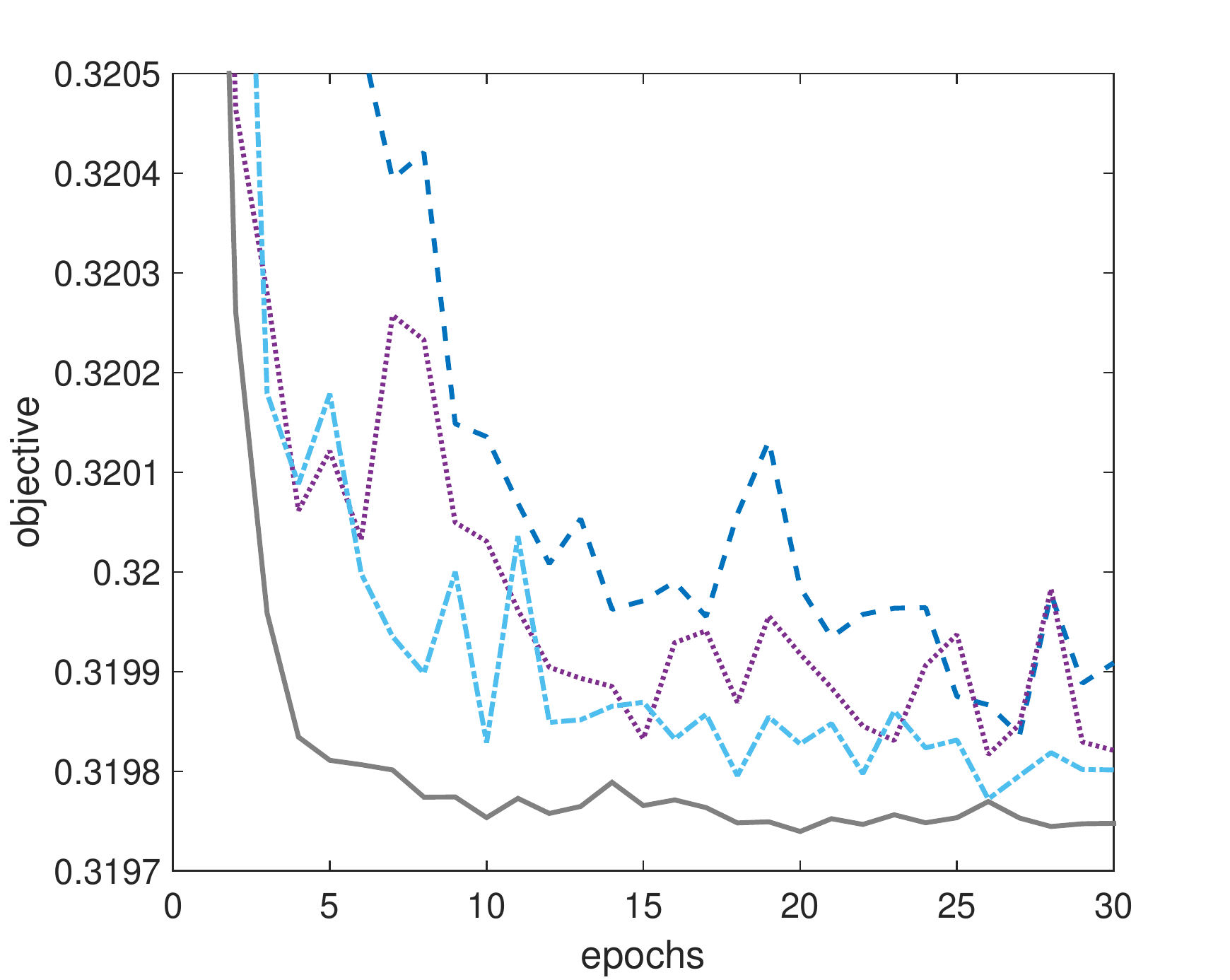}}
\subfigure[{\em avazu-app} ($\lambda = 10^{-7}$).]{\includegraphics[width=.65\columnwidth, height=.42\columnwidth]{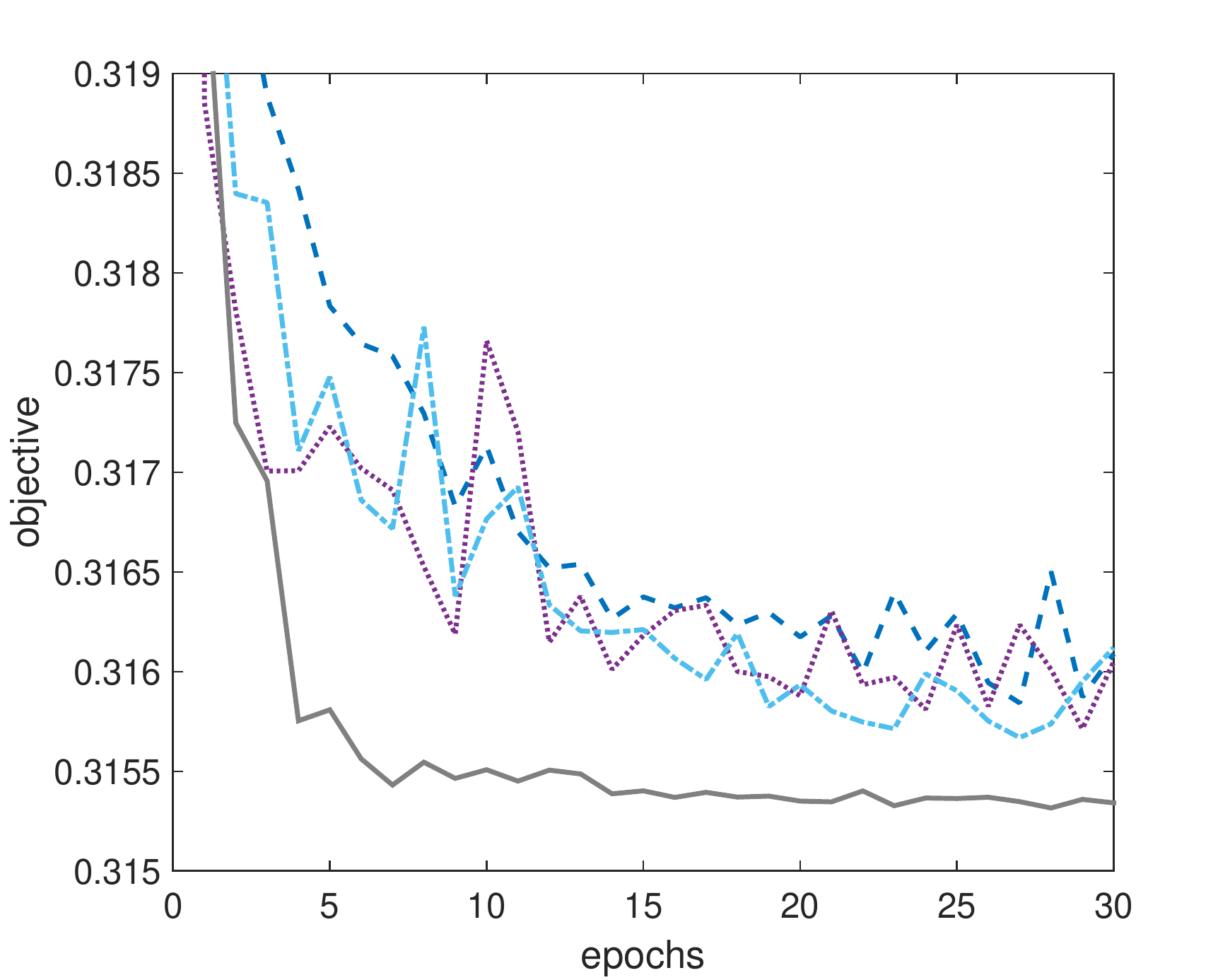}}
\subfigure[{\em avazu-app} ($\lambda = 10^{-8}$).]{\includegraphics[width=.65\columnwidth, height=.42\columnwidth]{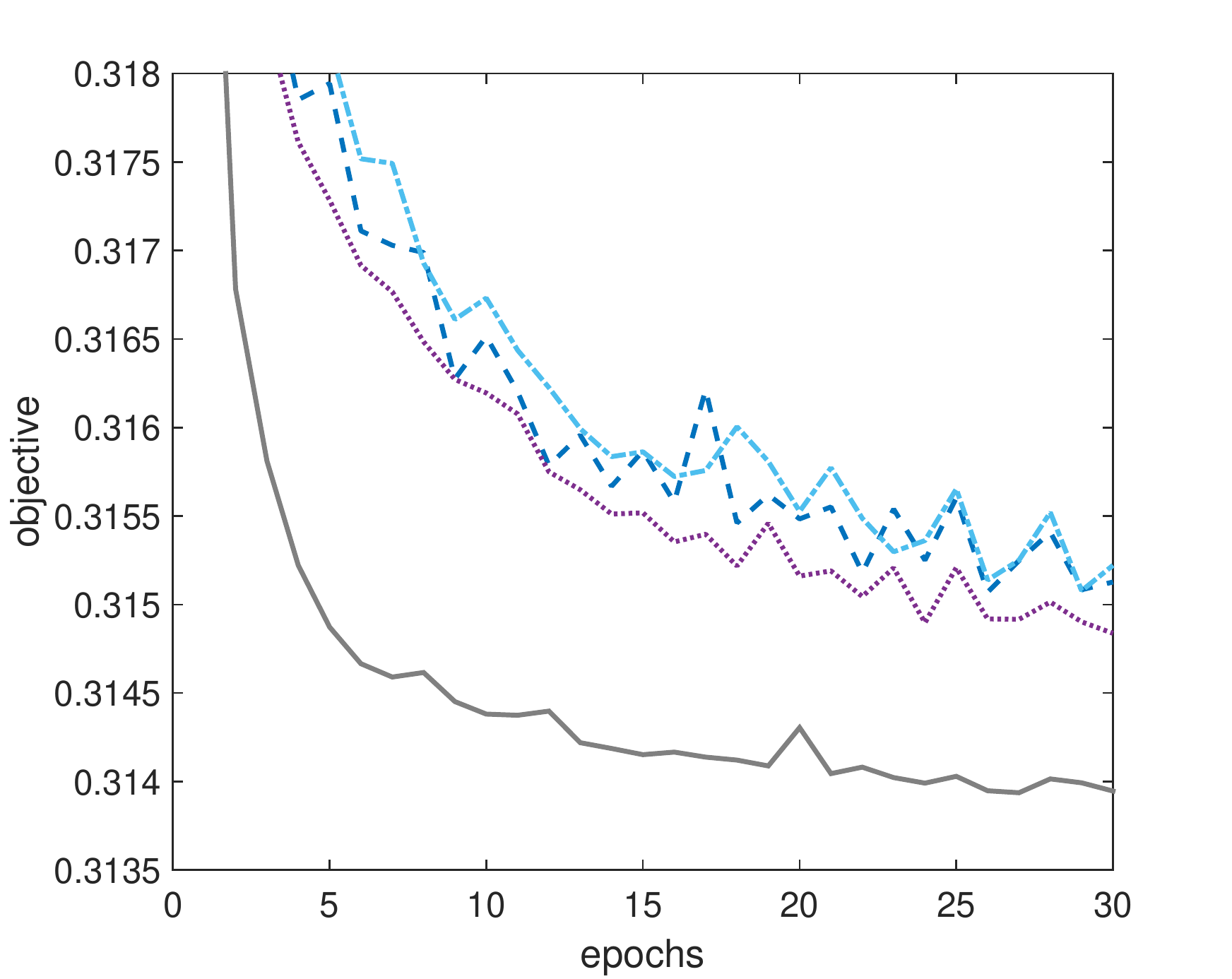}}
\vspace{-.12in}
\\
\subfigure[{\em kddb} ($\lambda = 10^{-6}$).]{\includegraphics[width=.65\columnwidth, height=.42\columnwidth]{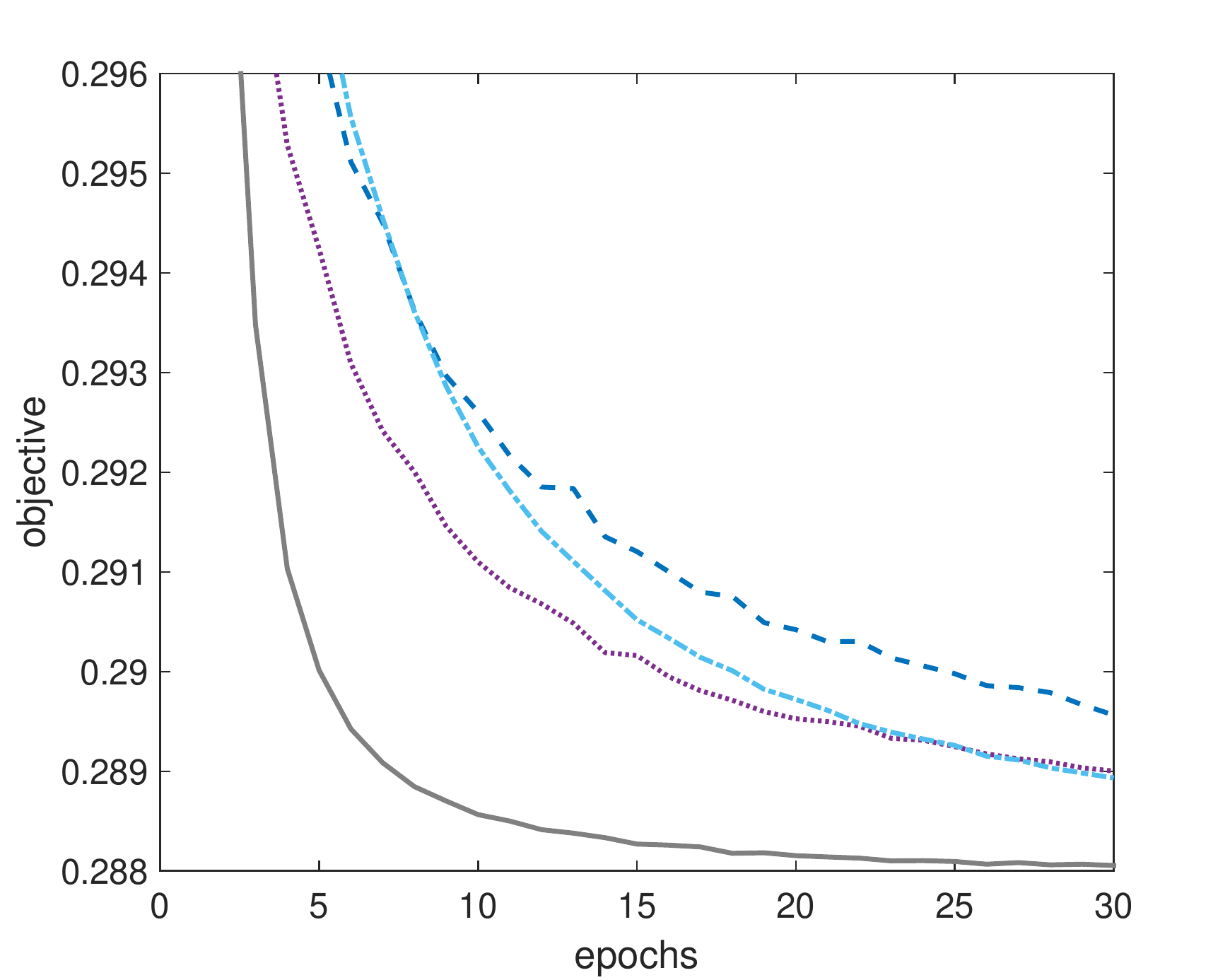}}
\subfigure[{\em kddb} ($\lambda = 10^{-7}$).]{\includegraphics[width=.65\columnwidth, height=.42\columnwidth]{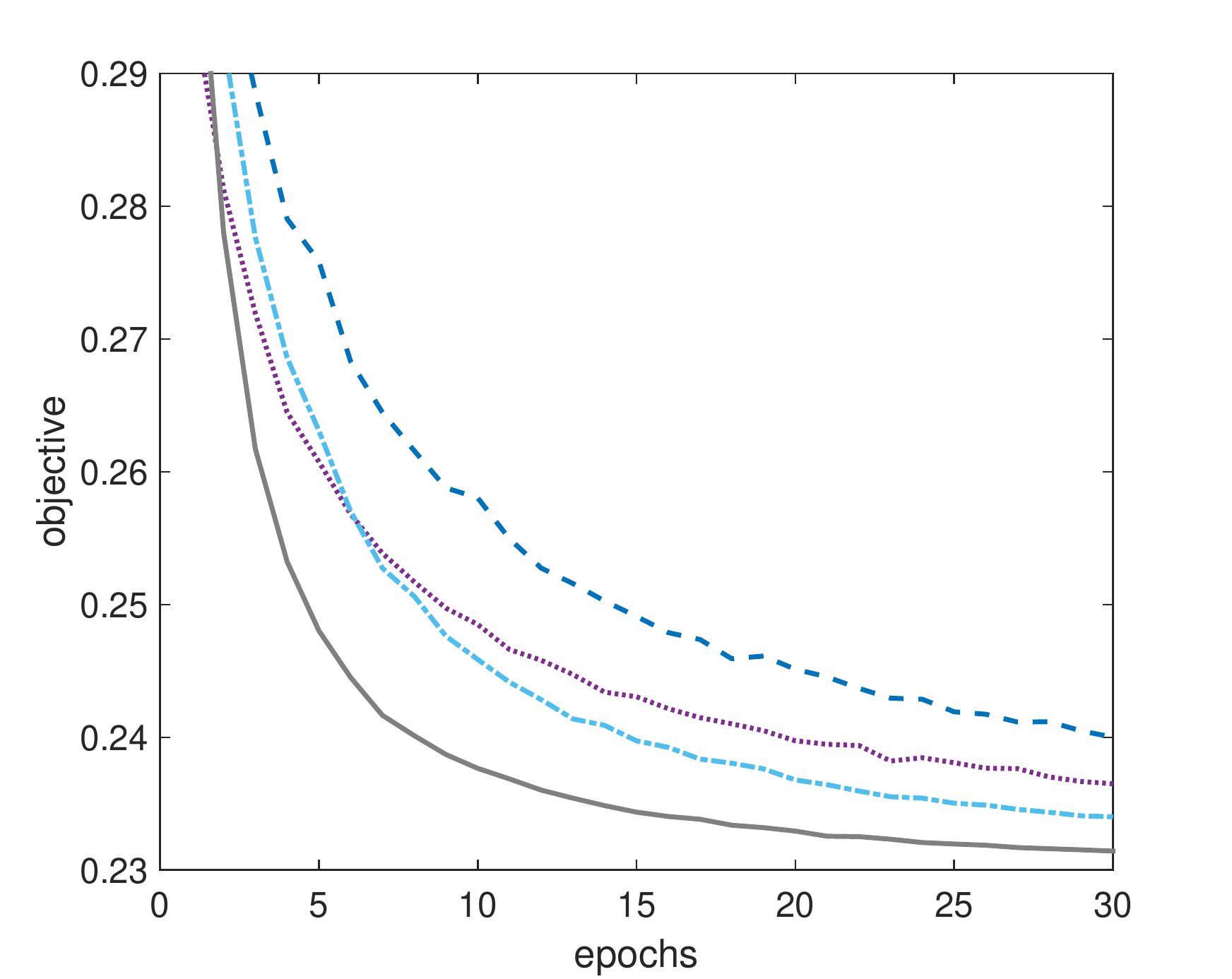}}
\subfigure[{\em kddb} ($\lambda = 10^{-8}$).]{\includegraphics[width=.65\columnwidth, height=.42\columnwidth]{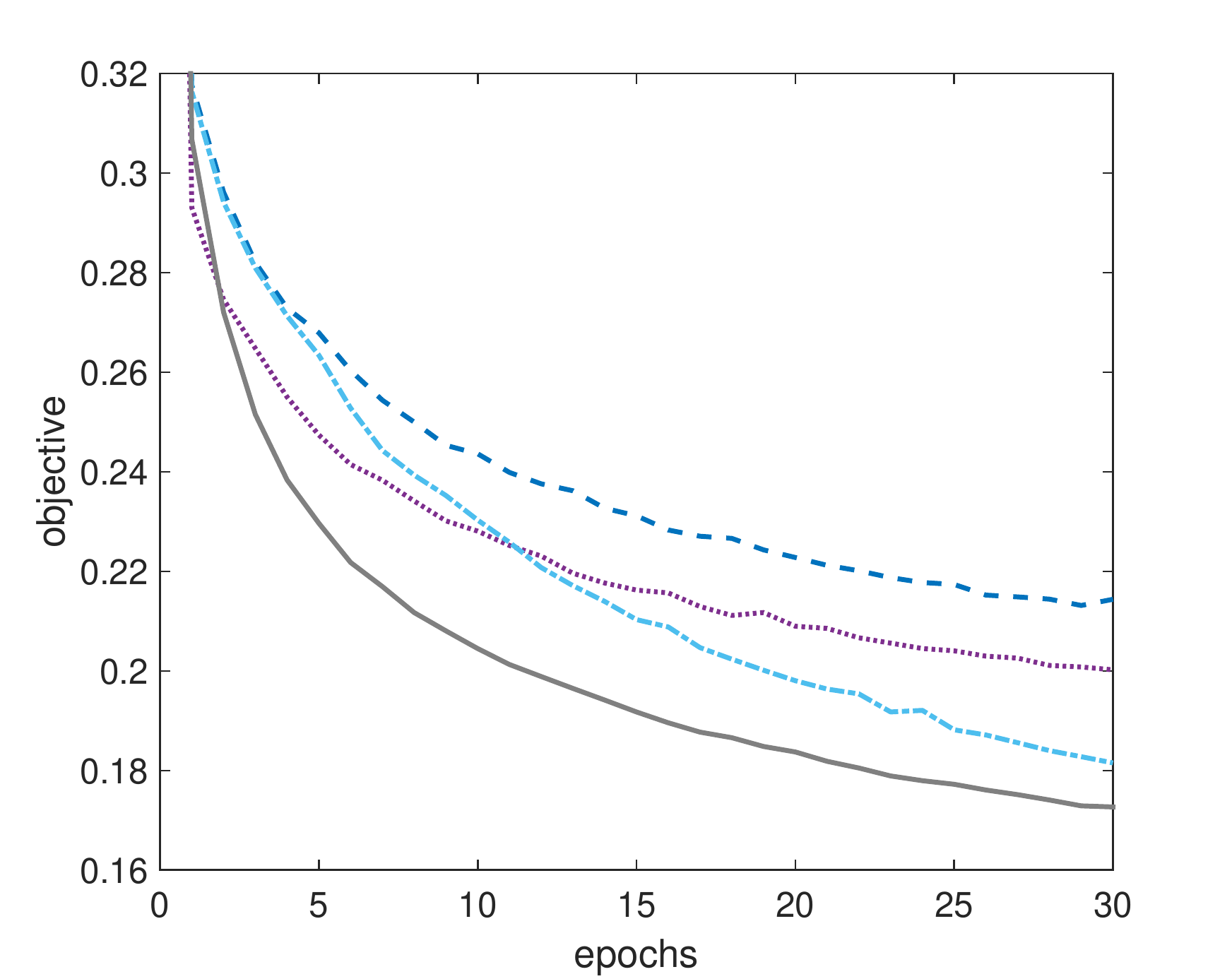}}
\setcounter{subfigure}{0}
\end{center}
\begin{center}
\includegraphics[width=3in]{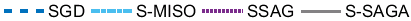}
\end{center}
\vskip -.16in
\caption{Convergence with the number of epochs (logistic regression with dropout).
The regularization parameter $\lambda$ of $\ell_2$ regularizer is varied from $10^{-6}$ to $10^{-8}$. The dropout rate is fixed to $0.3$. }
\label{classification_dropout_ni}
\end{figure*}

\subsection{Acceleration By Iterate Averaging}
\label{sec:ia}

The complexity bounds in Corollaries~\ref{collary:complexity_ssag1} and \ref{collary:complexity_ssag2} depend quadratically on the condition
number $\kappa$. This may be problematic on ill-conditioned problems. 
To alleviate this problem, one can use iterate averaging
\cite{bietti2016stochastic},
which outputs 
\begin{equation} \label{c}
\bar{\theta}_T \equiv \frac{2}{T(2\gamma + T - 1)}\sum^{T-1}_{t=0}(\gamma + t)\theta_t,
\end{equation} 
where $T$ is the total number of iterations.
It can be easily seen that 
(\ref{c}) can be efficiently implemented in an online fashion without the need for storing
$\theta_t$'s:
\[\bar{\theta}_t = (1 - \rho_t)\bar{\theta}_{t-1} + \rho_t\theta_{t-1},\]
where 
$\rho_t = \frac{2(\gamma + t - 1)}{t(2\gamma + t - 1)}$ and $\bar{\theta}_0 = 0$.
As in 
\cite{bietti2016stochastic},
the following shows that 
the $\kappa^2$ dependence in both SSAG and S-SAGA
(Corollaries~\ref{collary:complexity_ssag1} and \ref{collary:complexity_ssag2}) can be reduced
to
$\kappa$.

\begin{theorem} \label{em_a_conv}
Assume that
$\eta_t = 2/(\mu(\gamma + t))$ and
$\gamma > 0$ such that $\eta_1 \leq 1/(2L)$. 
For the $\{\theta_t\}$ sequence generated from SSAG,  we have
\begin{eqnarray*} \label{eq:em_a_conv_vrsgd}
\lefteqn{\CE[F(\bar{\theta}_T)] - F(\theta_*)}\\
& \leq & \frac{\mu\gamma(\gamma - 1)}{T(2\gamma + T-1)}\|\theta_0 - \theta_*\|^2  + \frac{4\sigma_a^2}{\mu(2\gamma + T - 1)}.
\end{eqnarray*}
\end{theorem}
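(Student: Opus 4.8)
The plan is to adapt the weighted iterate-averaging argument used for S-MISO in \cite{bietti2016stochastic} to the SSAG recursion, exploiting that $v_t$ is an unbiased estimator of $\nabla F(\theta_{t-1})$ with a second moment that can be controlled by $\sigma_a^2$ and the function-value gap. Throughout I would write $\delta_t \equiv \CE\|\theta_t-\theta_*\|^2$ and $r_t \equiv \CE[F(\theta_t)]-F(\theta_*)\ge 0$, and let $\mathcal{F}_{t-1}$ denote the history of the algorithm before iteration $t$.

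\textbf{Step 1 (one-step recursion).} From $\theta_t=\theta_{t-1}-\eta_t v_t$,
\[
\|\theta_t-\theta_*\|^2 = \|\theta_{t-1}-\theta_*\|^2 - 2\eta_t\langle v_t,\theta_{t-1}-\theta_*\rangle + \eta_t^2\|v_t\|^2 ,
\]
and taking $\CE[\cdot\mid\mathcal{F}_{t-1}]$ together with $\CE[v_t\mid\mathcal{F}_{t-1}]=\nabla F(\theta_{t-1})$ (Proposition~\ref{prop:em_var}) removes the randomness from the cross term. For $\CE[\|v_t\|^2\mid\mathcal{F}_{t-1}]$ I would decompose $v_t$ about $\nabla F(\theta_{t-1})$, bound the conditional variance by $\sigma_a^2$ via Proposition~\ref{prop:em_var} and the SSAG variance assumption, and bound $\|\nabla F(\theta_{t-1})\|^2\le 2L(F(\theta_{t-1})-F(\theta_*))$ using $L$-smoothness of $F$ (which follows from Assumption~\ref{assumption:lipschitz}). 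For the cross term I would apply $\mu$-strong convexity, $\langle\nabla F(\theta_{t-1}),\theta_{t-1}-\theta_*\rangle \ge (F(\theta_{t-1})-F(\theta_*)) + \frac{\mu}{2}\|\theta_{t-1}-\theta_*\|^2$. Taking total expectations and using $\eta_t\le\eta_1\le 1/(2L)$ so that $1-L\eta_t\ge \frac12$, the two function-value contributions collapse into a single $-\eta_t r_{t-1}$ term, giving
\[
\eta_t\, r_{t-1} \le (1-\mu\eta_t)\,\delta_{t-1} - \delta_t + \eta_t^2\sigma_a^2 .
\]

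\textbf{Step 2 (telescoping sum).} Substituting $\eta_t = 2/(\mu(\gamma+t))$, dividing by $\eta_t$, and multiplying by the weight $\gamma+t-1$, the right-hand side becomes $A_{t-1}\delta_{t-1}-A_t\delta_t + \frac{2\sigma_a^2(\gamma+t-1)}{\mu(\gamma+t)}$ with $A_t \equiv \frac{\mu}{2}(\gamma+t)(\gamma+t-1)$. Summing over $t=1,\dots,T$ telescopes the $\delta$-terms; dropping the surviving $-A_T\delta_T\le 0$ and bounding $(\gamma+t-1)/(\gamma+t)<1$ yields
\[
\sum_{s=0}^{T-1}(\gamma+s)\,r_s \;\le\; \frac{\mu\gamma(\gamma-1)}{2}\|\theta_0-\theta_*\|^2 + \frac{2T\sigma_a^2}{\mu}.
\]
\textbf{Step 3 (Jensen).} Since $\sum_{s=0}^{T-1}(\gamma+s)=\frac12 T(2\gamma+T-1)$, the coefficients defining $\bar\theta_T$ form a convex combination, so convexity of $F$ and Jensen's inequality give $\CE[F(\bar\theta_T)]-F(\theta_*) \le \frac{2}{T(2\gamma+T-1)}\sum_{s=0}^{T-1}(\gamma+s)r_s$; inserting the bound from Step 2 reproduces the claimed inequality exactly.

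The smoothness/strong-convexity estimates and the Jensen step are routine. The one delicate point — and the only place the precise constants matter — is the bookkeeping that makes the sum telescope: the stepsize $\eta_t=2/(\mu(\gamma+t))$ and the averaging weights $\gamma+t$ must be matched so that, after multiplying the recursion by $\gamma+t-1$, the coefficient of $\delta_{t-1}$ is $A_{t-1}$ and that of $\delta_t$ is $-A_t$. The condition $\eta_1\le 1/(2L)$ is precisely what is needed to absorb the $2L\eta_t^2(F(\theta_{t-1})-F(\theta_*))$ term (coming from $\|\nabla F\|^2$ inside $\CE\|v_t\|^2$) into the $-2\eta_t(F(\theta_{t-1})-F(\theta_*))$ term without leaving a wrong-sign residual.
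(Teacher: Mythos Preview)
Your proposal is correct and follows essentially the same approach as the paper's own proof: the same one-step recursion from strong convexity, smoothness, and the variance bound (yielding $\eta_t r_{t-1}\le (1-\mu\eta_t)\delta_{t-1}-\delta_t+\eta_t^2\sigma_a^2$), the same substitution $\eta_t=2/(\mu(\gamma+t))$ and multiplication by $\gamma+t-1$ to make the $\delta$-terms telescope with coefficients $A_t=\tfrac{\mu}{2}(\gamma+t)(\gamma+t-1)$, and the same closing Jensen step. Your identification of the role of $\eta_1\le 1/(2L)$ in absorbing the $\|\nabla F\|^2$ contribution matches the paper's use of $(1-\eta_t L)2\eta_t\ge \eta_t$ exactly.
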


The stepsize $\eta_t = 2/(\mu(\gamma + t))$ and condition $\eta_1 \leq 1/2L$ together
implies that $\gamma = O(\kappa)$. 
Thus, when
$T \ll \gamma$, the first term, which depends on 
$\|\theta_0 - \theta_*\|^2$, decays as $1/T$, which is no better than
(\ref{eq:em_conv_vrsgd}). On the other hand, if $T \gg \gamma$, the first term decays at
a faster 
$\kappa/T^2$
rate.

\begin{collary} \label{collary:complexity_ia_ssag1}
When $T \gg \gamma$, to ensure that
$\CE[F(\bar{\theta}_T)] - F(\theta_*) \leq \epsilon$, 
SSAG with iterate averaging 
has a time complexity of 
$O(n + \sqrt{\kappa C_3}/\sqrt{\epsilon} + 
\sigma^2_a\kappa/\epsilon)$, where $C_3 = \|\theta_0 - \theta_*\|^2$.
\end{collary}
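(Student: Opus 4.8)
The plan is to \emph{invert} the convergence bound of Theorem~\ref{em_a_conv} in the regime $T \gg \gamma$, read off how many iterations $T$ are needed to reach accuracy $\epsilon$, and then add the one-time initialization cost. First I would recall, as noted after Theorem~\ref{em_a_conv}, that the stepsize $\eta_t = 2/(\mu(\gamma+t))$ together with the constraint $\eta_1 = 2/(\mu(\gamma+1)) \le 1/(2L)$ forces $\gamma + 1 \ge 4L/\mu = 4\kappa$, so $\gamma = \Theta(\kappa)$. In the regime $T \gg \gamma$ I would approximate $2\gamma + T - 1 = \Theta(T)$ and $\gamma(\gamma-1) = \Theta(\gamma^2) = \Theta(\kappa^2)$, which collapses the two-term bound of Theorem~\ref{em_a_conv} to
\[
\CE[F(\bar\theta_T)] - F(\theta_*) = O\!\left(\frac{\mu\kappa^2 C_3}{T^2}\right) + O\!\left(\frac{\sigma_a^2}{\mu T}\right),
\]
where $C_3 = \|\theta_0 - \theta_*\|^2$.

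Next I would force each of the two terms to be at most $\epsilon/2$ and solve for $T$ separately. The optimization (bias) term decays as $1/T^2$, so $\mu\kappa^2 C_3/T^2 \le \epsilon/2$ requires $T = \Omega(\kappa\sqrt{\mu C_3/\epsilon})$; writing $\kappa\sqrt{\mu} = \sqrt{\kappa^2\mu} = \sqrt{\kappa L}$ and treating $L$ as an absolute constant (the same normalization implicit in the variance term of Corollary~\ref{collary:complexity_ssag1}), this is $T = \Omega(\sqrt{\kappa C_3/\epsilon})$. The variance term decays only as $1/T$, so $\sigma_a^2/(\mu T) \le \epsilon/2$ requires $T = \Omega(\sigma_a^2/(\mu\epsilon)) = \Omega(\sigma_a^2\kappa/\epsilon)$ after the same normalization, since $1/\mu = \kappa/L$. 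Taking $T$ to be the maximum of these two requirements — equivalently their sum up to a factor of two — gives $T = O(\sqrt{\kappa C_3/\epsilon} + \sigma_a^2\kappa/\epsilon)$ iterations, each costing a single $O(1)$ gradient evaluation (step~6 of Algorithm~\ref{alg:ssag1}). Adding the one-time $O(n)$ cost of forming $\tx = \frac{1}{n}\sum_{i=1}^n\CE_{\xi_i}[\hat{x}_i]$, amortized over the data passes, yields the claimed $O(n + \sqrt{\kappa C_3}/\sqrt{\epsilon} + \sigma_a^2\kappa/\epsilon)$.

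Before concluding I would check that this $T$ is consistent with the hypothesis $T \gg \gamma = \Theta(\kappa)$ that licensed the simplification, which holds in the intended small-$\epsilon$ regime where the accelerated $\sqrt{\kappa C_3/\epsilon}$ term dominates $\kappa$. The only genuinely non-routine point — and hence the main obstacle — is the bookkeeping of constants: the $1/T^2$ decay of the bias term is exactly what turns the $\kappa C_1/\epsilon$ term of Corollary~\ref{collary:complexity_ssag1} into the accelerated $\sqrt{\kappa C_3/\epsilon}$, while the factor $2/\mu$ in the stepsize shrinks the variance constant from $\Theta(\sigma_a^2\kappa^2/\epsilon)$ down to $\Theta(\sigma_a^2\kappa/\epsilon)$, realizing the promised reduction of the $\kappa^2$ dependence to $\kappa$. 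Once the $\gamma = \Theta(\kappa)$ and $L = O(1)$ normalizations are fixed, the remaining steps — the $T\gg\gamma$ approximations, solving two monomial inequalities for $T$, and merging a maximum into a sum — are all elementary.
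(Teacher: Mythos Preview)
Your proposal is correct and mirrors the paper's own (implicit) derivation: the paper gives no standalone proof of this corollary, but the discussion immediately preceding it already carries out your key steps---observing that $\gamma=\Theta(\kappa)$ from the stepsize constraint, and that for $T\gg\gamma$ the first term of Theorem~\ref{em_a_conv} decays at a $\kappa/T^2$ rate (after the same $L=O(1)$ normalization you flag)---so inverting the two terms separately and adding the $O(n)$ initialization cost is exactly the intended argument.
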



Similarly, 
we have the following
for S-SAGA.
\begin{theorem} \label{em_a_conv_ssaga}
Assume that 
$\eta_t = c/(\gamma + t)$ for some $c > 1/\mu$ and
$\gamma > 0$ such that $\eta_1 \leq 1/(7(\mu n + L))$.
For the $\{\theta_t\}$ sequence generated from S-SAGA,  we have
\begin{eqnarray} \label{eq:em_a_conv_vrsaga}
\lefteqn{\CE[F(\bar{\theta}_T) - F(\theta_*)]} \nonumber\\
& \leq & \frac{\mu\gamma(\gamma - 1)}{2T(2\gamma + T - 1)}C_4  + \frac{32\sigma_c^2}{\mu(2\gamma + T - 1)},
\end{eqnarray}
where 
$C_4 \equiv 3\|\theta_0 - \theta_*\|^2  
+ \frac{4n}{7(\mu n + L)}[F(\theta_0) - F(\theta_*)]$.
\end{theorem}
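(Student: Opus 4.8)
The plan is to reuse the weighted‑averaging argument that \cite{bietti2016stochastic} applied to S‑MISO, but feed it the S‑SAGA potential instead. The starting point is the one–step inequality that already underlies Theorem~\ref{em_conv_ssaga}: there is a composite Lyapunov function of the form $\mathcal{L}_t = 3\|\theta_t-\theta_*\|^2 + \frac{c_1}{n}\sum_{i=1}^n\CE[(a_i^t-\phi_i'(\hat{x}_i^T\theta_*))^2\|\hat{x}_i\|^2]$, where $a_i^t$ is the scalar stored for sample $i$ at iteration $t$, for which one can establish
\begin{equation}\label{eq:onestep}
\CE[\mathcal{L}_t]\;\le\;(1-\mu\eta_t)\,\CE[\mathcal{L}_{t-1}]\;-\;2\eta_t\big(\CE[F(\theta_{t-1})]-F(\theta_*)\big)\;+\;c_0\,\eta_t^2\,\sigma_c^2 .
\end{equation}
Deriving \eqref{eq:onestep} is the real work, and I expect it to be the main obstacle: the stepsize now varies, so the SAGA memory term must be carried through the update with a shrinking $\eta_t$, and the extra cross terms this produces are exactly what forces the tighter restriction $\eta_1\le 1/(7(\mu n+L))$ (versus $1/(3(\mu n+L))$ in Theorem~\ref{em_conv_ssaga}) and the factor $3$ in front of $\|\theta_0-\theta_*\|^2$; one must also check that $c_0$, $c_1$ and this threshold are mutually consistent. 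Co‑coercivity of each $\nabla f_i$ together with $\nabla F(\theta_*)=0$ gives $\frac1n\sum_i\CE[\|(\phi_i'(\hat{x}_i^T\theta_0)-\phi_i'(\hat{x}_i^T\theta_*))\hat{x}_i\|^2]\le 2L\,(F(\theta_0)-F(\theta_*))$, so with the initialization $a_i^0=\phi_i'(\hat{x}_i^T\theta_0)$ the potential satisfies $\mathcal{L}_0\le C_4$ for the appropriate $c_1$.

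Granting \eqref{eq:onestep}, the rest is the standard telescoping‑by‑weights step. I would rewrite it as $2\eta_t\big(\CE[F(\theta_{t-1})]-F(\theta_*)\big)\le (1-\mu\eta_t)\CE[\mathcal{L}_{t-1}]-\CE[\mathcal{L}_t]+c_0\eta_t^2\sigma_c^2$, multiply through by $w_t\equiv(\gamma+t-1)/\eta_t=(\gamma+t-1)(\gamma+t)/c$, and sum over $t=1,\dots,T$. Since $\eta_t=c/(\gamma+t)$, one has $w_t(1-\mu\eta_t)=(\gamma+t-1)(\gamma+t-\mu c)/c\le(\gamma+t-2)(\gamma+t-1)/c=w_{t-1}$ once $\mu c\ge 2$ (the regime in which the stated bound is tight; the companion SSAG result Theorem~\ref{em_a_conv} uses $c=2/\mu$ exactly), so the potential terms telescope and only $w_1\CE[\mathcal{L}_0]\le\gamma(\gamma+1)C_4/c$ survives on the right. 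Because $w_t\eta_t=\gamma+t-1$, the left‑hand sum is $2\sum_{t=1}^T(\gamma+t-1)\big(\CE[F(\theta_{t-1})]-F(\theta_*)\big)$, while the error contributes $c_0\sigma_c^2\sum_{t=1}^T(\gamma+t-1)\eta_t\le c_0c\,\sigma_c^2T$.

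Finally I would invoke convexity of $F$: from the definition of $\bar\theta_T$ and the identity $\sum_{t=0}^{T-1}(\gamma+t)=\tfrac12T(2\gamma+T-1)$,
\begin{equation*}
\CE[F(\bar\theta_T)]-F(\theta_*)\;\le\;\frac{2}{T(2\gamma+T-1)}\sum_{t=1}^T(\gamma+t-1)\big(\CE[F(\theta_{t-1})]-F(\theta_*)\big),
\end{equation*}
so dividing the telescoped inequality by $T(2\gamma+T-1)$ yields $\CE[F(\bar\theta_T)]-F(\theta_*)\le\frac{\gamma(\gamma+1)C_4}{c\,T(2\gamma+T-1)}+\frac{c_0c\,\sigma_c^2}{2\gamma+T-1}$. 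Using $c>1/\mu$ on the first term and substituting the explicit value of $c_0$ in the second — the routine but delicate constant chase that turns $\gamma(\gamma+1)/c$ into $\tfrac12\mu\gamma(\gamma-1)$ and $c_0c$ into $32/\mu$, exactly paralleling the bookkeeping for SSAG in Theorem~\ref{em_a_conv} — produces \eqref{eq:em_a_conv_vrsaga}. To summarize the plan: steps two and three are essentially algebra once \eqref{eq:onestep} is in hand; the genuinely hard part is establishing that one‑step potential recursion for S‑SAGA under a time‑varying stepsize with the constants that make $\mathcal{L}_0\le C_4$.
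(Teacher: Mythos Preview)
Your plan is essentially the paper's: establish a one–step Lyapunov recursion of the shape \eqref{eq:onestep}, then run the weighted–telescoping/Jensen argument with $\eta_t=2/(\mu(\gamma+t))$ exactly as in Theorem~\ref{em_a_conv}. The telescoping half of your write–up matches the paper almost line for line. The real difference is in the Lyapunov you posit. The paper does \emph{not} track the squared gradient error $\frac{1}{n}\sum_i\CE[(a_i^t-\phi_i'(\hat{x}_i^T\theta_*))^2\|\hat{x}_i\|^2]$ with a fixed weight $c_1$; instead it uses a function–value (Bregman) memory term $\frac{1}{n}\sum_i f_i(\vartheta_i^{t+1})-F(\theta_*)-\langle\nabla f_i(\theta_*),\vartheta_i^{t+1}-\theta_*\rangle$, where $\vartheta_i$ is the stored iterate for sample $i$, and crucially gives it a \emph{time–varying} coefficient $\alpha_t=\eta_t(1-\eta_t\mu-\mu/(2L))$. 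Moreover, the recursion is obtained by \emph{adding} a second copy of the descent inequality (with weight $w=\tfrac{1}{4n}$) to the $C_t$–recursion already derived for Theorem~\ref{em_conv_ssaga} (with $b=\tfrac{1}{2n}$), and it is this two–piece construction that simultaneously yields the extra negative $-\tfrac{\eta_t}{2n}[F(\theta_t)-F(\theta_*)]$ term, the factor $3=4n(b+w)$ in front of $\|\theta_0-\theta_*\|^2$, and the tightened threshold $\eta_1\le 1/(7(\mu n+L))$. Your gradient–norm memory with constant $c_1$ may be made to work, but be aware that (i) the recursion for the memory term naturally introduces an $\eta_t$–scaled new contribution, which is why the paper lets $\alpha_t\propto\eta_t$, and (ii) at $t=0$ your memory term already contains a $\sigma_c^2$ piece (since $a_i^0$ is computed from a noisy sample), whereas the paper's Bregman form gives exactly $F(\theta_0)-F(\theta_*)$ and hence the clean $C_4$. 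So your outline is right, but if you want the stated constants you should switch to the paper's Bregman Lyapunov with $\alpha_t\propto\eta_t$ and reproduce the $b$/$w$ splitting.
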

The condition $\eta_1 \leq 1/(7(\mu n + L))$ is satisfied when $\gamma = O(n + \kappa)$. 
Thus, the second term in $C_4$ is scaled by $4n/(7(\mu n +L)) = O(n/(\mu \gamma))$. 
These implies that the first term in
(\ref{eq:em_a_conv_vrsaga}) decays as $n/T$ when $T \ll \gamma$. On the other hand, when $T \gg \gamma$, the first term decays as $n(n + \kappa)/T^2$. 
Thus, iterate averaging does not provide S-SAGA with much acceleration as compared to SSAG. 

The following Corollary considers the case where $n = O(\kappa)$ \cite{johnson2013accelerating}.
\begin{collary}  \label{collary:complexity_ia_ssag2}
Assume that $n = O(\kappa)$. 
When $T \gg \gamma$,
to ensure that
$\CE[F(\bar{\theta}_T)] - F(\theta_*) \leq \epsilon$, 
S-SAGA with iterate averaging 
has a time complexity of 
$O(n + \sqrt{(n + \kappa)C_4}/\sqrt{\epsilon} + 
\sigma^2_c\kappa/\epsilon)$.
\end{collary}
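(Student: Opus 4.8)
The argument parallels the one for SSAG (Corollary~\ref{collary:complexity_ia_ssag1}): I would simply substitute the prescribed parameter choices into the iterate-averaging guarantee of Theorem~\ref{em_a_conv_ssaga} and solve for the number of iterations $T$ needed to push the right-hand side below $\epsilon$, treating the smoothness constant $L$ as an absolute constant as is done in the other complexity corollaries. Since Theorem~\ref{em_a_conv_ssaga} already controls the function-value gap $\CE[F(\bar{\theta}_T) - F(\theta_*)] = \CE[F(\bar{\theta}_T)] - F(\theta_*)$, no strong-convexity conversion is needed. First I would pin down $\gamma$: the stepsize condition $\eta_1 = c/(\gamma+1) \le 1/(7(\mu n + L))$ with $c = \Theta(1/\mu)$ forces $\gamma = \Omega(n+\kappa)$, and we are free to take $\gamma = \Theta(n+\kappa)$. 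Under the hypothesis $n = O(\kappa)$ this gives $\gamma = \Theta(\kappa)$, hence $\mu\gamma^2 = \Theta((\mu n + L)(n+\kappa)) = O(L(n+\kappa)) = O(n+\kappa)$ (using $\mu\kappa = L = \Theta(1)$), and also $1/\mu = \kappa/L = \Theta(\kappa)$.

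Next, working in the regime $T \gg \gamma$ so that $2\gamma + T - 1 = \Theta(T)$, the bound in (\ref{eq:em_a_conv_vrsaga}) becomes, up to universal constants,
\[
\frac{\mu\gamma^2}{T^2}\,C_4 \;+\; \frac{\sigma_c^2}{\mu T}.
\]
I would then require each summand to be at most $\epsilon/2$. The first condition yields $T = \Omega\big(\gamma\sqrt{\mu C_4/\epsilon}\big) = \Omega\big(\sqrt{(n+\kappa)C_4/\epsilon}\big)$, using $\gamma\sqrt{\mu} = \sqrt{\mu\gamma^2} = \Theta(\sqrt{n+\kappa})$; the second yields $T = \Omega(\sigma_c^2/(\mu\epsilon)) = \Omega(\sigma_c^2\kappa/\epsilon)$. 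Taking $T$ to be the maximum of these two makes $\CE[F(\bar{\theta}_T)] - F(\theta_*) \le \epsilon$. Finally, each iteration costs $O(1)$ stochastic gradient evaluations while the initialization of $m_0$ and $\{a_i\}_{i\in[n]}$ costs $O(n)$ once; amortizing the latter over the run gives a total of $O\big(n + \sqrt{(n+\kappa)C_4}/\sqrt{\epsilon} + \sigma_c^2\kappa/\epsilon\big)$ oracle calls, which is the claimed bound.

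I expect the only delicate step to be the size estimate $\mu\gamma^2 = O(n+\kappa)$: it combines the lower bound on $\gamma$ imposed by the stepsize restriction, the matching upper bound $\gamma = O(n+\kappa)$ that we impose, the assumption $n = O(\kappa)$, and the normalization of $L$; one should also verify that the resulting $T$ genuinely lies in the non-trivial regime $T \gg \gamma$, i.e. that $C_4/\epsilon \gg n+\kappa$ (respectively $\sigma_c^2/\epsilon \gg 1$) in the high-accuracy setting where the corollary is of interest. Everything else is a routine comparison against the threshold $\epsilon/2$.
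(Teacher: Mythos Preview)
Your proposal is correct and mirrors the paper's own (implicit) argument: the paper does not spell out a proof for this corollary, but the discussion immediately preceding it fixes $\gamma = \Theta(n+\kappa)$ from the stepsize condition, notes that under $T\gg\gamma$ the first term of (\ref{eq:em_a_conv_vrsaga}) decays like $(n+\kappa)/T^2$ (times $C_4$, after absorbing $L$), and the second like $\sigma_c^2/(\mu T)$---exactly the two balances you solve. Your identification of $\mu\gamma^2 = (\mu n + L)(n+\kappa) = O(n+\kappa)$ via $n=O(\kappa)$ and $L=\Theta(1)$ is precisely the ``delicate step'' the paper relies on without writing it out, and the $O(n)$ initialization cost matches the remark made for Corollary~\ref{collary:complexity_ssag1}.
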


\begin{figure*}[ht]
\begin{center}
\subfigure[{\em avazu-app} ($\lambda = 10^{-6}$).]{\includegraphics[width=.65\columnwidth, height=.42\columnwidth]{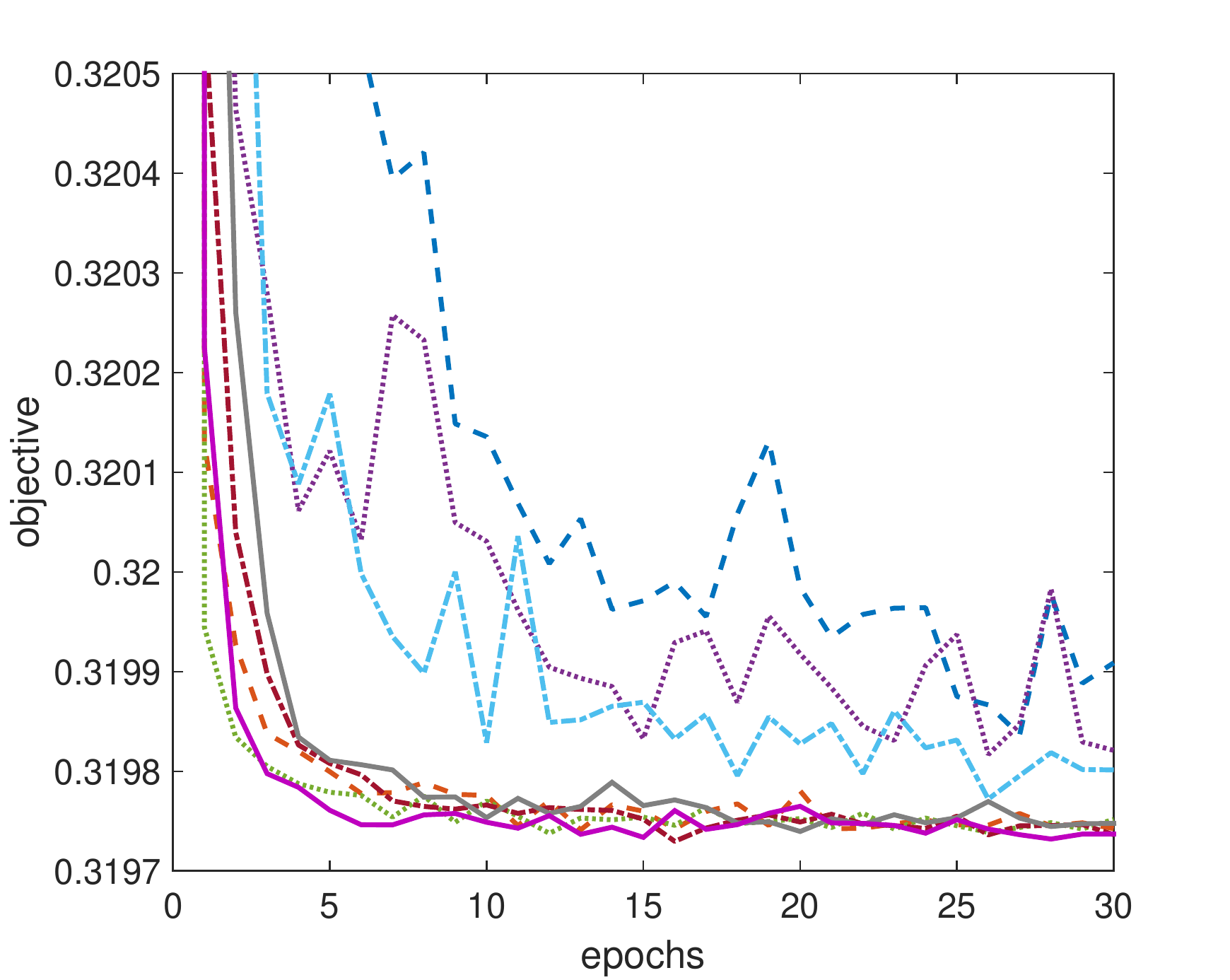}}
\subfigure[{\em avazu-app} ($\lambda = 10^{-7}$).]{\includegraphics[width=.65\columnwidth, height=.42\columnwidth]{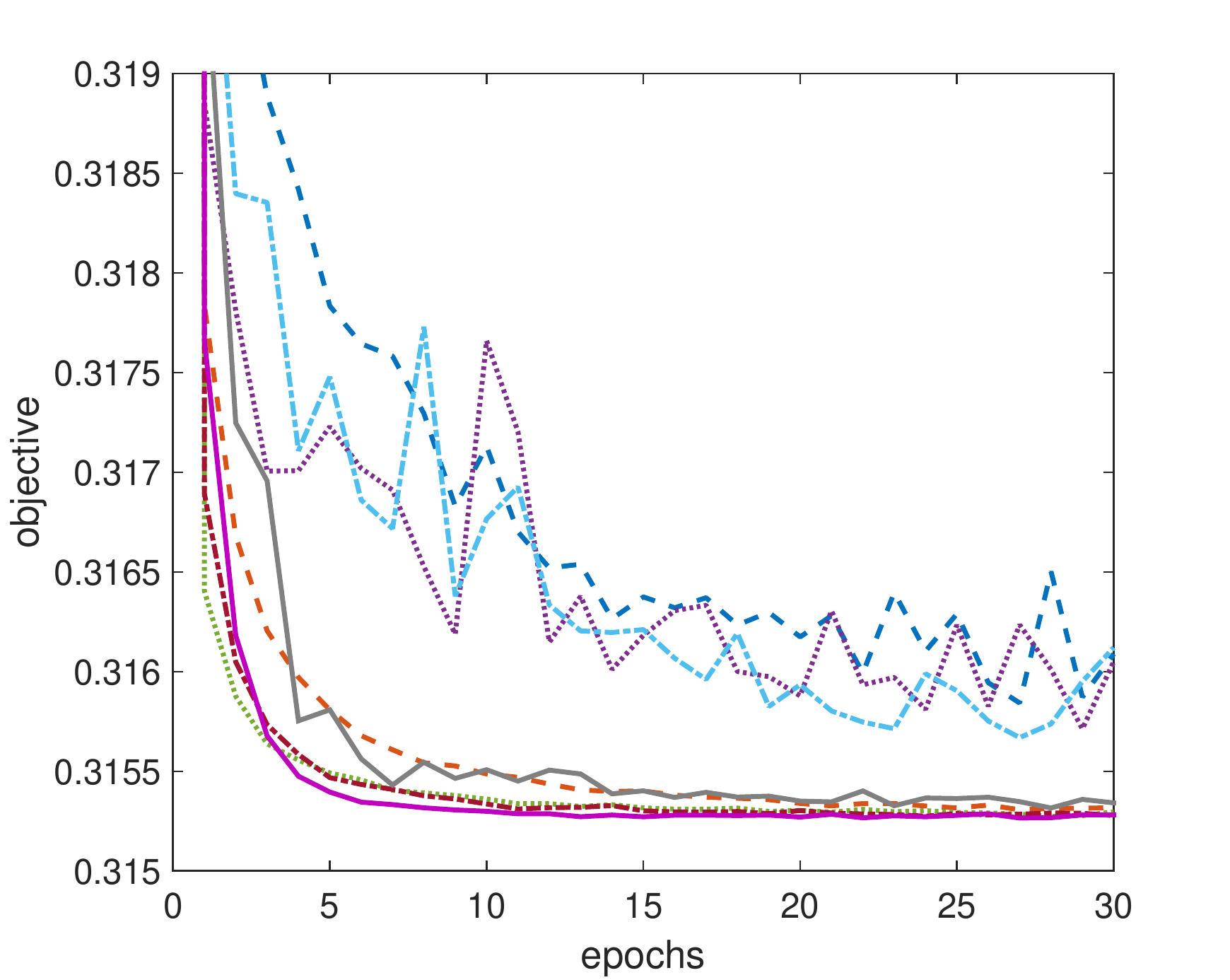}}
\subfigure[{\em avazu-app} ($\lambda = 10^{-8}$).]{\includegraphics[width=.65\columnwidth, height=.42\columnwidth]{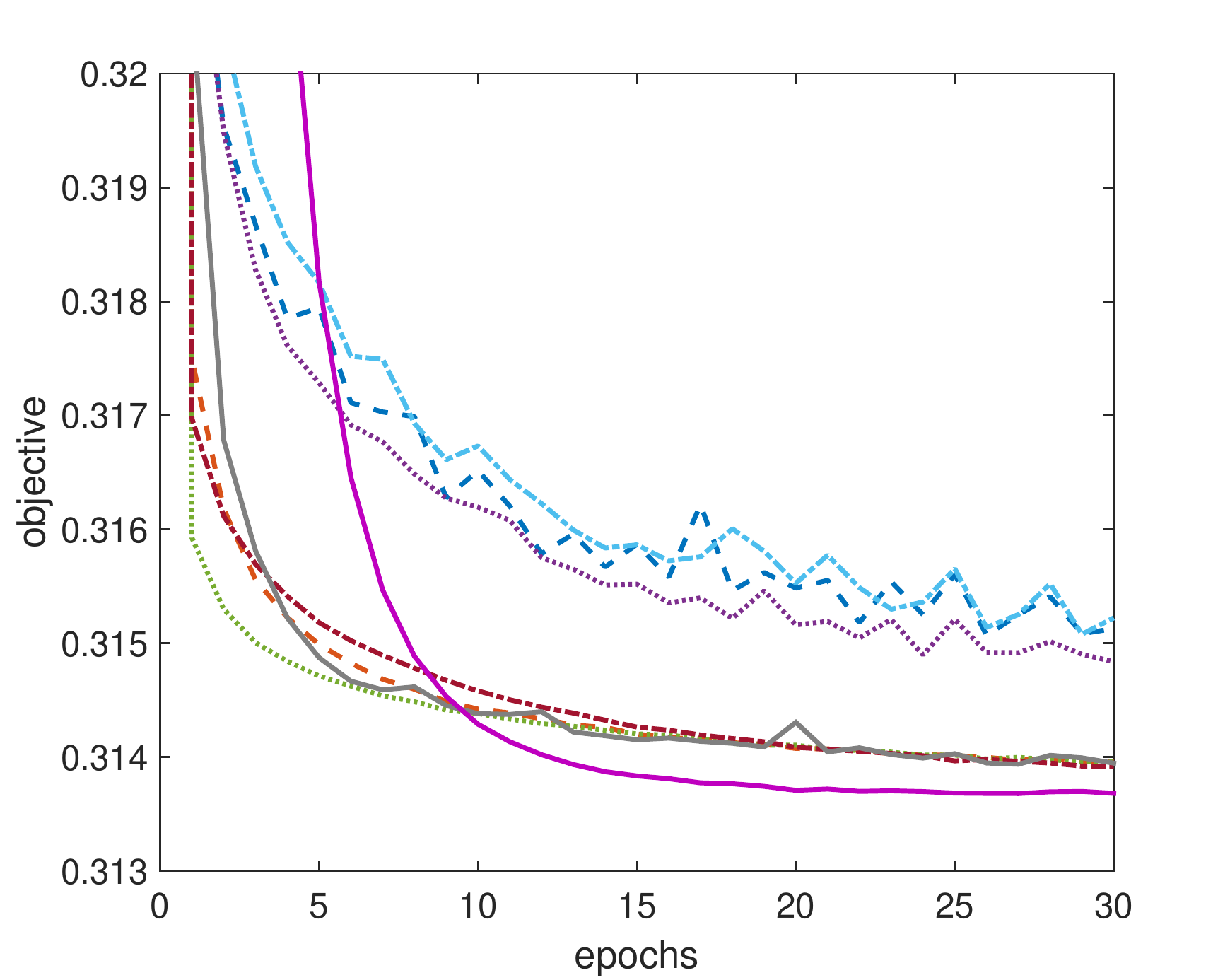}}
\vspace{-.1in}
\\
\subfigure[{\em kddb} ($\lambda = 10^{-6}$).]{\includegraphics[width=.65\columnwidth, height=.42\columnwidth]{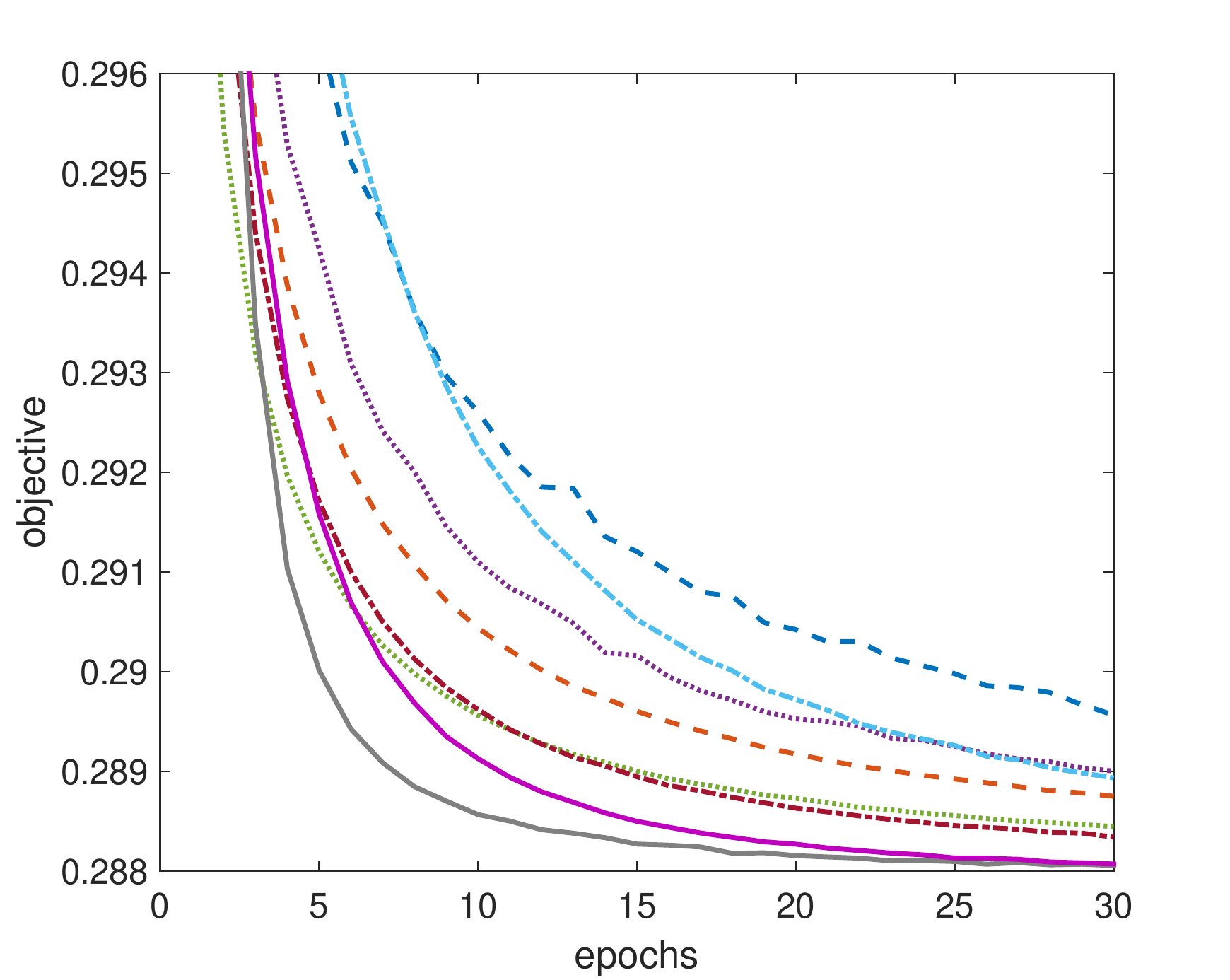}}
\subfigure[{\em kddb} ($\lambda = 10^{-7}$).]{\includegraphics[width=.65\columnwidth, height=.42\columnwidth]{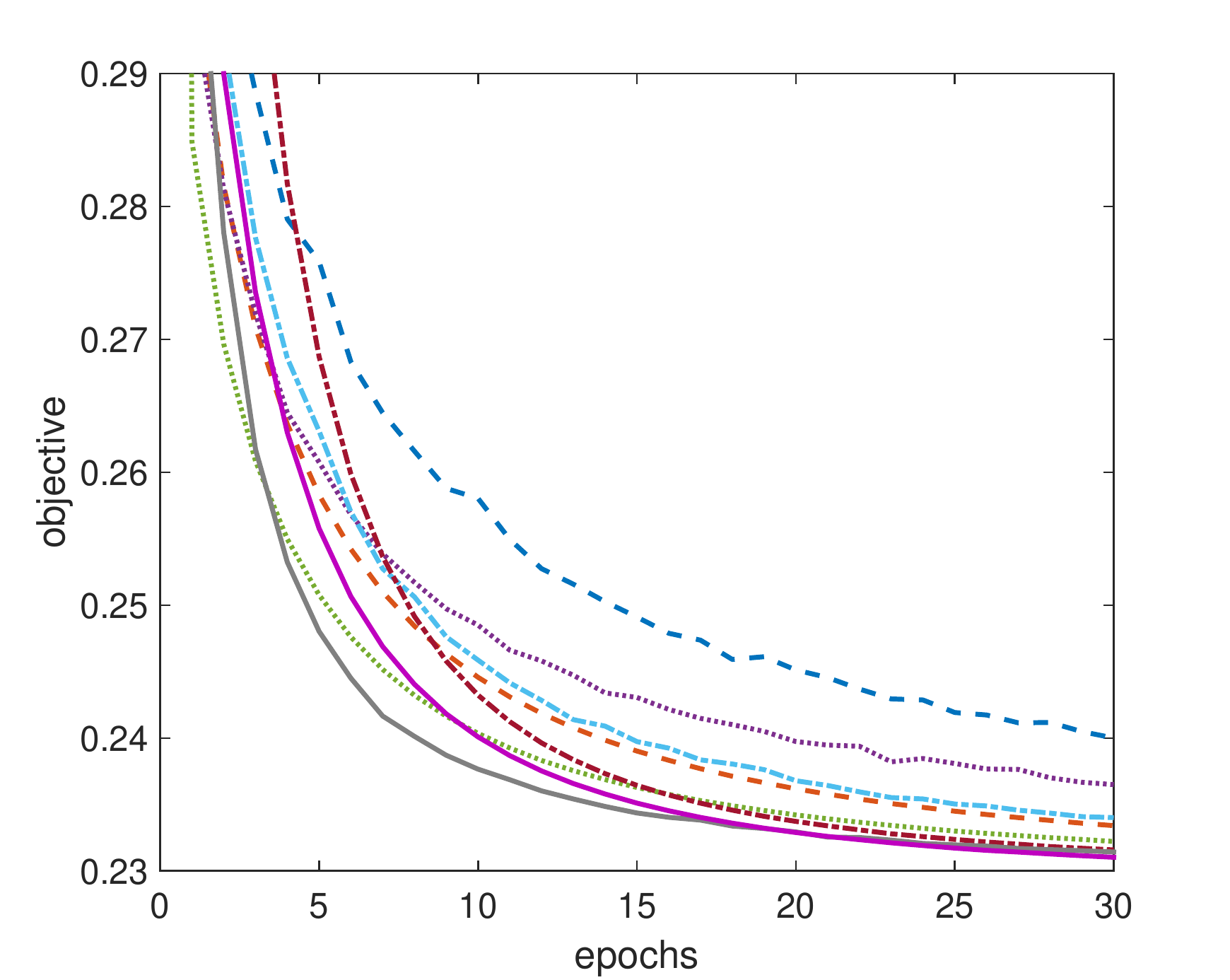}}
\subfigure[{\em kddb} ($\lambda = 10^{-8}$).]{\includegraphics[width=.65\columnwidth, height=.42\columnwidth]{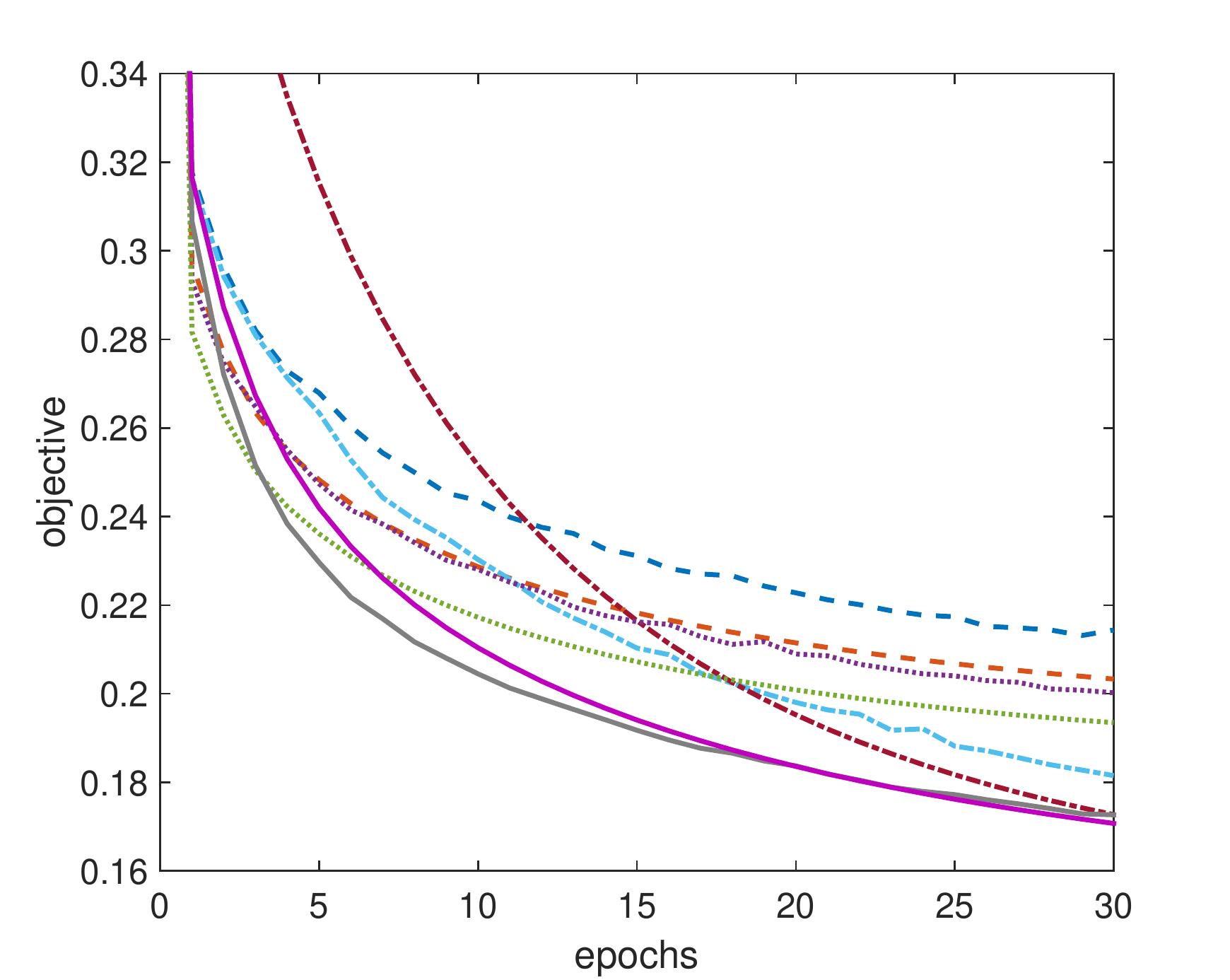}}
\setcounter{subfigure}{0}
\end{center}
\begin{center}
\includegraphics[width=6.5in]{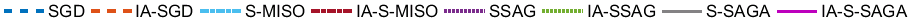}
\end{center}
\vspace{-.18in}
\caption{Convergence with the number of epochs
(both methods with and without iterate averaging are included). The experiment is performed on the same task as in Figure~\ref{classification_dropout_ni} but with more algorithms included.}
\label{classification_dropout}
\vspace{-.15in}
\end{figure*}

\begin{figure*}[t]
\begin{center}
\subfigure[{\em avazu-app} ($p = 0.1$).]{\includegraphics[width=.65\columnwidth, height=.42\columnwidth]{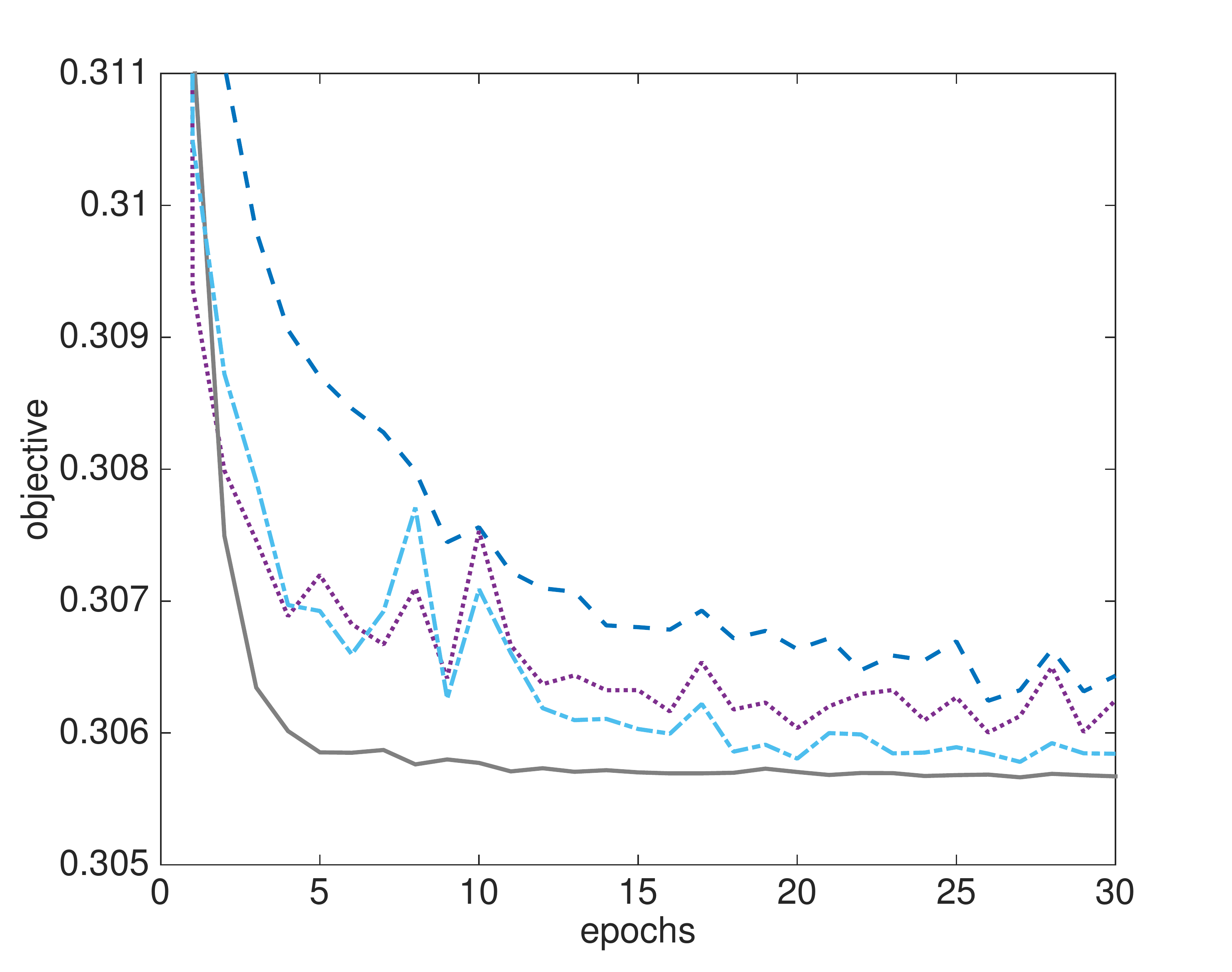}}
\subfigure[{\em avazu-app} ($p = 0.3$).]{\includegraphics[width=.65\columnwidth, height=.42\columnwidth]{avazu_train_7_ni.pdf}}
\subfigure[{\em avazu-app} ($p = 0.5$).]{\includegraphics[width=.65\columnwidth, height=.42\columnwidth]{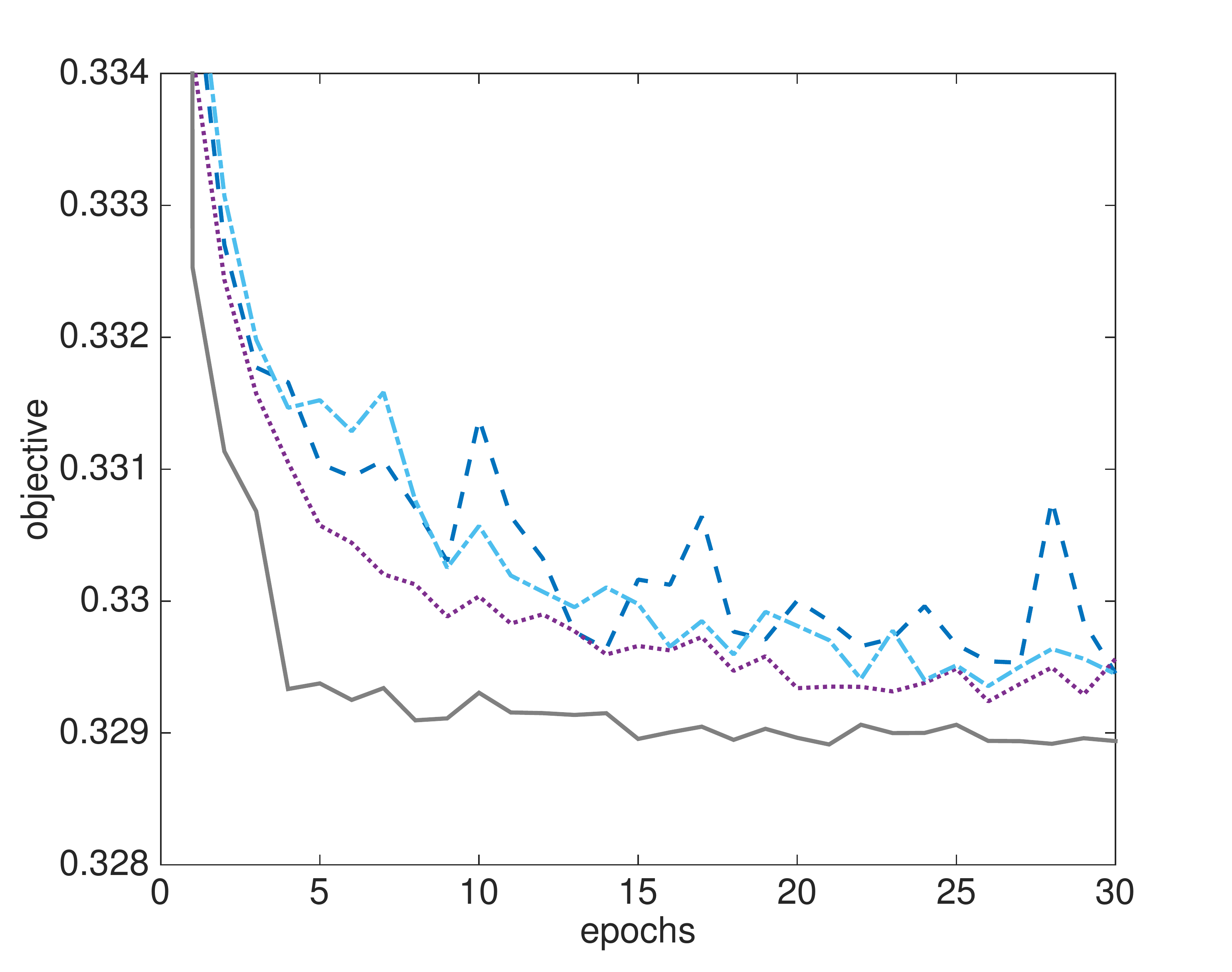}}
\setcounter{subfigure}{0}
\end{center}
\begin{center}
\includegraphics[width=3in]{legend_ni}
\end{center}
\vskip -.18in
\caption{Convergence with the number of epochs (logistic regression with dropout). The dropout probability is varied from $0.1$ to $0.5$.}
\label{classification_var_dropout_ni}
\vspace{-.15in}
\end{figure*}

\section{Experiments}

\begin{figure*}[h]
\begin{center}
\subfigure[{\em avazu-app} ($\lambda = 10^{-6}$).]{\includegraphics[width=.65\columnwidth, height=.42\columnwidth]{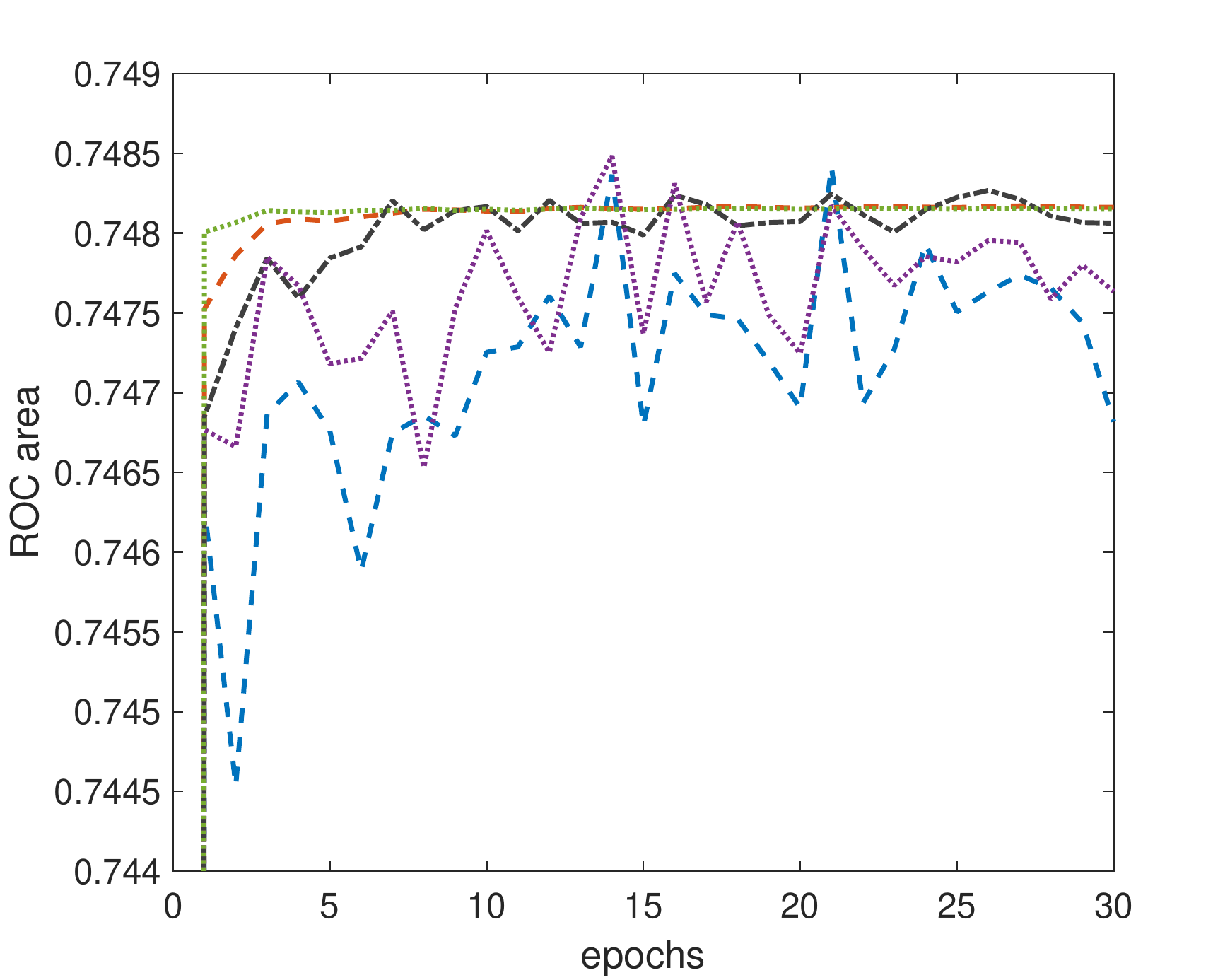}}
\subfigure[{\em avazu-app} ($\lambda = 10^{-7}$).]{\includegraphics[width=.65\columnwidth, height=.42\columnwidth]{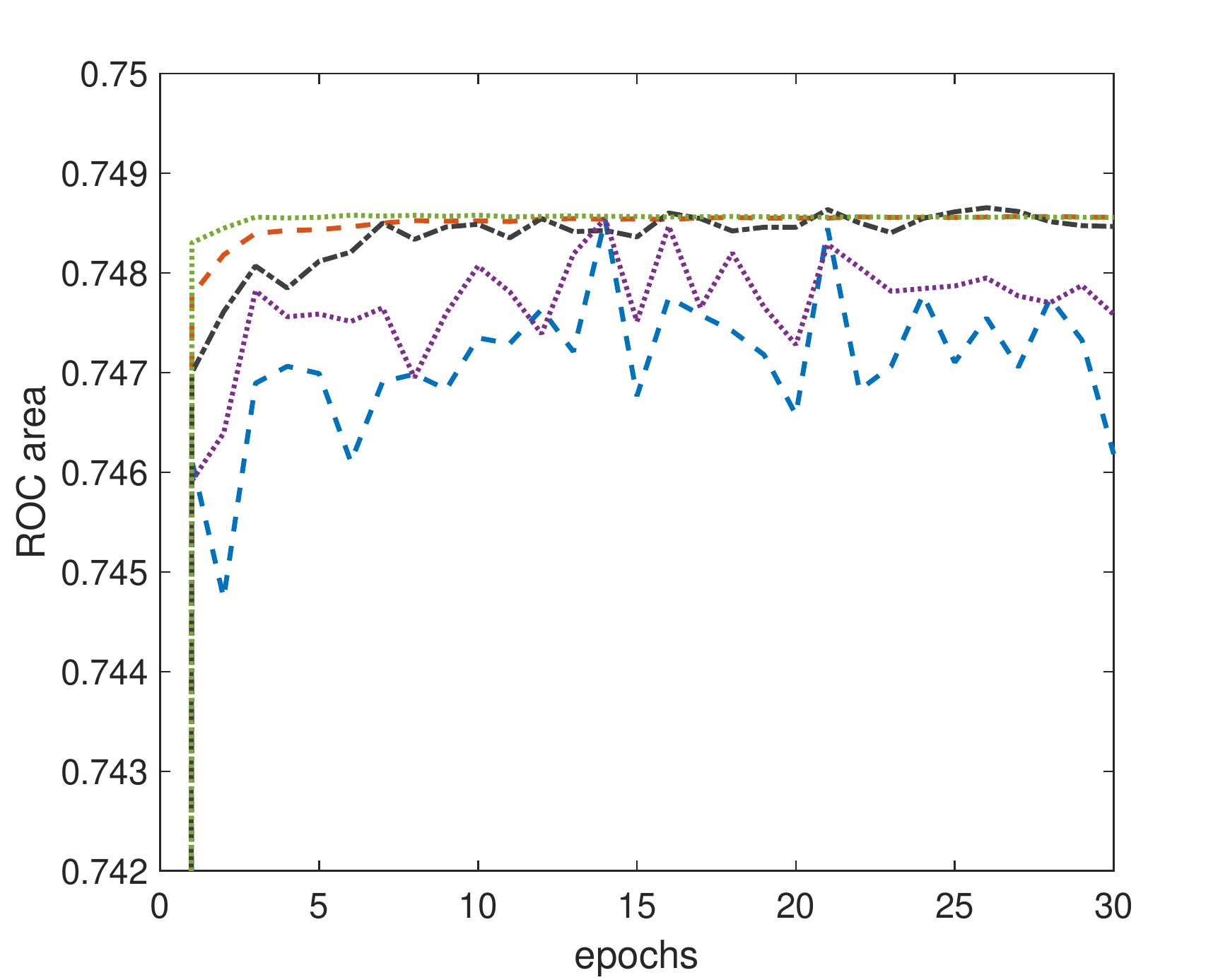}}
\subfigure[{\em avazu-app} ($\lambda = 10^{-8}$).]{\includegraphics[width=.65\columnwidth, height=.42\columnwidth]{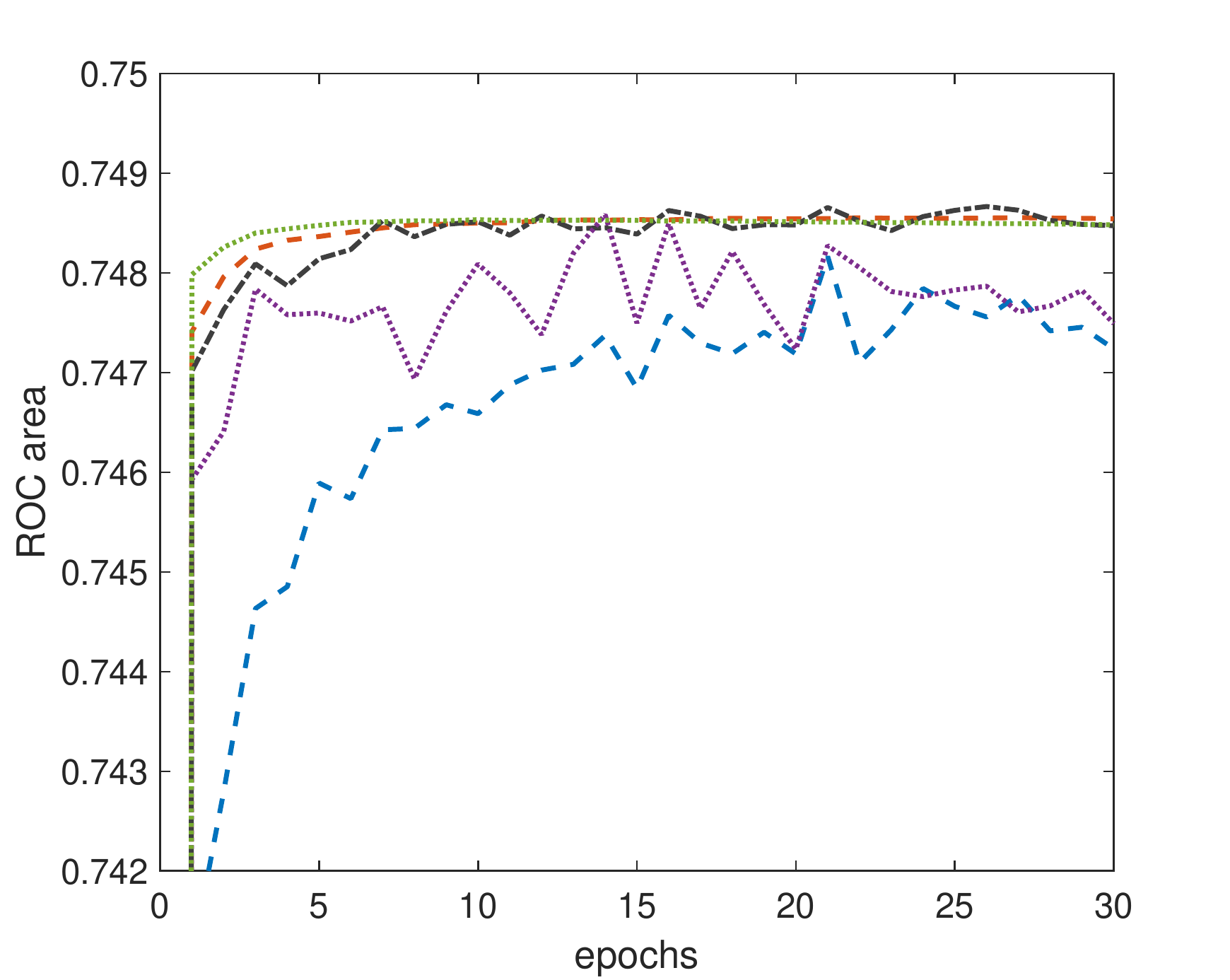}}
\vspace{-.12in}
\\
\subfigure[{\em kddb} ($\lambda = 10^{-5}$).]{\includegraphics[width=.65\columnwidth, height=.42\columnwidth]{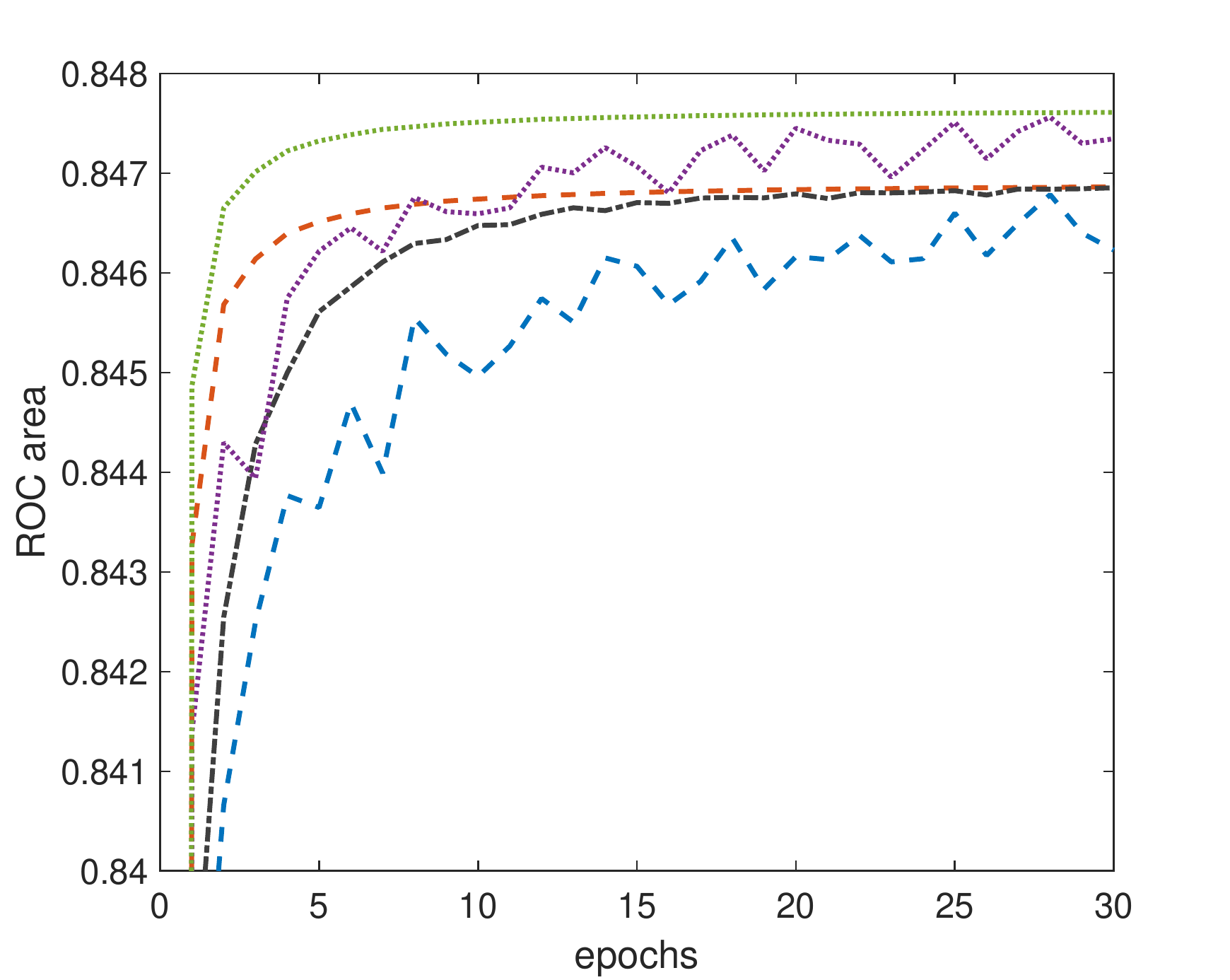}}
\subfigure[{\em kddb} ($\lambda = 10^{-6}$).]{\includegraphics[width=.65\columnwidth, height=.42\columnwidth]{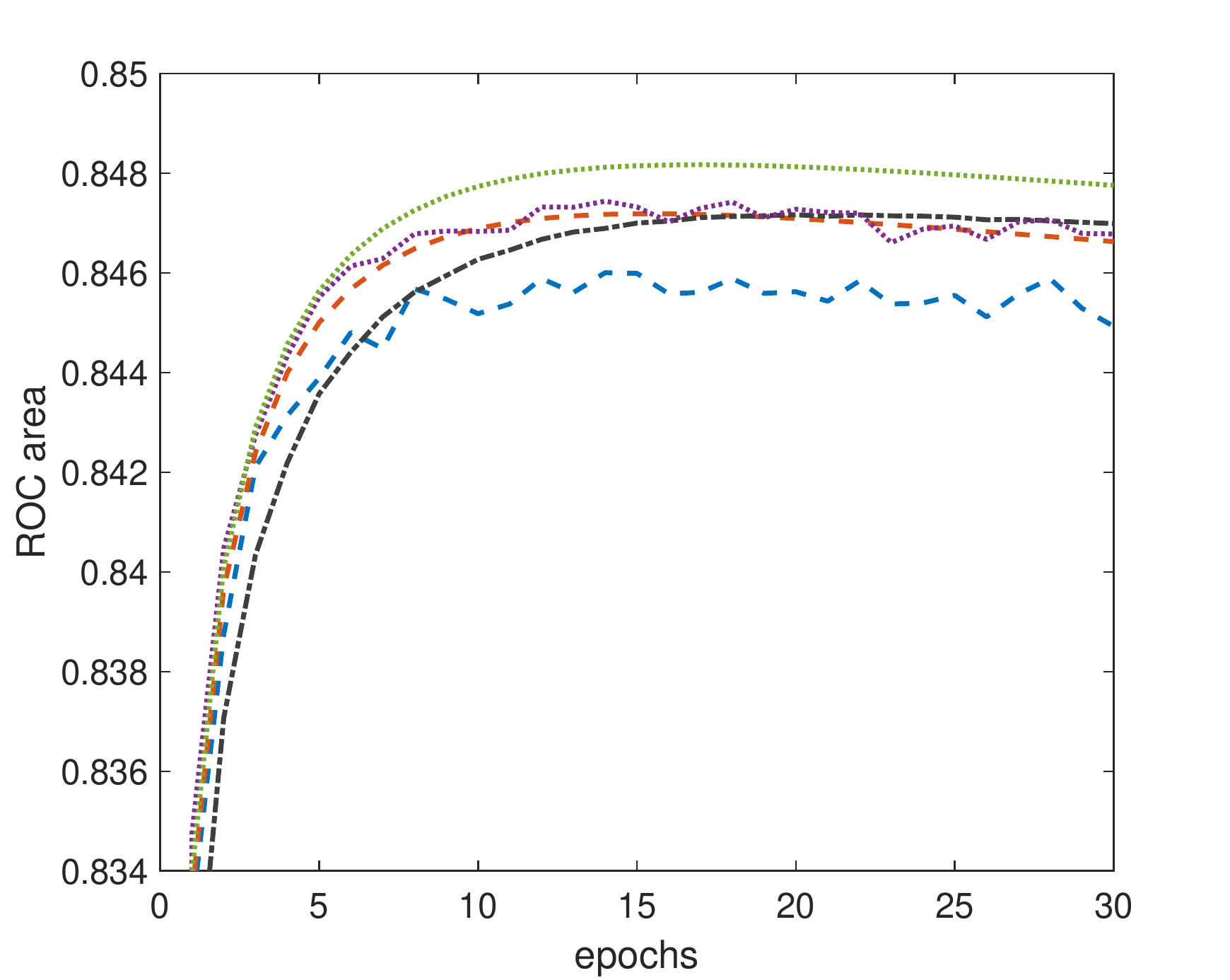}}
\subfigure[{\em kddb} ($\lambda = 10^{-7}$).]{\includegraphics[width=.65\columnwidth, height=.42\columnwidth]{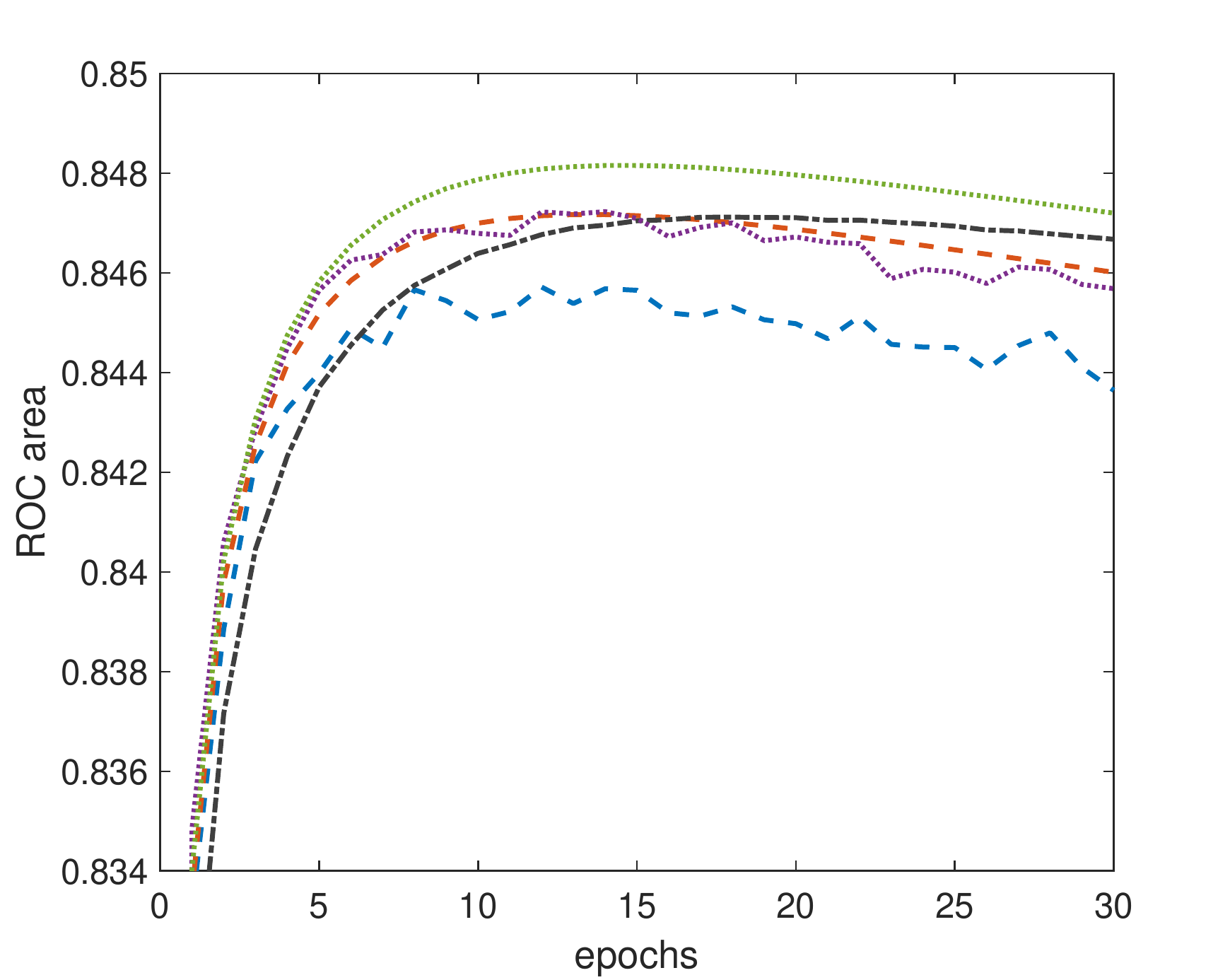}}
\setcounter{subfigure}{0}
\end{center}
\begin{center}
\includegraphics[width=3.5in]{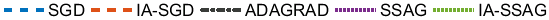}
\end{center}
\vskip -.15in
\caption{Convergence of AUC with the number of epochs. }
\label{roc_dropout}
\vspace{-.05in}
\end{figure*}

In this section, 
we perform experiments on 
logistic regression (Section~\ref{sec:dropout_sparse}) and AUC maximization (Section~\ref{sec:auc}).

\subsection{Logistic Regression with Dropout}
\label{sec:dropout_sparse}

Consider the $\ell_2$-regularized logistic regression model with dropout noise, 
with dropout probability $p=0.3$.
This can be formulated as the following optimization
problem:
\begin{equation} \label{eq:logi}
\min_\theta \frac{1}{n}\sum_{i=1}^n\CE_{\hat{\xi}_i}[\log(1 + \exp(-y_i\hat{z}_i^T\theta))] + \frac{\lambda}{2}\|\theta\|^2,
\end{equation} 
where $\hat{z}_i = \psi(z_i, \xi_i)$, $z_i$ 
is the feature vector of 
sample
$i$,
and $y_i$ the corresponding class label.
We vary $\lambda \in \{10^{-6}, 10^{-7}, 10^{-8}\}$. 
The smaller the $\lambda$, 
the higher the condition number. 
Experiments are performed on 
two 
high-dimensional  
data sets from the LIBSVM archive (Table~\ref{classification_data}).

\begin{table}[h] 
\vspace{-.1in}
\caption{Data sets used in the logistic regression experiment.}
\label{classification_data}
\begin{center}
\begin{tabular}{cccc}
\hline
& \#training & \#testing & dimensionality \\ \hline
{\em avazu-app}  &  12,642,186 & 1,953,951 & 1,000,000       \\
{\em kddb}  &  19,264,097 & 748,401 & 29,890,095        \\
\hline
\end{tabular}
\end{center}
\vspace{-.1in}
\end{table} 

\subsubsection{Comparison with SGD and S-MISO}

The 
proposed SSAG and
S-SAGA
are compared with
SGD and
S-MISO.
From Proposition~\ref{lemma:opt_beta_conv}, we use a slightly larger $\beta_t = t^{-0.75}$ for better non-asymptotic performance.
As mentioned in the theorems, the stepsize schedule is $\eta_t=c/(\gamma + t)$. 
We fix $c=2/\lambda$ for SGD, SSAG, S-SAGA, and $c=2n$ for S-MISO as suggested in \cite{bietti2016stochastic}. 
We then select $\gamma$ from a number of possible values (e.g., powers of tens and five times powers of tens) by monitoring the training objective.
To reduce statistical variability, results are averaged over five repetitions. 


As all methods under comparison have the same
iteration complexities,
Figure~\ref{classification_dropout_ni} shows
convergence of the training objective with 
the number of epochs.
The expectation
in (\ref{eq:logi}) is estimated from $5$ perturbed samples.
As can be seen, S-SAGA significantly outperforms all the others. In particular, it
reaches a much lower objective value 
when the condition number is large
($\lambda = 10^{-8}$).
SSAG and S-MISO have similar convergence behavior 
and converge faster than SGD. However, S-MISO requires much more memory 
than SSAG.
A comparison of the
additional memory 
(relative to SGD) 
used by each method 
is shown in Table~\ref{classification_memory}.

\begin{table}[h]
 \vskip -.1in 
\caption{Additional memory (relative to SGD) required by the various algorithms in the logistic regression experiment.}
\label{classification_memory}
\begin{center}
\begin{tabular}{cccc}
\hline
&  S-MISO & SSAG & S-SAGA \\ \hline
{\em avazu-app}  &  3.1GB &  7.6MB &   104.1MB    \\
{\em kddb}  & 8.9GB  & 147 MB &     375MB    \\
\hline
\end{tabular}
\end{center}
\vskip -.1in
\end{table} 

To see how $a_t$ differs from $a_t^*$ 
in 
(\ref{em_opt_a}), we perform an experiment using a subset of {\em covertype} data from the LIBSVM archive. 
The expectations in $a_t^*$ are again approximated by randomly sampling 5 perturbations for each sample. 
Empirically, $\max_{t\geq 2} |a_t - a_t^*|/|a_t^*|$ is of the order $0.01$, indicating that
$a_t$ is a reasonable estimate even in the early iterations.

\subsubsection{Use of Iterate Averaging}

Figure~\ref{classification_dropout}
adds 
the convergence results 
for iterate averaging 
to Figure~\ref{classification_dropout_ni}
(``IA" 
is prepended 
to the names of algorithms using
iterate averaging).
As can be seen, iterate averaging leads to significant improvements for SGD, S-MISO and SSAG,
but
less prominent improvement 
for S-SAGA.
This agrees with the discussions in Section~\ref{sec:ia}. 
Moreover, 
when 
the condition number is
high,
SSAG has similar convergence as IA-SGD on {\em kddb}.
This demonstrates that SSAG is more robust to 
large 
condition number.  


Overall, when memory is not an issue, S-SAGA is preferred for problems with small or medium
condition numbers, while IA-S-SAGA can be better
for problems with large condition numbers. 
If 
memory
is limited, IA-SSAG is recommended.

\subsubsection{Varying the Dropout Probability}

In this section, we study how the strength of the dropout noise affects
convergence.
We 
use the {\em avazu-app} data set, and
fix $\lambda = 10^{-7}$. The dropout probability $p$ is varied in $\{0.1, 0.3,
0.5\}$. Note that a larger dropout probability leads to larger noise variance. 
Figure~\ref{classification_var_dropout_ni} shows that S-SAGA is very robust to
different noise levels, while S-MISO performs much worse when the dropout
probability increases. 
This demonstrates the theoretical result in
Theorem~\ref{em_conv_ssaga}
that S-SAGA has a smaller variance
constant,
while SGD and SSAG are not sensitive to 
$p$.

\subsection{AUC Maximization with Dropout}
\label{sec:auc}

In this section, we consider maximization of the AUC (i.e., area under the ROC curve).
This is equivalent to 
ranking the positive samples 
higher 
than the negative samples 
\cite{sculley2009large}. 
It can be formulated as  minimizing the following objective with
the squared hinge loss:
\begin{eqnarray*}
\frac{1}{n^+n^-} \!\!\!\!\!\sum_{y_i = 1, y_j = 0}
\!\!\!\!\!\! \CE_{\xi_i, \xi_j}[\max(0, 1 - (\hat{z}_i - \hat{z}_j)^T\theta)^2] + \frac{\lambda}{2}\|\theta\|^2,
\end{eqnarray*}
where $n^+, n^-$ are the numbers of samples belonging to the positive and negative class, respectively.  
We again use the data sets in Table~\ref{classification_data},
and inject dropout noise 
with dropout probability $p=0.3$. 


Even without noise perturbation, AUC maximization is infeasible for existing variance reduction methods.
Methods such as SAG and SAGA  need $O(n^+n^-)$ space.
SVRG trades space with time, and takes $O(n^+n^-)$ time. 
With dropout noise injected, S-MISO requires even more 
space, 
namely, $O(n^+n^-d)$.
S-SAGA requires $O(n^+n^-)$ space, and so is also impractical.
Thus, in the following, we only compare SGD, SSAG and their variants with iterate averaging. 
As a further baseline, we also compare with
ADAGRAD
\cite{duchi2011adaptive},
which  performs SGD with an adaptive learning rate.

Figure~\ref{roc_dropout} shows the 
results.
IA-SSAG is always the fastest, and
has the highest AUC on {\em kddb}.
ADAGRAD and IA-SGD have comparable AUC with IA-SSAG on {\em avazu-app}, but not on {\em kddb}.
ADAGRAD is faster than SGD and SSAG on {\em avazu-app}, but slower than SSAG on  {\em kddb}.
On {\em kddb}, SSAG has comparable performance with IA-SGD, and is better when $\lambda = 10^{-5}$.



\section{Conclusion}

In this paper, we proposed two SGD-like algorithms for finite sums with infinite
data when learning with the linear model. 
The key is to exploit the linear structure in the construction of
control variates. Convergence results on strongly convex problems are provided. 
The proposed methods require small memory cost. Experimental results
demonstrate that the proposed algorithms outperform the state-of-the-art
on large data sets.

\bibliography{as}
\bibliographystyle{icml2018}


\newpage
\onecolumn
\appendix

\section{Extension to Composite Objectives}

In this section, we study extending SSAG to the composite objective:
\begin{eqnarray} \label{eq:em_com_problem}
\min_{\theta} P(\theta)  \equiv F(\theta)+ r(\theta),
\end{eqnarray}
where $r(\theta)$ is a (possibly nonsmooth) convex function. 
We assume that 
the proximal operator of $r$,
$\text{prox}_{\eta r}(q)  \equiv \arg\min_{\theta}\left\{\eta r(\theta) + \frac{1}{2}\|\theta - q\|^2\right\}$,
can be easily computed.
In Algorithms~\ref{alg:ssag1} and \ref{alg:ssag2},
step 8 can then be simply replaced
by 
\[\theta_t  \leftarrow \text{prox}_{\eta_t r}(\theta_{t-1} - \eta_tv_t).\] 
The following Theorem shows that this proximal variant of SSAG still achieves a convergence rate of $O(1/t)$. 

\begin{theorem} \label{em_p_conv}
Assume that $\eta_t = c/(\gamma + t)$ for some $c > 1/\mu$ and $\gamma > 0$ such that $\eta_1 \leq 1/L$.
For the $\{\theta_t\}$ sequence generated from 
proximal SSAG, we have
\begin{eqnarray} \label{eq:em_conv_pvrsgd}
\CE[\|\theta_t - \theta_*\|^2] \leq \frac{\nu_5}{\gamma + t + 1},
\end{eqnarray}
where
$\nu_5 \equiv \max\{\frac{2c^2\sigma_a^2(\gamma+2)}{(\gamma+1)(2c\mu-1) + c\mu}, (\gamma +
1)\|\theta_0 -\theta_*\|^2\}$.
\end{theorem}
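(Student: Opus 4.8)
The plan is to mimic the argument behind Theorem~\ref{em_conv_ssaga}, tracking the squared distance $\CE[\|\theta_t - \theta_*\|^2]$ rather than the function gap (since the proximal step does not by itself guarantee descent in $F$), but the analysis is simpler than for S-SAGA because the variance of the SSAG estimator $v_t$ is already bounded by $\sigma_a^2$, so no auxiliary Lyapunov term is needed. First I would record the fixed-point characterization of the composite minimizer: for every $\eta>0$, $\theta_* = \text{prox}_{\eta r}(\theta_* - \eta\nabla F(\theta_*))$, which follows from $0\in\nabla F(\theta_*)+\partial r(\theta_*)$. Since the proximal SSAG update is $\theta_t = \text{prox}_{\eta_t r}(\theta_{t-1} - \eta_t v_t)$ and $\text{prox}_{\eta_t r}$ is non-expansive, this gives
\[\|\theta_t - \theta_*\|^2 \le \|(\theta_{t-1}-\theta_*) - \eta_t(v_t - \nabla F(\theta_*))\|^2.\]
Expanding the square and taking expectation conditioned on $\theta_{t-1}$, the cross term becomes $-2\eta_t\langle \CE[v_t]-\nabla F(\theta_*),\theta_{t-1}-\theta_*\rangle = -2\eta_t\langle\nabla F(\theta_{t-1})-\nabla F(\theta_*),\theta_{t-1}-\theta_*\rangle$ by the unbiasedness of $v_t$ (Proposition~\ref{prop:em_var}), which by $\mu$-strong convexity of $F$ is at most $-2\eta_t\mu\|\theta_{t-1}-\theta_*\|^2$.

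For the second-order term, unbiasedness of $v_t$ kills the cross term in $\CE\|v_t-\nabla F(\theta_*)\|^2 = \CE\|v_t-\nabla F(\theta_{t-1})\|^2 + \|\nabla F(\theta_{t-1})-\nabla F(\theta_*)\|^2$, and the first summand is at most $\sigma_a^2$ by Proposition~\ref{prop:em_var} together with the bounded-variance assumption. The leftover piece $\eta_t^2\|\nabla F(\theta_{t-1})-\nabla F(\theta_*)\|^2$ is merged with part of the cross term: using co-coercivity of $\nabla F$ (an $L$-smooth convex function) and the stepsize bound $\eta_t\le\eta_1\le 1/L$, the combination $-2\eta_t\langle\nabla F(\theta_{t-1})-\nabla F(\theta_*),\theta_{t-1}-\theta_*\rangle + \eta_t^2\|\nabla F(\theta_{t-1})-\nabla F(\theta_*)\|^2$ is a negative multiple of $\langle\nabla F(\theta_{t-1})-\nabla F(\theta_*),\theta_{t-1}-\theta_*\rangle$, hence of $\|\theta_{t-1}-\theta_*\|^2$. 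Collecting the pieces and taking total expectation yields a one-step recursion of the form $\CE\|\theta_t-\theta_*\|^2 \le (1 - c_0\eta_t\mu)\CE\|\theta_{t-1}-\theta_*\|^2 + \eta_t^2\sigma_a^2$ for a suitable constant $c_0$, the value of $c_0$ being what eventually produces the factor $2c\mu-1$ in $\nu_5$.

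Finally I would close by induction on $t$ with the ansatz $\CE\|\theta_t-\theta_*\|^2\le \nu_5/(\gamma+t+1)$ and $\eta_t = c/(\gamma+t)$. The base case $t=0$ holds because $\nu_5\ge(\gamma+1)\|\theta_0-\theta_*\|^2$; in the inductive step one substitutes the hypothesis, uses $c\mu>1$ so that the $-c_0\eta_t\mu$ contraction term dominates the $\eta_t^2\sigma_a^2$ term, and the bookkeeping of the $(\gamma+t)$ and $(\gamma+t+1)$ factors pins down the first argument $\tfrac{2c^2\sigma_a^2(\gamma+2)}{(\gamma+1)(2c\mu-1)+c\mu}$ of the $\max$ defining $\nu_5$. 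The main obstacle is the second-order variance term: one must show that the extra $\|\nabla F(\theta_{t-1})-\nabla F(\theta_*)\|^2$ incurred by bounding $\CE\|v_t-\nabla F(\theta_*)\|^2$ (instead of the more convenient $\CE\|v_t-\nabla F(\theta_{t-1})\|^2$, which is the natural quantity controlled by Proposition~\ref{prop:em_var}) does not degrade the contraction factor enough to lose the $O(1/t)$ rate; this is exactly where $\eta_1\le 1/L$ and co-coercivity enter, mirroring the corresponding steps in the proofs of Theorems~\ref{em_conv} and~\ref{em_conv_ssaga}.
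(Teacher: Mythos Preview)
Your route is sound for the $O(1/t)$ rate but differs from the paper's and will \emph{not} produce the constant $\nu_5$ as stated. The paper does not use the fixed-point/non-expansiveness bound $\|\theta_t-\theta_*\|^2\le\|(\theta_{t-1}-\theta_*)-\eta_t(v_t-\nabla F(\theta_*))\|^2$. Instead it invokes Xiao's proximal-gradient inequality (Lemma~\ref{lemma:proximal_recursive}), which for $\eta_t\le 1/L$ yields
\[
\|\theta_t-\theta_*\|^2 \le (1-\eta_t\mu)\|\theta_{t-1}-\theta_*\|^2 - 2\eta_t[P(\theta_t)-P(\theta_*)] - 2\eta_t\langle v_t-\nabla F(\theta_{t-1}),\theta_t-\theta_*\rangle.
\]
The last inner product is split via the deterministic point $\hat\theta_t=\text{prox}_{\eta_t r}(\theta_{t-1}-\eta_t\nabla F(\theta_{t-1}))$ and non-expansiveness, giving $\eta_t\|v_t-\nabla F(\theta_{t-1})\|^2$ plus a term with zero mean. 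Crucially, one then lower-bounds $P(\theta_t)-P(\theta_*)\ge(\mu/2)\|\theta_t-\theta_*\|^2$, producing a $-\eta_t\mu\CE\|\theta_t-\theta_*\|^2$ term on the \emph{right} side. Rearranging gives the recursion
\[
\CE\|\theta_t-\theta_*\|^2 \le \frac{1-\eta_t\mu}{1+\eta_t\mu}\,\CE\|\theta_{t-1}-\theta_*\|^2 + \frac{2\eta_t^2\sigma_a^2}{1+\eta_t\mu},
\]
and it is precisely the $(1-\eta_t\mu)/(1+\eta_t\mu)$ contraction that, under induction with $k=\gamma+t$, forces $\nu_5\ge \frac{2c^2\sigma_a^2(k+1)}{k(2c\mu-1)+c\mu}$, maximized at $k=\gamma+1$.

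Your co-coercivity argument gives at best $\CE\|\theta_t-\theta_*\|^2\le(1-\eta_t\mu)\CE\|\theta_{t-1}-\theta_*\|^2+\eta_t^2\sigma_a^2$ (so $c_0=1$, not a value that yields $2c\mu-1$), and the induction then delivers the constant $\tfrac{c^2\sigma_a^2}{c\mu-1}$ rather than the first argument of $\nu_5$. That still proves the $O(1/t)$ convergence, and indeed the paper itself uses exactly your non-expansiveness route for S-SAGA (Theorem~\ref{em_conv_ssaga}); but here it chose Xiao's lemma, which trades a factor $2$ in the variance term for the sharper $(1+\eta_t\mu)^{-1}$ contraction and hence the specific $\nu_5$ in the statement.
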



As in Section~\ref{sec:ia},
iterate averaging 
can be used
to reduce the
$\kappa^2$
term to $\kappa$.

\begin{theorem} \label{em_ap_conv}
Assume that
$\eta_t = 2/(\mu(\gamma + t))$ and $\gamma > 0$ such that $\eta_1 \leq 1/L$.
For the $\{\theta_t\}$ sequence generated from 
proximal SSAG, we have
\begin{eqnarray*} \label{eq:em_a_conv_pvrsgd}
\lefteqn{\CE[P(\bar{\theta}_T) - P(\theta_*)]}\\
& \leq &  \frac{\mu\gamma(\gamma - 1)}{2T(2\gamma + T - 1)}\|\theta_0 - \theta_*\|^2  + \frac{4\sigma_a^2}{\mu(2\gamma + T - 1)}.
\end{eqnarray*}
\end{theorem}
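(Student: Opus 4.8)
The plan is to fuse the one-step argument behind Theorem~\ref{em_p_conv} with the weighted iterate-averaging argument behind Theorem~\ref{em_a_conv}. The one new ingredient is that the per-iteration recursion must \emph{keep} the function-value gap $P(\theta_t)-P(\theta_*)$ on the left-hand side, rather than discarding it as in the distance-only analysis used for Theorem~\ref{em_p_conv}; once such a recursion is in hand, plugging in $\eta_t=2/(\mu(\gamma+t))$ and telescoping with suitable weights reproduces exactly the two terms in the claimed bound.

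\emph{Step 1 (per-iteration recursion).} Writing $\theta_t=\text{prox}_{\eta_t r}(\theta_{t-1}-\eta_t v_t)$, the prox optimality condition gives $g_t\equiv\eta_t^{-1}(\theta_{t-1}-\theta_t)-v_t\in\partial r(\theta_t)$. I would combine the $L$-smooth upper bound for $F(\theta_t)$ expanded at $\theta_{t-1}$, the $\mu$-strongly convex lower bound for $F(\theta_*)$ expanded at $\theta_{t-1}$, the subgradient inequality $r(\theta_*)\ge r(\theta_t)+\langle g_t,\theta_*-\theta_t\rangle$, and the identity $2\langle\theta_{t-1}-\theta_t,\theta_t-\theta_*\rangle=\|\theta_{t-1}-\theta_*\|^2-\|\theta_t-\theta_*\|^2-\|\theta_{t-1}-\theta_t\|^2$. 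Using $\eta_t\le\eta_1\le1/L$ to drop the resulting $(\tfrac{L}{2}-\tfrac{1}{2\eta_t})\|\theta_{t-1}-\theta_t\|^2\le0$ term leaves
\[
P(\theta_t)-P(\theta_*)\le\frac{1}{2\eta_t}\big(\|\theta_{t-1}-\theta_*\|^2-\|\theta_t-\theta_*\|^2\big)-\frac{\mu}{2}\|\theta_{t-1}-\theta_*\|^2+\langle\nabla F(\theta_{t-1})-v_t,\,\theta_t-\theta_*\rangle .
\]
To control the last, stochastic term under conditional expectation I would compare $\theta_t$ with the noise-free point $\hat\theta_t\equiv\text{prox}_{\eta_t r}(\theta_{t-1}-\eta_t\nabla F(\theta_{t-1}))$: by Proposition~\ref{prop:em_var} ($\CE[v_t]=\nabla F(\theta_{t-1})$) the inner product with $\hat\theta_t-\theta_*$ vanishes in expectation, while nonexpansiveness of the prox gives $\|\theta_t-\hat\theta_t\|\le\eta_t\|v_t-\nabla F(\theta_{t-1})\|$, so Cauchy--Schwarz together with Proposition~\ref{prop:em_var} and the assumed variance bound $\CE[(\phi_{i_t}'-a_t)^2\|\hat x_{i_t}\|^2]\le\sigma_a^2$ yield $\CE[\langle\nabla F(\theta_{t-1})-v_t,\theta_t-\theta_*\rangle]\le\eta_t\sigma_a^2$. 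Multiplying by $2\eta_t$ gives
\[
2\eta_t\big(\CE[P(\theta_t)]-P(\theta_*)\big)\le(1-\mu\eta_t)\,\CE[\|\theta_{t-1}-\theta_*\|^2]-\CE[\|\theta_t-\theta_*\|^2]+2\eta_t^2\sigma_a^2 .
\]

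\emph{Step 2 (telescoping and averaging).} With $\eta_t=2/(\mu(\gamma+t))$ one has $\tfrac{1}{2\eta_t}=\tfrac{\mu(\gamma+t)}{4}$ and $1-\mu\eta_t=\tfrac{\gamma+t-2}{\gamma+t}$. Dividing the recursion by $2\eta_t$ and multiplying by the weight $w_t\equiv\gamma+t-1$, the coefficient of $\CE[\|\theta_{t-1}-\theta_*\|^2]$ becomes $\tfrac{\mu}{4}(\gamma+t-1)(\gamma+t-2)$ and that of $\CE[\|\theta_t-\theta_*\|^2]$ becomes $\tfrac{\mu}{4}(\gamma+t)(\gamma+t-1)$; since the former equals the latter evaluated at $t-1$, summing $t=1,\dots,T$ telescopes the distance terms down to $\tfrac{\mu}{4}\gamma(\gamma-1)\|\theta_0-\theta_*\|^2$, while the noise terms sum to at most $\sum_{t=1}^T\tfrac{2(\gamma+t-1)}{\mu(\gamma+t)}\sigma_a^2\le\tfrac{2T}{\mu}\sigma_a^2$. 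Because $\sum_{t=1}^T w_t=\tfrac{T(2\gamma+T-1)}{2}$ is precisely the normalizing constant in (\ref{c}), convexity of $P$ and Jensen's inequality applied to the weighted average $\bar\theta_T$ give $\CE[P(\bar\theta_T)]-P(\theta_*)\le\big(\sum_t w_t\big)^{-1}\sum_t w_t\big(\CE[P(\theta_t)]-P(\theta_*)\big)$; dividing the telescoped bound by $\sum_t w_t$ produces exactly $\frac{\mu\gamma(\gamma-1)}{2T(2\gamma+T-1)}\|\theta_0-\theta_*\|^2+\frac{4\sigma_a^2}{\mu(2\gamma+T-1)}$.

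\emph{Main obstacle.} The delicate part is Step~1: keeping the $P(\theta_t)-P(\theta_*)$ term (which the proof of Theorem~\ref{em_p_conv} throws away) while bounding the cross term $\langle\nabla F(\theta_{t-1})-v_t,\theta_t-\theta_*\rangle$ with a sharp constant. Obtaining coefficient exactly $2\eta_t^2\sigma_a^2$ — via the $\hat\theta_t$ comparison and prox nonexpansiveness, rather than a cruder split of the noise — is what makes the final variance term come out as $4\sigma_a^2/(\mu(2\gamma+T-1))$ instead of something larger. Everything downstream (the telescoping, the index bookkeeping in the weights, and the Jensen step) is then routine, and the condition $\eta_1\le1/L$ is used only to kill the $\|\theta_{t-1}-\theta_t\|^2$ term in Step~1.
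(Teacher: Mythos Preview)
Your proposal is correct and follows essentially the same route as the paper. The paper's proof simply starts from the inequality \eqref{eq:p_recursive} already established in the proof of Theorem~\ref{em_p_conv} (via Lemma~\ref{lemma:proximal_recursive} and the same $\hat\theta_t$/nonexpansiveness trick you describe), rearranges it to isolate $2\eta_t\CE[P(\theta_t)-P(\theta_*)]$, and then carries out exactly your Step~2 telescoping with weights $\gamma+t-1$ and Jensen; your Step~1 is just a self-contained re-derivation of \eqref{eq:p_recursive}, so the two arguments coincide.
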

Similar results also hold for
proximal S-SAGA.

\section{proof}


\subsection{Proof of Proposition~\ref{prop:asym_unbiased_est}}
\begin{proof} 
\begin{eqnarray*}
\CE[z_t] & = & \nabla F(\theta_{t-1})  - a_t\CE[\hat{x}_{i_t}] + a_t\CE[\tx_t] \\
& = & \nabla F(\theta_{t-1})  - a_t\CE[\hat{x}_{i_t}] + a_t\left(1 - \frac{1}{t}\right)\tx_{t-1} + a_t\frac{1}{t}\CE[\hat{x}_{i_t}] \\
& =&
\nabla F(\theta_{t-1})+ a_t\left(1 - \frac{1}{t}\right)(\tx_{t-1} -\CE[\hat{x}_{i_t}]).
\end{eqnarray*}
\end{proof} 


\subsection{Proof of Proposition~\ref{prop:em_var}}
\begin{proof} 
The estimator $v_t$ is unbiased simply due to $\CE[\hat{x}_{i_t}] = \tx$. 
\begin{eqnarray*} 
\lefteqn{\CE[\|v_t - \nabla F(\theta_{t-1})\|^2]}\nonumber \\
& = & \CE[\|(\phi_{i_t}'(\hat{x}_{i_t} ^T\theta_{t-1}) - a_t)\hat{x}_{i_t}  + a_t\tx + \nabla g(\theta_{t-1})-  \nabla F(\theta_{t-1})\|^2] \\
& = & \CE[\|(\phi_{i_t}'(\hat{x}_{i_t} ^T\theta_{t-1}) - a_t)\hat{x}_{i_t}  - \CE[(\phi'(\hat{x}^T\theta_{t-1}) - a_t)\hat{x}]\|^2] \\
 & \leq & \CE[(\phi_{i_t}'(\hat{x}_{i_t}^T\theta_{t-1}) - a_t)^2\|\hat{x}_{i_t}\|^2],
\end{eqnarray*}
where the inequality follows from $\CE[\|x - \CE[x]\|^2] = \CE[\|x\|^2] - \|\CE[x]\|^2$.
\end{proof} 


\subsection{Proof of Proposition~\ref{lemma:opt_beta}}
\begin{proof} 
This lemma is a consequence of (\cite{ruszczynski1980feasible}, Lemma~1). To see this, we just need to verify 
that all the conditions are satisfied by the problem at hand. 
Condition (a) of (\cite{ruszczynski1980feasible}, Lemma~1) is obviously satisfied by treating set $Z$ in (\cite{ruszczynski1980feasible}, Lemma~1) as entire real space $\R$. 
Condition (b) is exactly the finite fourth moment assumption.
Conditions (c)-(d) hold due to the stepsize policy of $\beta_t$. 
Condition (e) follows from the Lipschitz property
of loss $\phi$ from Assumption~\ref{assumption:lipschitz}, bounded estimator $v_t$, and stepsize rules of $\eta_t$ and $\beta_t$.
\end{proof}


\subsection{Proof of Proposition~\ref{lemma:opt_beta_conv}}
\begin{proof} 
Conditioned on the history, taking expectation over $i_{t-1}$ and $\xi_{t-1}$, for any $t \geq 2$, we have
\begin{eqnarray*}
\lefteqn{\CE[s_t^2(a_t - a_t^*)^2]} \\
& = &  \CE[s_t^2\left(\tilde{a}_t/s_t - a_t^*\right)^2] \\
& = &  \CE\left[s_t^2\left(\frac{(1-\beta_{t-1})\tilde{a}_{t-1} + \beta_{t-1}\phi'_{i_{t-1}}(\hat{x}_{i_{t-1}}^T\theta_{t-2})\|\hat{x}_{i_{t-1}}\|^2}{(1-\beta_{t-1})s_{t-1} + \beta_{t-1}\|\hat{x}_{i_{t-1}}\|^2} - a_t^*\right)^2\right] \\
& = &  \CE\left[s_t^2\left(\frac{(1-\beta_{t-1})(\tilde{a}_{t-1} - s_{t-1}a_t^*) + \beta_{t-1}\|\hat{x}_{i_{t-1}}\|^2(\phi'_{i_{t-1}}(\hat{x}_{i_{t-1}}^T\theta_{t-2}) - a_t^*)}{(1-\beta_{t-1})s_{t-1} + \beta_{t-1}\|\hat{x}_{i_{t-1}}\|^2} \right)^2\right] \\
& = &  \CE\left[\left((1-\beta_{t-1})(s_{t-1}a_{t-1} - s_{t-1}a_{t-1}^* + s_{t-1}a_{t-1}^* - s_{t-1}a_t^*) + \beta_{t-1}\|\hat{x}_{i_{t-1}}\|^2(\phi'_{i_{t-1}}(\hat{x}_{i_{t-1}}^T\theta_{t-2}) - a_t^*) \right)^2\right] \\
& = &  \CE\left[\left((1-\beta_{t-1})s_{t-1}(a_{t-1} - a_{t-1}^*) + s_{t}(a_{t-1}^* - a_t^*) + \beta_{t-1}\|\hat{x}_{i_{t-1}}\|^2(\phi'_{i_{t-1}}(\hat{x}_{i_{t-1}}^T\theta_{t-2}) - a_{t-1}^*)\right)^2\right] \\
& = &  \CE\left[(1-\beta_{t-1})^2s_{t-1}^2(a_{t-1} - a_{t-1}^*)^2\right] + \CE\left[s_t^2(a_{t-1}^* - a_{t}^*)^2\right] 
 + \CE\left[\beta_{t-1}^2\|\hat{x}_{i_{t-1}}\|^4(\phi'_{i_{t-1}}(\hat{x}_{i_{t-1}}^T\theta_{t-2}) - a_{t-1}^*)^2\right] \\ 
&& + \CE\left[2(1 - \beta_{t-1})s_{t-1}s_t(a_{t-1} - a_{t-1}^*)(a_{t-1}^* - a_t^*)\right],
\end{eqnarray*}
where the last equality holds as $\CE[\|\hat{x}_{i_{t-1}}\|^2(\phi'_{i_{t-1}}(\hat{x}_{i_{t-1}}^T\theta_{t-2}) - a_{t-1}^*)] = 0$. Then, we further bound the inner product as
\begin{eqnarray*}
2s_{t-1}s_t(a_{t-1} - a_{t-1}^*)(a_{t-1}^* - a_t^*) \leq \alpha_ts_{t-1}^2(a_{t-1} - a_{t-1}^*)^2 + \frac{s_t^2}{\alpha_t}(a_{t-1}^* - a_t^*)^2
\end{eqnarray*}
for some $\alpha_t > 0$. Combining this, we obtain
\begin{eqnarray*}
\lefteqn{\CE[s_t^2(a_t - a_t^*)^2]} \\
& \leq &  \CE\left[(1-\beta_{t-1})((1-\beta_{t-1}) + \alpha_t)\right]s_{t-1}^2(a_{t-1} - a_{t-1}^*)^2\\
&& + \CE\left[(1 + \frac{1}{\alpha_t})s_t^2(a_{t-1}^* - a_{t}^*)^2\right] 
 + \CE\left[\beta_{t-1}^2\|\hat{x}_{i_{t-1}}\|^4(\phi'_{i_{t-1}}(\hat{x}_{i_{t-1}}^T\theta_{t-2}) - a_{t-1}^*)^2\right].
\end{eqnarray*}
The Lipschitz continuity of the gradients implies that
\begin{eqnarray*}
(a_{t-1}^* - a_{t}^*)^2 & = &  \frac{1}{\CE[\|\hat{x}\|^2]^2}(\CE[\phi'(\hat{x}^T\theta_{t-2})\|\hat{x}\|^2] - \CE[\phi'(\hat{x}^T\theta_{t-1})\|\hat{x}\|^2])^2 \\
& \leq & \frac{1}{\CE[\|\hat{x}\|^2]^2}\CE[(\phi'(\hat{x}^T\theta_{t-2})\|\hat{x}\|^2 - \phi'(\hat{x}^T\theta_{t-1})\|\hat{x}\|^2)^2] \\
& \leq & \frac{L^2}{\CE[\|\hat{x}\|^2]}\|\theta_{t-2} - \theta_{t-1}\|^2 \\
& \leq & \frac{\eta_{t-1}^2 L^2}{\CE[\|\hat{x}\|^2]}\|v_{t-1}\|^2 \\
& \leq & \frac{\eta_{t-1}^2 L^2}{\CE[\|\hat{x}\|^2]}G^2,
\end{eqnarray*}
where we assume that $\|v_t\| \leq G$ for all $t$. Then, substituting it and rearranging, we obtain
\begin{eqnarray*}
\lefteqn{\CE[s_t^2(a_t - a_t^*)^2]} \\
& \leq &  (1-\beta_{t-1})(1-\beta_{t-1} + \alpha_t)s_{t-1}^2(a_{t-1} - a_{t-1}^*)^2\\
&& + (1 + \frac{1}{\alpha_t})\eta_{t-1}^2 L^2G^2\frac{\CE[s_{t}^2]}{\CE[\|\hat{x}\|^2]} 
 + \CE\left[\beta_{t-1}^2\|\hat{x}_{i_{t-1}}\|^4(\phi'_{i_{t-1}}(\hat{x}_{i_{t-1}}^T\theta_{t-2}) - a_{t-1}^*)^2\right].
\end{eqnarray*}
Let $\alpha_t  = \beta_{t-1}$, we have
\begin{eqnarray*}
\lefteqn{\CE[s_t^2(a_t - a_t^*)^2]} \\
& \leq &  (1-\beta_{t-1})s_{t-1}^2(a_{t-1} - a_{t-1}^*)^2
+ \eta_{t-1}^2 L^2G^2\frac{\CE[s_{t}^2]}{\CE[\|\hat{x}\|^2]} + \frac{\eta_{t-1}^2}{\beta_{t-1}} L^2G^2\frac{\CE[s_{t}^2]}{\CE[\|\hat{x}\|^2]} 
\\
&& + \CE\left[\beta_{t-1}^2\|\hat{x}_{i_{t-1}}\|^4(\phi'_{i_{t-1}}(\hat{x}_{i_{t-1}}^T\theta_{t-2}) - a_{t-1}^*)^2\right].
\end{eqnarray*}
Then, we bound the last term as 
\begin{eqnarray*}
\lefteqn{\CE\left[\|\hat{x}_{i_{t-1}}\|^4(\phi'_{i_{t-1}}(\hat{x}_{i_{t-1}}^T\theta_{t-2}) - a_{t-1}^*)^2\right]}\\
& =& \CE\left[(\phi'_{i_{t-1}}(\hat{x}_{i_{t-1}}^T\theta_{t-2})\|\hat{x}_{i_{t-1}}\|^2 - \CE[\phi'(\hat{x}^T\theta_{t-2})\|\hat{x}\|^2] + \CE[\phi'(\hat{x}^T\theta_{t-2})\|\hat{x}\|^2]  - a_{t-1}^*\|\hat{x}_{i_{t-1}}\|^2)^2\right] \\
& \leq & 2\CE\left[(\phi'_{i_{t-1}}(\hat{x}_{i_{t-1}}^T\theta_{t-2})\|\hat{x}_{i_{t-1}}\|^2 - \CE[\phi'(\hat{x}^T\theta_{t-2})\|\hat{x}\|^2])^2\right]
+ 2\CE\left[(\CE[\phi'(\hat{x}^T\theta_{t-2})\|\hat{x}\|^2]  - a_{t-1}^*\|\hat{x}_{i_{t-1}}\|^2)^2\right] \\
& =& 2\CE\left[(\phi'(\hat{x}^T\theta_{t-2})\|\hat{x}\|^2 - \CE[\phi'(\hat{x}^T\theta_{t-2})\|\hat{x}\|^2])^2\right]
+ 2(a_{t-1}^*)^2\CE\left[(\|\hat{x}\|^2 - \CE[\|\hat{x}\|^2])^2\right] \\
& \leq & 2\sigma_{\phi}^2 + 2\bar{a}^2\sigma_{b}^2,
\end{eqnarray*}
where we assume that $\CE\left[(\phi'(\hat{x}^T\theta_{t-2})\|\hat{x}\|^2 - \CE[\phi'(\hat{x}^T\theta_{t-2})\|\hat{x}\|^2])^2\right] \leq \sigma_{\phi}^2$,  
$\CE\left[(\|\hat{x}\|^2 - \CE[\|\hat{x}\|^2])^2\right] \leq \sigma_b^2$, and $a_{t-1}^* \leq \bar{a}$ implied by the bounded fourth moment assumptions.
Substituting the above, we have
\begin{eqnarray*}
\lefteqn{\CE[s_t^2(a_t - a_t^*)^2]} \\
& \leq &  (1-\beta_{t-1})s_{t-1}^2(a_{t-1} - a_{t-1}^*)^2
+  \eta_{t-1}^2 L^2G^2\frac{\CE[s_{t}^2]}{\CE[\|\hat{x}\|^2]} + \frac{\eta_{t-1}^2}{\beta_{t-1}} L^2G^2\frac{\CE[s_{t}^2]}{\CE[\|\hat{x}\|^2]} 
\\
&& + 2\beta_{t-1}^2(\sigma_{\phi}^2+\bar{a}^2\sigma_b^2).
\end{eqnarray*}
To bound $\CE[s^2_t]$, we consider
\begin{eqnarray*}
\lefteqn{\CE[(s_t - \CE[\|\hat{x}\|^2])^2]} \\
 & = & (1 - \beta_{t-1})^2(s_{t-1}- \CE[\|\hat{x}\|^2])^2  + \beta_{t-1}^2\CE[(\|\hat{x}_{i_{t-1}}\|^2 - \CE[\|\hat{x}\|^2])^2] \\
 && +  2(1-\beta_{t-1})^2\beta_{t-1}(s_{t-1}- \CE[\|\hat{x}\|^2])\CE[(\|\hat{x}_{i_{t-1}}\|^2 - \CE[\|\hat{x}\|^2])] \\
  & = & (1 - \beta_{t-1})^2(s_{t-1}- \CE[\|\hat{x}\|^2])^2 + \beta_{t-1}^2\CE[(\|\hat{x}\|^2 - \CE[\|\hat{x}\|^2])^2] \\
    & \leq & (1 - \beta_{t-1})^2(s_{t-1}- \CE[\|\hat{x}\|^2])^2 + \beta_{t-1}^2\sigma_{b}^2 \\
\end{eqnarray*}
Let $\beta_t = 1/t^{c_2}$ for some constant $1/2 < c_2 \leq 1$. Taking total expectation over the history, and using induction, we can obtain
\begin{eqnarray*}
\CE[(s_t - \CE[\|\hat{x}\|^2])^2] \leq \frac{\sigma_b^2}{(t-1)^{c_2}},
\end{eqnarray*}
which leads $\CE[s_t^2] \leq \CE[\|\hat{x}\|^2]^2 + \frac{\sigma_b^2}{(t-1)^{c_2}}$. Then, combining it, we obtain
\begin{eqnarray*}
\lefteqn{\CE[s_t^2(a_t - a_t^*)^2]} \\
& \leq &  (1-\beta_{t-1})\CE[s_{t-1}^2(a_{t-1} - a_{t-1}^*)^2]
+ \eta_{t-1}^2 L^2G^2\CE[\|\hat{x}\|^2] + \eta_{t-1}^2 L^2G^2\frac{\sigma_b^2}{\CE[\|\hat{x}\|^2](t-1)^{c_2}} \\
&& + \frac{\eta_{t-1}^2}{\beta_{t-1}} L^2G^2\CE[\|\hat{x}\|^2]  + 
\eta_{t-1}^2L^2G^2\frac{\sigma_b^2}{\CE[\|\hat{x}\|^2]}  
 + 2\beta_{t-1}^2(\sigma_{\phi}^2+\bar{a}^2\sigma_b^2).
\end{eqnarray*}
Let use $\eta_t = O(1/t^{c_1})$ and $\beta_t = O(1/t^{c_2})$ with $1/2 < c_2 < c_1 \leq 1$. By induction, we have 
\begin{eqnarray*}
\CE[s_t^2(a_t - a_t^*)^2]
& \leq &  O\left(\max\left\{\frac{1}{t^{c_2}}, \frac{1}{t^{2(c_1 - c_2)}}\right\}\right).
\end{eqnarray*}
\end{proof} 


\subsection{Proof of Theorem~\ref{em_conv}}
\begin{proof} 
As $v_t$ is an unbiased estimator of $\nabla F(\theta_{t-1})$, by the smoothness of function, it follows that
\begin{eqnarray*}
\CE[F(\theta_{t})] - F(\theta_{t-1}) &\leq& -\eta_t\nabla F(\theta_{t-1})^T\CE[v_t] + \frac{1}{2}\eta_t^2L\CE[\|v_t\|^2] \\
&\leq &-\eta_t\|\nabla F(\theta_{t-1})\|^2 + \frac{1}{2}\eta_t^2L\CE[\|v_t\|^2] \\
&\leq &-(1 - \frac{1}{2}\eta_tL)\eta_t\|\nabla F(\theta_{t-1})\|^2 + \frac{1}{2}\eta_t^2L\CE[\|v_t - \nabla F(\theta_{t-1})\|^2] \\
&\leq &-(1 - \frac{1}{2}\eta_tL)\eta_t\|\nabla F(\theta_{t-1})\|^2 + \frac{1}{2}\eta_t^2L\sigma_a^2,
\end{eqnarray*}
where the last inequality follows from the Proposition~\ref{prop:em_var}. Let $\eta_t$ satisfies:
\begin{eqnarray*}
\eta_t = \frac{c}{\gamma + t} \text{ for some } c > \frac{1}{\mu} \text{ and } \gamma > 0 \text{ such that } \eta_1 \leq \frac{1}{L},
\end{eqnarray*}
which implies
\[\eta_tL \leq \eta_1L \leq 1.\]
Then, it follows that
\begin{eqnarray*}
\CE[F(\theta_{t})] - F(\theta_{t-1}) &\leq&
-(1 - \frac{1}{2}\eta_tL)\eta_t\|\nabla F(\theta_{t-1})\|^2 + \frac{1}{2}\eta_t^2L\sigma_a^2 \\
&\leq &-\frac{1}{2}\eta_t\|\nabla F(\theta_{t-1})\|^2 + \frac{1}{2}\eta_t^2L\sigma_a^2 \\
&\leq &-\eta_t\mu(F(\theta_{t-1}) - F(\theta_*)) + \frac{1}{2}\eta_t^2L\sigma_a^2.
\end{eqnarray*}
Subtracting $F(\theta_*)$ from both sides, taking total expectations, and rearranging, this yields
\begin{eqnarray*}
\CE[F(\theta_{t}) - F(\theta_*)] \leq (1 - \eta_t\mu)\CE[F(\theta_{t-1}) - F(\theta_*)] + \frac{1}{2}\eta_t^2L\sigma_a^2.
\end{eqnarray*}
We now prove the theorem by induction. Let $k = \gamma + t$. 
First, the definition of $\nu_1$ ensures that it holds for $t = 0$. 
Then, for $t \geq 1$, it follows that 
\begin{eqnarray*}
\CE[F(\theta_{t}) - F(\theta_*)] & \leq & \left(1 - \frac{c\mu}{k}\right)\frac{\nu_1}{k} + \frac{c^2L\sigma_a^2}{2k^2} \\
& = & \left(\frac{k - c\mu}{k^2}\right)\nu_1 + \frac{c^2L\sigma_a^2}{2k^2} \\
& = & \left(\frac{k - 1}{k^2}\right)\nu_1 - \frac{c\mu - 1}{k^2}\nu_1 + \frac{c^2L\sigma_a^2}{2k^2} \\
& \leq & \left(\frac{k - 1}{k^2}\right)\nu_1 \\
& \leq & \frac{\nu_1}{k + 1},
\end{eqnarray*}
where the second-to-last inequality holds by the definition of $\nu_1$. The last inequality holds as $k^2 \geq (k+1)(k-1)$.

\end{proof} 


\subsection{Proof of Theorem~\ref{em_a_conv}}
\begin{proof} 
 Let us proceed by expanding $\CE[\|\theta_t - \theta_*\|^2]$:
\begin{eqnarray*}
\lefteqn{\CE[\|\theta_t - \theta_*\|^2]}\\
& = & \|\theta_{t-1} - \theta_*\|^2 - 2\eta_t\langle \nabla F(\theta_{t-1}), \theta_{t-1} - \theta_*\rangle + \eta_t^2\CE[\|v_t\|^2] \\
& \leq & (1 - \eta_t\mu)\|\theta_{t-1} - \theta_*\|^2 - 2\eta_t(F(\theta_{t-1}) - F(\theta_*))  + \eta_t^2\CE[\|v_t\|^2] \\
& \leq & (1 - \eta_t\mu)\|\theta_{t-1} - \theta_*\|^2 - 2\eta_t(F(\theta_{t-1}) - F(\theta_*)) + \eta_t^2\CE[\|v_t - \nabla F(\theta_{t-1})\|^2] + \eta_t^2\|\nabla F(\theta_{t-1})\|^2\\
& \leq & (1 - \eta_t\mu)\|\theta_{t-1} - \theta_*\|^2 - (1 - \eta_tL)2\eta_t(F(\theta_{t-1}) - F(\theta_*)) + \eta_t^2\sigma_a^2.
\end{eqnarray*}
When $\eta_t \leq 1/(2L)$, we have $(1 - \eta_tL)2\eta_t \geq \eta_t$.  
Taking total expectations and rearranging gives
\begin{eqnarray*}
\eta_t\CE[F(\theta_{t-1}) - F(\theta_*)]
& \leq & (1 - \eta_t\mu)\CE[\|\theta_{t-1} - \theta_*\|^2] - \CE[\|\theta_t - \theta_*\|^2] + \eta_t^2\sigma_a^2.
\end{eqnarray*}
Let $\eta_t = 2/(\mu(\gamma + t))$. Dividing by $\eta_t$ yields
\begin{eqnarray*}
\lefteqn{\CE[F(\theta_{t-1}) - F(\theta_*)]}\\
& \leq & (\frac{1}{\eta_t} - \mu)\CE[\|\theta_{t-1} - \theta_*\|^2] - \frac{1}{\eta_t}\CE[\|\theta_t - \theta_*\|^2] + \eta_t\sigma_a^2\\
& = & (\gamma + t - 2)\frac{\mu}{2}\CE[\|\theta_{t-1} - \theta_*\|^2] - (\gamma+t)\frac{\mu}{2}\CE[\|\theta_t - \theta_*\|^2]  +  \frac{2\sigma_a^2}{\mu (\gamma + t)}.
\end{eqnarray*}
Multiplying by $\gamma + t - 1$ yields
\begin{eqnarray*}
\lefteqn{(\gamma+t-1)\CE[F(\theta_{t-1}) - F(\theta_*)]}\\
& \leq & (\gamma+t-1)(\gamma + t - 2)\frac{\mu}{2}\CE[\|\theta_{t-1} - \theta_*\|^2] - (\gamma+t)(\gamma+t-1)\frac{\mu}{2}\CE[\|\theta_t - \theta_*\|^2] + \frac{(\gamma+t-1)}{\gamma + t}\frac{2\sigma_a^2}{\mu}\\
& \leq & (\gamma+t-1)(\gamma + t - 2)\frac{\mu}{2}\CE[\|\theta_{t-1} - \theta_*\|^2] - (\gamma+t)(\gamma+t-1)\frac{\mu}{2}\CE[\|\theta_t - \theta_*\|^2]  + \frac{2\sigma_a^2}{\mu}.
\end{eqnarray*}
By summing the above inequality from $t = 1$ to $t= T$, we have a telescoping sum that simplifies as follows:
\begin{eqnarray*}
\lefteqn{\sum_{t=1}^T(\gamma+t-1)\CE[F(\theta_{t-1}) - F(\theta_*)]}\\
& \leq & \gamma(\gamma - 1)\frac{\mu}{2}\|\theta_0 - \theta_*\|^2 - (\gamma+T)(\gamma+T-1)\frac{\mu}{2}\CE[\|\theta_T- \theta_*\|^2]  + \frac{2T\sigma_a^2}{\mu} \\
& \leq & \gamma(\gamma - 1)\frac{\mu}{2}\|\theta_0 - \theta_*\|^2   + \frac{2T\sigma_a^2}{\mu}.
\end{eqnarray*}
Dividing by $\sum_{t=1}^T(\gamma+t-1) = (2T\gamma + T(T-1))/2$ and using Jensen's inequality yields
\begin{eqnarray*}
\CE[F(\bar{\theta}_T) - F(\theta_*)]
& \leq & \frac{\mu\gamma(\gamma - 1)}{T(2\gamma + T-1)}\|\theta_0 - \theta_*\|^2  + \frac{4\sigma_a^2}{\mu(2\gamma + T - 1)}.
\end{eqnarray*}
\end{proof} 


\subsection{Proof of Theorem~\ref{em_p_conv}}

Let us define
\[q_t = \frac{1}{\eta_t}(\theta_{t-1} - \theta_t)\]

\begin{lemma} (\cite{xiao2014proximal}, Lemma~3) \label{lemma:proximal_recursive}
If $\eta_t \leq 1/L$ for all $t$, then we have for any $\theta \in \R^d$,
\begin{eqnarray*}
P(\theta) \geq P(\theta_t) + \langle q_t, \theta - \theta_{t-1}\rangle + \frac{\eta_t}{2}\|q_t\|^2 + \frac{\mu}{2}\|\theta_{t-1} - \theta\|^2 + \langle v_t - \nabla F(\theta_{t-1}), \theta_t - \theta\rangle
\end{eqnarray*}
for all $t$.
\end{lemma}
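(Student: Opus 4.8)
The plan is to treat this as a standard per-step progress lemma for a proximal gradient step, obtained by combining three first-order inequalities with the optimality condition of the proximal minimization. The pivotal object is the subgradient generated by the update. Since the proximal variant sets $\theta_t = \text{prox}_{\eta_t r}(\theta_{t-1} - \eta_t v_t)$, the first-order optimality condition for $\arg\min_\theta\{\eta_t r(\theta) + \frac12\|\theta - (\theta_{t-1}-\eta_t v_t)\|^2\}$ guarantees a subgradient $\xi \in \partial r(\theta_t)$ with $\eta_t\xi + \theta_t - \theta_{t-1} + \eta_t v_t = 0$. Rearranging and invoking $q_t = (\theta_{t-1}-\theta_t)/\eta_t$ yields the clean identity $q_t = v_t + \xi$, equivalently $\xi = q_t - v_t$. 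This identity, which links $q_t$ simultaneously to the gradient estimator $v_t$ and to the nonsmooth part $r$, is the crux of the whole argument.

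First I would assemble the three lower/upper bounds. By $\mu$-strong convexity of $F$, $F(\theta) \geq F(\theta_{t-1}) + \langle \nabla F(\theta_{t-1}), \theta - \theta_{t-1}\rangle + \frac{\mu}{2}\|\theta - \theta_{t-1}\|^2$; by the subgradient inequality for the convex $r$ at $\theta_t$, $r(\theta) \geq r(\theta_t) + \langle \xi, \theta - \theta_t\rangle$. Adding these gives a lower bound on $P(\theta) = F(\theta)+r(\theta)$ that still contains $F(\theta_{t-1})$. To surface $P(\theta_t)$ on the right, I would replace $F(\theta_{t-1})$ using $L$-smoothness of $F$ in the form $F(\theta_{t-1}) \geq F(\theta_t) - \langle \nabla F(\theta_{t-1}), \theta_t - \theta_{t-1}\rangle - \frac{L}{2}\|\theta_t - \theta_{t-1}\|^2$, so that $F(\theta_t)+r(\theta_t) = P(\theta_t)$ appears explicitly.

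Next I would regroup the inner-product terms. After substitution the two $\nabla F(\theta_{t-1})$ contributions combine into $\langle \nabla F(\theta_{t-1}), \theta - \theta_t\rangle$, which together with $\langle \xi, \theta - \theta_t\rangle = \langle q_t - v_t, \theta - \theta_t\rangle$ collapses, via $\xi = q_t - v_t$, into $\langle q_t, \theta - \theta_t\rangle + \langle v_t - \nabla F(\theta_{t-1}), \theta_t - \theta\rangle$. Finally, using $\theta_{t-1} - \theta_t = \eta_t q_t$, I would split $\langle q_t, \theta - \theta_t\rangle = \langle q_t, \theta - \theta_{t-1}\rangle + \eta_t\|q_t\|^2$ and write $\|\theta_t - \theta_{t-1}\|^2 = \eta_t^2\|q_t\|^2$.

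The single place the hypothesis $\eta_t \leq 1/L$ is used, and the step I expect to need the most care, is consolidating the $q_t$-quadratic terms: the calculation leaves $\eta_t\|q_t\|^2 - \frac{L}{2}\eta_t^2\|q_t\|^2$, which is at least $\frac{\eta_t}{2}\|q_t\|^2$ exactly when $1 - \frac{L}{2}\eta_t \geq \frac12$, i.e. $\eta_t \leq 1/L$; discarding the resulting nonnegative surplus then reproduces the stated inequality verbatim. There is no genuine analytical obstacle here, since the result is classical (it is Lemma~3 of \cite{xiao2014proximal}); the real work is careful sign bookkeeping in the inner products and making sure the smoothness quadratic is fully absorbed into $\frac{\eta_t}{2}\|q_t\|^2$ rather than surviving as a residual term.
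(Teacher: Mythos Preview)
Your proposal is correct and is exactly the standard proof of this inequality. The paper does not supply its own proof here; it simply cites Lemma~3 of \cite{xiao2014proximal}, so there is nothing to compare against beyond noting that your argument is the classical derivation behind that cited result.
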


\begin{proof} (proof of Theorem~\ref{em_p_conv})
Let $\eta_t$ satisfies:
\begin{eqnarray*}
\eta_t = \frac{c}{\gamma + t} \text{ for some } c > \frac{1}{\mu} \text{ and } \gamma > 0 \text{ such that } \eta_1 \leq \frac{1}{L},
\end{eqnarray*}
 Let us proceed by expanding $\CE[\|\theta_t - \theta_*\|^2]$:
\begin{eqnarray}
\lefteqn{\|\theta_t - \theta_*\|^2} \nonumber\\
& = & \|\theta_{t-1} - \theta_*\|^2 - 2\eta_t\langle q_t, \theta_{t-1} - \theta_*\rangle + \eta_t^2\|q_t\|^2 \nonumber\\
& \leq & (1 - \eta_t\mu)\|\theta_{t-1} - \theta_*\|^2 - 2\eta_t[P(\theta_{t}) - P(\theta_*)] - 2\eta_t\langle v_t - \nabla F(\theta_{t-1}), \theta_t - \theta_*\rangle, \label{eq:recursive_p1}
\end{eqnarray}
where the inequality follows from the Lemma~\ref{lemma:proximal_recursive} with $\theta = \theta_*$.  Define the proximal full gradient update as
\[\hat{\theta}_t = \text{prox}_{\eta_t r}(\theta_{t-1} - \eta_t \nabla F(\theta_{t-1})).\]
Then, we have
\begin{eqnarray*}
\lefteqn{-\langle v_t - \nabla F(\theta_{t-1}), \theta_t - \theta_*\rangle}\\
& =& -\langle v_t - \nabla F(\theta_{t-1}), \theta_t - \hat{\theta}_t\rangle -\langle v_t - \nabla F(\theta_{t-1}), \hat{\theta}_t - \theta_*\rangle \\
& \leq & \| v_t - \nabla F(\theta_{t-1})\|\|\theta_t - \hat{\theta}_t\| - \langle v_t - \nabla F(\theta_{t-1}), \hat{\theta}_t - \theta_*\rangle \\
& \leq & \eta_t\| v_t - \nabla F(\theta_{t-1})\|^2 - \langle v_t - \nabla F(\theta_{t-1}), \hat{\theta}_t - \theta_*\rangle,
\end{eqnarray*}
where the last inequality follows from the non-expansiveness of proximal operators. Taking expectation and combining with (\ref{eq:recursive_p1}), we obtain
\begin{eqnarray}
\lefteqn{\CE[\|\theta_t - \theta_*\|^2]} \nonumber\\
& \leq & (1 - \eta_t\mu)\|\theta_{t-1} - \theta_*\|^2 - 2\eta_t\CE[P(\theta_{t}) - P(\theta_*)] + 2\eta_t^2\CE[\|v_t - \nabla F(\theta_{t-1})\|^2] - 2\eta_t\langle \CE[v_t] - \nabla F(\theta_{t-1}), \hat{\theta}_t - \theta_*\rangle \nonumber\\
& \leq & (1 - \eta_t\mu)\|\theta_{t-1} - \theta_*\|^2 - 2\eta_t\CE[P(\theta_{t}) - P(\theta_*)] + 2\eta_t^2\CE[\|v_t - \nabla F(\theta_{t-1})\|^2] \label{eq:p_recursive_n} \\
& \leq & (1 - \eta_t\mu)\|\theta_{t-1} - \theta_*\|^2 - 2\eta_t\CE[P(\theta_{t}) - P(\theta_*)] + 2\eta_t^2\sigma_a^2 \label{eq:p_recursive} \\
& \leq & (1 - \eta_t\mu)\|\theta_{t-1} - \theta_*\|^2 - \eta_t\mu\CE[\|\theta_t - \theta_*\|^2] + 2\eta_t^2\sigma_a^2. \nonumber
\end{eqnarray}
The second-to-last inequality holds due to the Proposition~\ref{prop:em_var}. Taking total expectations, 
rearranging, this yields
\begin{eqnarray*}
\CE[\|\theta_t - \theta_*\|^2] \leq \frac{1 - \eta_t\mu}{1 + \eta_t\mu}\CE[\|\theta_{t- 1}- \theta_*\|^2]+ \frac{2\eta_t^2\sigma_a^2}{1 + \eta_t\mu}.
\end{eqnarray*}
We now prove the theorem by induction. Let $k = \gamma + t$. The definition of $\nu_5$ ensures that it holds for $t = 0$. 
Then, for $t \geq 1$,  
\begin{eqnarray*}
\CE[\|\theta_t - \theta_*\|^2] & \leq & \frac{k - c\mu}{k + c\mu}\frac{\nu_5}{k} + \frac{2c^2\sigma_a^2}{k(k + c\mu)} \\
& = & \frac{\nu_5}{k + 1} - \frac{2c\mu - 1 + c\mu/k}{(k+1)(k+c\mu)}\nu_5 + \frac{2c^2\sigma_a^2}{k(k + c\mu)} \\
& \leq & \frac{\nu_5}{k + 1},
\end{eqnarray*}
where the second-to-last inequality holds by the definition of $\nu$. 
\end{proof} 


\subsection{Proof of Theorem~\ref{em_ap_conv}}
\begin{proof} 
Following (\ref{eq:p_recursive}), taking total expectations, and rearranging, we have 
\begin{eqnarray*}
2\eta_t\CE[P(\theta_{t}) - F(\theta_*)]
& \leq & (1 - \eta_t\mu)\CE[\|\theta_{t-1} - \theta_*\|^2] - \CE[\|\theta_t - \theta_*\|^2] + 2\eta_t^2\sigma_a^2.
\end{eqnarray*}
Let $\eta_t = 2/(\mu(\gamma + t))$. Dividing by $2\eta_t$ yields
\begin{eqnarray*}
\lefteqn{\CE[P(\theta_{t}) - P(\theta_*)]}\\
& \leq & (\frac{1}{2\eta_t} - \frac{\mu}{2})\CE[\|\theta_{t-1} - \theta_*\|^2] - \frac{1}{2\eta_t}\CE[\|\theta_t - \theta_*\|^2] + \eta_t\sigma_a^2\\
& = & (\gamma + t - 2)\frac{\mu}{4}\CE[\|\theta_{t-1} - \theta_*\|^2] - (\gamma+t)\frac{\mu}{4}\CE[\|\theta_t - \theta_*\|^2]  +  \frac{2\sigma_a^2}{\mu (\gamma + t)}.
\end{eqnarray*}
Multiplying by $\gamma + t - 1$ yields
\begin{eqnarray*}
\lefteqn{(\gamma+t-1)\CE[P(\theta_{t}) - P(\theta_*)]}\\
& \leq & (\gamma+t-1)(\gamma + t - 2)\frac{\mu}{4}\CE[\|\theta_{t-1} - \theta_*\|^2] - (\gamma+t)(\gamma+t-1)\frac{\mu}{4}\CE[\|\theta_t - \theta_*\|^2] + \frac{(\gamma+t-1)}{\gamma + t}\frac{2\sigma_a^2}{\mu}\\
& \leq & (\gamma+t-1)(\gamma + t - 2)\frac{\mu}{4}\CE[\|\theta_{t-1} - \theta_*\|^2] - (\gamma+t)(\gamma+t-1)\frac{\mu}{4}\CE[\|\theta_t - \theta_*\|^2] + \frac{2\sigma_a^2}{\mu}.
\end{eqnarray*}
By summing the above inequality from $t = 1$ to $t= T$, we have a telescoping sum that simplifies as follows:
\begin{eqnarray*}
\lefteqn{\sum_{t=1}^T(\gamma+t-1)\CE[P(\theta_{t}) - P(\theta_*)]}\\
& \leq & \gamma(\gamma - 1)\frac{\mu}{4}\|\theta_0 - \theta_*\|^2 - (\gamma+T)(\gamma+T-1)\frac{\mu}{4}\CE[\|\theta_T- \theta_*\|^2]  + \frac{2T\sigma_a^2}{\mu} \\
& \leq & \gamma(\gamma - 1)\frac{\mu}{4}\|\theta_0 - \theta_*\|^2   + \frac{2T\sigma_a^2}{\mu}.
\end{eqnarray*}
Dividing by $\sum_{t=1}^T(\gamma+t-1) = (2T\gamma + T(T-1))/2$ and using Jensen's inequality yields
\begin{eqnarray*}
\CE[P(\bar{\theta}_T) - P(\theta_*)]
& \leq & \frac{\mu\gamma(\gamma - 1)}{2T(2\gamma + T - 1)}\|\theta_0 - \theta_*\|^2  + \frac{4\sigma_a^2}{\mu(2\gamma + T - 1)}.
\end{eqnarray*}
\end{proof}


\subsection{Proof of Theorem~\ref{em_conv_ssaga}}
In the following, let us denote $a_{i_t}^t = \phi_{i_t}(\langle \varphi_{i_t}^t, \vartheta_{i_t}^t\rangle)$, where $\varphi_{i_t}^t$ and $\vartheta_{i_t}^t$ denote the old noisy sample and parameter used to compute $a_{i_t}^t$, respectively. Note that $\vartheta_i^1 = \theta_0$ for all $i$. Let define $f_i(\theta; \xi_i) = \phi_{i}(\hat{x}_i^T\theta) + g(\theta)$, and $f_i(\theta) \equiv \CE_{\xi_i}[\phi_i(\hat{x}_i^T\theta)] + g(\theta)$. In the following, we prove the convergence on the composite problem (\ref{eq:em_com_problem}).

\begin{lemma} \label{lemma:variance_asaga}
For any $\rho_t > 0$ , $\forall t$, we have
\begin{eqnarray*} 
\lefteqn{\CE[\|v_t - \nabla F(\theta_{*})\|^2]}\\
   & \leq & 2(1 + \rho_t^{-1})\CE\left[\left\|(\phi_{i_t}'(\langle \varphi_{i_t}^t, \vartheta_{i_t}^t\rangle)) -  \phi_{i_t}'(\hat{x}_{i_t} ^T\vartheta_{i_t}^t))\hat{x}_{i_t}\right\|^2\right] + 2(1 + \rho_t^{-1})\CE\left[\left\|(\phi_{i_t}'(\hat{x}_{i_t} ^T\vartheta_{i_t}^t)) -  \phi_{i_t}'(\hat{x}_{i_t} ^T\theta_{*}))\hat{x}_{i_t}\right\|^2\right]  \\
  &&+ (1 + \rho_t)\CE\left[\left\|\nabla f_{i_t}(\theta_{t-1}; \xi_{i_t}) - \nabla f_{i_t}(\theta_*; \xi_{i_t})\right\|^2\right]
 - \rho_t\|\nabla F(\theta_{t-1}) -  \nabla F(\theta_{*})\|^2.
 \end{eqnarray*}
\end{lemma}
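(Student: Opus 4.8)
The plan is to reduce the whole bound to one algebraic decomposition plus Young's inequality and the variance identity $\CE\|Z-\CE Z\|^2=\CE\|Z\|^2-\|\CE Z\|^2$. Throughout, $\CE[\cdot]$ denotes the conditional expectation given the history up to iteration $t$, so that $\theta_{t-1}$, $m_{t-1}$ and the stored pair $\varphi_{i_t}^t,\vartheta_{i_t}^t$ are deterministic while $i_t$ and $\xi_t$ are fresh; write $a\equiv a_{i_t}^t=\phi_{i_t}'(\langle\varphi_{i_t}^t,\vartheta_{i_t}^t\rangle)$, $A\equiv \nabla f_{i_t}(\theta_{t-1};\xi_{i_t})-\nabla f_{i_t}(\theta_{*};\xi_{i_t})$ and $W\equiv(\phi_{i_t}'(\hat{x}_{i_t}^T\theta_{*})-a)\hat{x}_{i_t}$. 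First I would record the elementary facts $\nabla f_{i_t}(\theta;\xi_{i_t})=\phi_{i_t}'(\hat{x}_{i_t}^T\theta)\hat{x}_{i_t}+\nabla g(\theta)$, $v_t=\nabla f_{i_t}(\theta_{t-1};\xi_{i_t})-a\hat{x}_{i_t}+m_{t-1}$, and --- because $a$ is past-measurable while the perturbation in $\hat{x}_{i_t}$ is fresh --- $\CE[a\hat{x}_{i_t}]=\frac1n\sum_i a_i\CE_{\xi_i}[\hat{x}_i]=m_{t-1}$ and $\CE[\nabla f_{i_t}(\theta;\xi_{i_t})]=\nabla F(\theta)$. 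Hence $\CE[v_t]=\nabla F(\theta_{t-1})$ and $\CE[W]=\nabla F(\theta_{*})-\nabla g(\theta_{*})-m_{t-1}$, and a short cancellation of the $\phi_{i_t}'(\hat{x}_{i_t}^T\theta_{*})\hat{x}_{i_t}$ and $\nabla g$ terms gives the decomposition
\[ v_t-\nabla F(\theta_{*})=\underbrace{\CE[A]}_{=\,\nabla F(\theta_{t-1})-\nabla F(\theta_{*})}+(A-\CE[A])+(W-\CE[W]). \]

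Next, since $A-\CE[A]$ and $W-\CE[W]$ each have zero conditional mean and $\CE[A]$ is deterministic, all cross terms with $\CE[A]$ vanish, so $\CE\|v_t-\nabla F(\theta_{*})\|^2=\|\CE[A]\|^2+\CE\|(A-\CE[A])+(W-\CE[W])\|^2$. I would then apply $\|x+y\|^2\le(1+\rho_t)\|x\|^2+(1+\rho_t^{-1})\|y\|^2$ with $x=A-\CE[A]$, $y=W-\CE[W]$, and absorb the first two terms using $\CE\|A-\CE[A]\|^2=\CE\|A\|^2-\|\CE[A]\|^2$:
\[ \|\CE[A]\|^2+(1+\rho_t)\CE\|A-\CE[A]\|^2=(1+\rho_t)\CE\|A\|^2-\rho_t\|\CE[A]\|^2. \]
Because $\CE[A]=\nabla F(\theta_{t-1})-\nabla F(\theta_{*})$ and $A=\nabla f_{i_t}(\theta_{t-1};\xi_{i_t})-\nabla f_{i_t}(\theta_{*};\xi_{i_t})$, this already reproduces the $(1+\rho_t)\CE[\|\nabla f_{i_t}(\theta_{t-1};\xi_{i_t})-\nabla f_{i_t}(\theta_{*};\xi_{i_t})\|^2]$ and $-\rho_t\|\nabla F(\theta_{t-1})-\nabla F(\theta_{*})\|^2$ terms of the claim.

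It remains to handle $(1+\rho_t^{-1})\CE\|W-\CE[W]\|^2$. Here I would use $\CE\|W-\CE[W]\|^2\le\CE\|W\|^2$ (discarding the subtracted $\|\CE[W]\|^2$), split the scalar multiplier as $\phi_{i_t}'(\hat{x}_{i_t}^T\theta_{*})-a=\bigl(\phi_{i_t}'(\hat{x}_{i_t}^T\vartheta_{i_t}^t)-\phi_{i_t}'(\langle\varphi_{i_t}^t,\vartheta_{i_t}^t\rangle)\bigr)+\bigl(\phi_{i_t}'(\hat{x}_{i_t}^T\theta_{*})-\phi_{i_t}'(\hat{x}_{i_t}^T\vartheta_{i_t}^t)\bigr)$, factor out $\hat{x}_{i_t}$, and apply $\|p+q\|^2\le 2\|p\|^2+2\|q\|^2$; taking expectations and multiplying by $(1+\rho_t^{-1})$ yields precisely the two leading $2(1+\rho_t^{-1})\CE[\|\cdot\|^2]$ terms of the claim. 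Summing the three pieces finishes the proof.

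The main obstacle is the bookkeeping in the first paragraph: one has to be careful that $a_{i_t}^t$ (equivalently $\varphi_{i_t}^t,\vartheta_{i_t}^t$) is measurable with respect to the past and independent of the fresh noise $\xi_t$ --- this is exactly what makes $\CE[a\hat{x}_{i_t}]=m_{t-1}$ --- and that every $\nabla g(\theta_{t-1})$, $\nabla g(\theta_{*})$ and $\phi_{i_t}'(\hat{x}_{i_t}^T\theta_{*})\hat{x}_{i_t}$ term cancels so that the three-term sum is genuinely $v_t-\nabla F(\theta_{*})$. Once this identity and $\CE[v_t]=\nabla F(\theta_{t-1})$ are in place, the rest is just Young's inequality and the variance identity, with no delicate estimates required.
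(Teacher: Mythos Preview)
Your proposal is correct and follows essentially the same route as the paper's proof: the same three-way split of $v_t-\nabla F(\theta_*)$ into $\nabla F(\theta_{t-1})-\nabla F(\theta_*)$ plus two centred pieces, Young's inequality with parameter $\rho_t$ on the centred pieces, the variance identity to combine $\|\CE[A]\|^2+(1+\rho_t)\CE\|A-\CE[A]\|^2$, and then $\|p+q\|^2\le 2\|p\|^2+2\|q\|^2$ on the $W$ term. One small wording issue: you call $\varphi_{i_t}^t,\vartheta_{i_t}^t$ ``deterministic'' given the history, but since $i_t$ is fresh they are random through $i_t$; what you actually need (and use correctly) is that the table $\{(\varphi_i^t,\vartheta_i^t)\}_{i=1}^n$ is past-measurable and that $a_{i_t}^t$ is independent of the fresh noise $\xi_t$, which is exactly what gives $\CE[a\hat{x}_{i_t}]=m_{t-1}$.
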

\begin{proof} 
Note that $\CE[\phi_{i_t}'(\hat{x}_{i_t}^T\theta)\hat{x}_{i_t}] = \nabla F(\theta) - \nabla g(\theta)$. We follow a similar argument as in the SAGA proof \cite{defazio-14} for this term.
\begin{eqnarray*} 
\lefteqn{\CE[\|v_t - \nabla F(\theta_{*})\|^2]}\\
& = & \CE\left[\left\|(\phi_{i_t}'(\hat{x}_{i_t} ^T\theta_{t-1}) - a_{i_t}^t)\hat{x}_{i_t}  + \frac{1}{n}\sum_{i=1}^na_{i}^t\bar{x}_i + \nabla g(\theta_{t-1})  -  \nabla F(\theta_{*})\right\|^2\right] \\
& = & \CE\left[\left\|\left[(a_{i_t}^t -  \phi_{i_t}'(\hat{x}_{i_t} ^T\theta_{*}))\hat{x}_{i_t} - \frac{1}{n}\sum_{i=1}^na_{i}^t\bar{x}_i + \nabla F(\theta_{*}) - \nabla g(\theta_*)\right]  - [\nabla f_{i_t}(\theta_{t-1}; \xi_{i_t}) - \nabla f_{i_t}(\theta_*; \xi_{i_t})- \nabla F(\theta_{t-1}) +  \nabla F(\theta_{*})]\right\|^2\right] \\
&& + \|\nabla F(\theta_{t-1}) -  \nabla F(\theta_{*})\|^2 \\
& \leq & (1 + \rho_t^{-1})\CE\left[\left\|(a_{i_t}^t -  \phi_{i_t}'(\hat{x}_{i_t} ^T\theta_{*}))\hat{x}_{i_t} - \frac{1}{n}\sum_{i=1}^na_{i}^t\bar{x}_i + \nabla F(\theta_{*}) - \nabla g(\theta_*)\right\|^2\right] \\
&& + (1 + \rho_t)\CE\left[\left\|\nabla f_{i_t}(\theta_{t-1}; \xi_{i_t}) - \nabla f_{i_t}(\theta_*; \xi_{i_t}) - \nabla F(\theta_{t-1}) +  \nabla F(\theta_{*})\right\|^2\right]
 + \|\nabla F(\theta_{t-1}) -  \nabla F(\theta_{*})\|^2 \\
 & \leq & (1 + \rho_t^{-1})\CE\left[\left\|(a_{i_t}^t -  \phi_{i_t}'(\hat{x}_{i_t} ^T\theta_{*}))\hat{x}_{i_t}\right\|^2\right]  + (1 + \rho_t)\CE\left[\left\|\nabla f_{i_t}(\theta_{t-1}; \xi_{i_t}) - \nabla f_{i_t}(\theta_*; \xi_{i_t})\right\|^2\right]
 - \rho_t\|\nabla F(\theta_{t-1}) -  \nabla F(\theta_{*})\|^2 \\
  & \leq & 2(1 + \rho_t^{-1})\CE\left[\left\|(\phi_{i_t}'(\langle \varphi_{i_t}^t, \vartheta_{i_t}^t\rangle)) -  \phi_{i_t}'(\hat{x}_{i_t} ^T\vartheta_{i_t}^t))\hat{x}_{i_t}\right\|^2\right]+ 2(1 + \rho_t^{-1})\CE\left[\left\|(\phi_{i_t}'(\hat{x}_{i_t} ^T\vartheta_{i_t}^t)) -  \phi_{i_t}'(\hat{x}_{i_t} ^T\theta_{*}))\hat{x}_{i_t}\right\|^2\right]  \\
  &&+ (1 + \rho_t)\CE\left[\left\|\nabla f_{i_t}(\theta_{t-1}; \xi_{i_t}) - \nabla f_{i_t}(\theta_*; \xi_{i_t})\right\|^2\right]
 - \rho_t\|\nabla F(\theta_{t-1}) -  \nabla F(\theta_{*})\|^2.
\end{eqnarray*}
\end{proof} 

\begin{lemma}  \label{lemma:smooth_asaga}
\begin{eqnarray*} 
\frac{1}{n}\sum_{i=1}^n\CE_{\xi_i}\left[\left\|(\phi_{i}'(\hat{x}_{i} ^T\theta)) -  \phi_i'(\hat{x}_{i} ^T\theta_{*}))\hat{x}_{i}\right\|^2\right]   \leq 
2L\left[\frac{1}{n}\sum_{i=1}^nf_i(\theta) - F(\theta_*) - \frac{1}{n}\sum_{i=1}^n\langle \nabla f_i(\theta_*), \theta - \theta_* \rangle\right].
 \end{eqnarray*}
\end{lemma}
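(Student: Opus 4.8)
The plan is to reduce the claim to the textbook co-coercivity inequality for an $L$-smooth convex function, applied separately to each perturbed loss $\theta\mapsto\phi_i(\hat{x}_i^T\theta)$, and then to discard a nonnegative Bregman term coming from the regularizer $g$.

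First I would fix $i$ and the noise $\xi_i$. The function $h_i(\theta)\equiv\phi_i(\hat{x}_i^T\theta)$ is convex (it is the convex $\phi_i$ composed with a linear map) and $L$-smooth (by Assumption~\ref{assumption:lipschitz}), with $\nabla h_i(\theta)=\phi_i'(\hat{x}_i^T\theta)\hat{x}_i$. The standard inequality $\tfrac{1}{2L}\|\nabla h(x)-\nabla h(z)\|^2\le h(x)-h(z)-\langle\nabla h(z),x-z\rangle$ (proved, e.g., by minimizing $x\mapsto h(x)-\langle\nabla h(z),x\rangle$ via the descent lemma) applied with $x=\theta$, $z=\theta_*$ gives, pointwise in $\xi_i$,
\[\big\|(\phi_i'(\hat{x}_i^T\theta)-\phi_i'(\hat{x}_i^T\theta_*))\hat{x}_i\big\|^2\le 2L\big(\phi_i(\hat{x}_i^T\theta)-\phi_i(\hat{x}_i^T\theta_*)-\langle\phi_i'(\hat{x}_i^T\theta_*)\hat{x}_i,\theta-\theta_*\rangle\big).\]
Then I would take $\CE_{\xi_i}$ of both sides, use the identities $\CE_{\xi_i}[\phi_i(\hat{x}_i^T\theta)]=f_i(\theta)-g(\theta)$ and $\CE_{\xi_i}[\phi_i'(\hat{x}_i^T\theta_*)\hat{x}_i]=\nabla f_i(\theta_*)-\nabla g(\theta_*)$, and average over $i\in[n]$ (recalling $\tfrac{1}{n}\sum_i f_i(\theta_*)=F(\theta_*)$), which yields
\[\frac{1}{n}\sum_{i=1}^n\CE_{\xi_i}\big\|(\phi_i'(\hat{x}_i^T\theta)-\phi_i'(\hat{x}_i^T\theta_*))\hat{x}_i\big\|^2\le 2L\Big(\tfrac{1}{n}\sum_i f_i(\theta)-F(\theta_*)-\tfrac{1}{n}\sum_i\langle\nabla f_i(\theta_*),\theta-\theta_*\rangle-D_g\Big),\]
where $D_g\equiv g(\theta)-g(\theta_*)-\langle\nabla g(\theta_*),\theta-\theta_*\rangle$. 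Finally, convexity of $g$ gives $D_g\ge 0$, so dropping the $-D_g$ term only increases the right-hand side, which is exactly the asserted bound.

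The main (and only mild) obstacle is the bookkeeping with the regularizer: one must invoke the smooth-convex inequality for $\phi_i(\hat{x}_i^T\cdot)$ rather than for $f_i(\cdot;\xi_i)$, because the left-hand side contains only the loss-gradient difference and not the $\nabla g(\theta)-\nabla g(\theta_*)$ term that $\nabla f_i$ would carry; the leftover $g$-terms then collapse into the Bregman divergence $D_g$, which is nonnegative and harmless. I would also briefly note that exchanging $\CE_{\xi_i}$ with $\nabla$ in the identities above is legitimate under the smoothness and integrability assumptions already imposed.
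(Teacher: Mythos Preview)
Your proposal is correct and follows essentially the same approach as the paper: apply the co-coercivity inequality for the $L$-smooth convex map $\theta\mapsto\phi_i(\hat{x}_i^T\theta)$ pointwise in $\xi_i$, take expectation, average over $i$, and absorb the regularizer by adding the nonnegative Bregman term $g(\theta)-g(\theta_*)-\langle\nabla g(\theta_*),\theta-\theta_*\rangle$. Your presentation is in fact slightly more explicit than the paper's about why the inequality must be applied to $\phi_i(\hat{x}_i^T\cdot)$ rather than to $f_i(\cdot;\xi_i)$.
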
 
\begin{proof} 
The smoothness of $\phi_i$ implies that $\| \phi_i'(\hat{x}_i^T\theta)\hat{x}_i - \phi_i'(\hat{x}_i^T\theta_*)\hat{x}_i\|^2 \leq 2L[\phi_i(\hat{x}_i^T\theta) - \phi_i(\hat{x}_i^T\theta_*) - \langle \phi_i'(\hat{x}_i^T\theta_*)\hat{x}_i, \theta - \theta' \rangle]$. We can then obtain the result by taking expectation over the random variable $\xi_i$, averaging over $n$ functions, and adding nonnegative term $2L[g(\theta) - g(\theta_*) - \langle \nabla g(\theta_*), \theta - \theta_* \rangle]$ due to the convexity of $g$.
\end{proof} 

\begin{lemma} (Lemma~1 of \cite{defazio-14})  \label{lemma:strong_asaga}
Suppose that each $f_i(\theta; \xi_i)$  is $\mu$-strongly convex, then for all $\theta$ and $\theta_*$:
\begin{eqnarray*} 
\langle \nabla F(\theta), \theta_* - \theta \rangle \leq \frac{L - \mu}{L}[F(\theta_*) - F(\theta)] - \frac{\mu}{2}\|\theta - \theta_*\|^2 - \frac{1}{2nL}\sum_{i=1}^n\CE_{\xi_i}[\|\nabla f_{i}(\theta; \xi_{i}) - \nabla f_{i}(\theta_*; \xi_{i})\|^2] - \frac{\mu}{L}\langle \nabla F(\theta_*), \theta - \theta_*\rangle.
 \end{eqnarray*}
\end{lemma}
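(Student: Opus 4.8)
The plan is to deduce this from Lemma~1 of \cite{defazio-14} by integrating that lemma over the noise, rather than re-deriving it. The one thing that prevents a direct citation is that $F=\frac1n\sum_{i}\CE_{\xi_i}[f_i(\cdot;\xi_i)]$ is a finite sum of \emph{expectations}, not a finite sum of $\mu$-strongly convex smooth functions. To get around this I would note that every realization of the noise turns it into one: fix a tuple $\xi=(\xi_1,\dots,\xi_n)$ and set $\bar f_\xi(\theta)\equiv\frac1n\sum_{i=1}^n f_i(\theta;\xi_i)$; by the strong-convexity assumption each $f_i(\cdot;\xi_i)$ is $\mu$-strongly convex and by Assumption~\ref{assumption:lipschitz} each is $L$-smooth, so $\bar f_\xi$ is a genuine average of such functions and $\CE_\xi[\bar f_\xi]=F$.

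First I would apply Lemma~1 of \cite{defazio-14} to $\bar f_\xi$, with summands $\{f_i(\cdot;\xi_i)\}_{i=1}^n$, at the fixed points $\theta$ and $\theta_*$, for each realization $\xi$. This gives the deterministic bound
\begin{eqnarray*}
\langle \nabla\bar f_\xi(\theta),\theta_*-\theta\rangle
&\le& \frac{L-\mu}{L}\,[\bar f_\xi(\theta_*)-\bar f_\xi(\theta)]
 - \frac{\mu}{2}\|\theta-\theta_*\|^2 \\
&&{}- \frac{1}{2nL}\sum_{i=1}^n\|\nabla f_i(\theta;\xi_i)-\nabla f_i(\theta_*;\xi_i)\|^2
 - \frac{\mu}{L}\,\langle \nabla\bar f_\xi(\theta_*),\theta-\theta_*\rangle .
\end{eqnarray*}
Second, I would take the expectation over $\xi$ on both sides. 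Every term is affine in the realization except the squared gradient differences, which already carry their own average; so, by linearity of the expectation and of the gradient, $\CE_\xi[\nabla\bar f_\xi(\theta)]=\frac1n\sum_i\CE_{\xi_i}[\nabla f_i(\theta;\xi_i)]=\nabla F(\theta)$, likewise $\CE_\xi[\nabla\bar f_\xi(\theta_*)]=\nabla F(\theta_*)$ and $\CE_\xi[\bar f_\xi(\theta_*)-\bar f_\xi(\theta)]=F(\theta_*)-F(\theta)$, while $\|\theta-\theta_*\|^2$ is deterministic and $\CE_\xi[\frac1n\sum_i\|\nabla f_i(\theta;\xi_i)-\nabla f_i(\theta_*;\xi_i)\|^2]=\frac1n\sum_i\CE_{\xi_i}[\|\nabla f_i(\theta;\xi_i)-\nabla f_i(\theta_*;\xi_i)\|^2]$. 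Substituting these identities reproduces exactly the stated inequality.

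If a self-contained proof is wanted instead, the underlying deterministic ingredient is the single-function bound: for any $\mu$-strongly convex, $L$-smooth $h$ and any $\theta,\theta_*$,
\[
\langle \nabla h(\theta),\theta_*-\theta\rangle \le \frac{L-\mu}{L}[h(\theta_*)-h(\theta)] - \frac{\mu}{2}\|\theta-\theta_*\|^2 - \frac{1}{2L}\|\nabla h(\theta)-\nabla h(\theta_*)\|^2 - \frac{\mu}{L}\langle \nabla h(\theta_*),\theta-\theta_*\rangle,
\]
which is proved by the standard route of combining the $\mu$-strong-convexity inequality for $h$ with the smoothness / co-coercivity inequality for the convex part $h-\frac{\mu}{2}\|\cdot\|^2$ and then collecting terms. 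Applying this with $h=f_i(\cdot;\xi_i)$, taking $\CE_{\xi_i}$, and averaging over $i$ as above again yields the claim. I expect the only real obstacle to be the constant bookkeeping in that collection step — making the $\frac{L-\mu}{L}$, $\frac{1}{2L}$, $\frac{\mu}{2}$ and $\frac{\mu}{L}$ coefficients line up — rather than anything probabilistic: the inequality being averaged holds for every realization of $\xi$, so passing to the expectation is a one-line use of linearity.
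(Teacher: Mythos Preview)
Your proposal is correct and matches the paper's approach: the paper gives no proof of its own for this lemma, simply citing Lemma~1 of \cite{defazio-14}, and your argument --- apply that deterministic lemma for each fixed noise realization $\xi=(\xi_1,\dots,\xi_n)$ and then take $\CE_\xi$ on both sides using linearity --- is exactly the one-line adaptation needed to transfer the cited result to the expected-risk setting. There is nothing further to compare.
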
 

\begin{proof}  (proof of Theorem~\ref{em_conv_ssaga})

Expanding $\CE[\|\theta_t - \theta_*\|^2]$:
\begin{eqnarray*}
\lefteqn{\CE[\|\theta_t - \theta_*\|^2]} \nonumber\\
& \leq & \CE[\|\text{prox}_{\eta_tr}(\theta_{t-1} - \eta_t v_t) - \text{prox}_{\eta_tr}(\theta_* - \eta_t\nabla F(\theta_*))\|^2] \\
& \leq & \CE[\|\theta_{t-1} - \eta_t v_t - \theta_* + \eta_t\nabla F(\theta_*)\|^2] \\
& \leq & \|\theta_{t-1} - \theta_*\|^2  - 2\eta_t\langle\nabla F(\theta_{t-1}) - \nabla F(\theta_*), \theta_{t-1} - \theta_*\rangle+ \eta_t^2\CE[\|v_t - \nabla F(\theta_*)\|^2].
\end{eqnarray*}
Then, by combining Lemma~\ref{lemma:variance_asaga}, we obtain
\begin{eqnarray*}
\lefteqn{\CE[\|\theta_t - \theta_*\|^2]} \nonumber\\
& \leq & \|\theta_{t-1} - \theta_*\|^2  - 2\eta_t\langle\nabla F(\theta_{t-1}) - \nabla F(\theta_*), \theta_{t-1} - \theta_*\rangle+ 2\eta_t^2(1 + \rho_t^{-1})\CE\left[\left\|(\phi_{i_t}'(\langle \varphi_{i_t}^t, \vartheta_{i_t}^t\rangle)) -  \phi_{i_t}'(\hat{x}_{i_t} ^T\vartheta_{i_t}^t))\hat{x}_{i_t}\right\|^2\right] 
\\
&& - \eta_t^2\rho_t\|\nabla F(\theta_{t-1}) -  \nabla F(\theta_{*})\|^2 
 + 2\eta_t^2(1 + \rho_t^{-1})\CE\left[\left\|(\phi_{i_t}'(\hat{x}_{i_t} ^T\vartheta_{i_t}^t)) -  \phi_{i_t}'(\hat{x}_{i_t} ^T\theta_{*}))\hat{x}_{i_t}\right\|^2\right]  
\\
&& + \eta_t^2(1 + \rho_t)\CE\left[\left\|\nabla f_{i_t}(\theta_{t-1}; \xi_{i_t}) - \nabla f_{i_t}(\theta_*; \xi_{i_t})\right\|^2\right]\\
& \leq & \|\theta_{t-1} - \theta_*\|^2  - 2\eta_t\langle\nabla F(\theta_{t-1}) - \nabla F(\theta_*), \theta_{t-1} - \theta_*\rangle+ 2\eta_t^2(1 + \rho_t^{-1})\CE\left[\left\|(\phi_{i_t}'(\langle \varphi_{i_t}^t, \vartheta_{i_t}^t\rangle)) -  \phi_{i_t}'(\hat{x}_{i_t} ^T\vartheta_{i_t}^t))\hat{x}_{i_t}\right\|^2\right]
\\
&& - \eta_t^2\rho_t\|\nabla F(\theta_{t-1}) -  \nabla F(\theta_{*})\|^2  
+ 4\eta_t^2L(1 + \rho_t^{-1})\left[\frac{1}{n}\sum_{i=1}^nf(\vartheta_{i}^t) - F(\theta_*) - \frac{1}{n}\sum_{i=1}^n\langle \nabla f_i(\theta_*), \vartheta_{i}^t - \theta_* \rangle\right]
\\
&& + \eta_t^2(1 + \rho_t)\CE\left[\left\|\nabla f_{i_t}(\theta_{t-1}; \xi_{i_t}) - \nabla f_{i_t}(\theta_*; \xi_{i_t})\right\|^2\right], 
\end{eqnarray*}
where the last inequality holds by applying Lemma~\ref{lemma:smooth_asaga}. Using Lemma~\ref{lemma:strong_asaga} with $\theta = \theta_{t-1}$, we have
\begin{eqnarray*}
\lefteqn{\CE[\|\theta_t - \theta_*\|^2]} \nonumber\\
& \leq & (1 - \eta_t\mu)\|\theta_{t-1} - \theta_*\|^2 + 2\eta_t^2(1 + \rho_t^{-1})\CE\left[\left\|(\phi_{i_t}'(\langle \varphi_{i_t}^t, \vartheta_{i_t}^t\rangle)) -  \phi_{i_t}'(\hat{x}_{i_t} ^T\vartheta_{i_t}^t))\hat{x}_{i_t}\right\|^2\right] 
-\eta_t^2\rho_t\|\nabla F(\theta_{t-1}) -  \nabla F(\theta_{*})\|^2 \\
&&
- \frac{2\eta_t(L -\mu)}{L} [F(\theta_{t-1}) - F(\theta_*) - \langle \nabla F(\theta_*), \theta_{t-1} - \theta_*\rangle]
+ (\eta_t^2(1 + \rho_t) - \frac{\eta_t}{L})\CE\left[\left\|\nabla f_{i_t}(\theta_{t-1}; \xi_{i_t}) - \nabla f_{i_t}(\theta_*; \xi_{i_t})\right\|^2\right]
\\
&& 
+ 4\eta_t^2L(1 + \rho_t^{-1})\left[\frac{1}{n}\sum_{i=1}^nf(\vartheta_{i}^t) - F(\theta_*) - \frac{1}{n}\sum_{i=1}^n\langle \nabla f_i(\theta_*), \vartheta_{i}^t - \theta_* \rangle\right].
\end{eqnarray*}
Define Lyapunov function $C_t$ as
\begin{eqnarray*}
C_t = b\|\theta_t - \theta_*\|^2 + \alpha_t\left[\frac{1}{n}\sum_{i=1}^nf_i(\vartheta_{i}^{t+1}) - F(\theta_*) - \frac{1}{n}\sum_{i=1}^n\langle \nabla f_i(\theta_*), \vartheta_{i}^{t+1} - \theta_* \rangle\right].
\end{eqnarray*}
for some constant $b$, and we have
\begin{eqnarray*}
\lefteqn{\CE\left[\frac{1}{n}\sum_{i=1}^nf_i(\vartheta_{i}^{t+1}) - F(\theta_*) - \frac{1}{n}\sum_{i=1}^n\langle \nabla f_i(\theta_*), \vartheta_{i}^{t+1} - \theta_* \rangle\right] }\\
& = & \frac{1}{n}[F(\theta_{t-1}) - F(\theta_*) - \langle \nabla F(\theta_*), \theta_{t-1} - \theta_*\rangle ] \\
&& + (1 - \frac{1}{n})\left[\frac{1}{n}\sum_{i=1}^nf_i(\vartheta_{i}^t) - F(\theta_*) - \frac{1}{n}\sum_{i=1}^n\langle \nabla f_i(\theta_*), \vartheta_{i}^t - \theta_* \rangle\right].
\end{eqnarray*}
Then, we obtain
\begin{eqnarray}
\lefteqn{\CE[C_t]} \nonumber\\
& \leq & (1 - \eta_t\mu)C_{t-1} + 2b\eta_t^2(1 + \rho_t^{-1})\CE\left[\left\|(\phi_{i_t}'(\langle \varphi_{i_t}^t, \vartheta_{i_t}^t\rangle)) -  \phi_{i_t}'(\hat{x}_{i_t} ^T\vartheta_{i_t}^t))\hat{x}_{i_t}\right\|^2\right]   \nonumber \\
&& + (\frac{\alpha_t}{n} - \frac{2b\eta_t(L -\mu)}{L} - 2b\eta_t^2\mu\rho_t)[F(\theta_{t-1}) - F(\theta_*) - \langle \nabla F(\theta_*), \theta_{t-1} - \theta_*\rangle] \nonumber
\\ 
&&
+ \eta_tb(\eta_t(1 + \rho_t) - \frac{1}{L})\CE\left[\left\|\nabla f_{i_t}(\theta_{t-1}; \xi_{i_t}) - \nabla f_{i_t}(\theta_*; \xi_{i_t})\right\|^2\right] \nonumber
\\
&& 
+ (\alpha_{t-1}\eta_t\mu + \alpha_t + 4b\eta_t^2L(1 + \rho_t^{-1}) -\alpha_{t-1} - \frac{\alpha_t}{n} )\left[\frac{1}{n}\sum_{i=1}^nf_i(\vartheta_{i}^t) - F(\theta_*) - \frac{1}{n}\sum_{i=1}^n\langle \nabla f_i(\theta_*), \vartheta_{i}^t - \theta_* \rangle\right] \label{eq:ct}.
\end{eqnarray}
Let us assume that
\begin{eqnarray*}
\eta_t & = & \frac{c}{\gamma + t} \text{ for some } c > \frac{1}{\mu} \text{ and } \gamma > 0 \text{ such that } \eta_1 \leq \frac{1}{3(\mu n + L)}, \\
\rho_t & = & \frac{\gamma+t}{cL} -  1, \\
b & = & \frac{1}{2n}, \\
\alpha_t &=& \eta_t(1 - \eta_t\mu) \text{ and } \alpha_0 = \alpha_1.
\end{eqnarray*}
Then, we obtain
\begin{eqnarray*}
\frac{\alpha_t}{n} - \frac{2b\eta_t(L -\mu)}{L} - 2b\eta_t^2\mu\rho_t = 0, \\
\eta_t(1 + \rho_t) - \frac{1}{L} = 0, \\
\alpha_{t-1}\eta_t\mu + \alpha_t + 4b\eta_t^2L(1 + \rho_t^{-1}) -\alpha_{t-1} - \frac{\alpha_t}{n}  \leq 0.
\end{eqnarray*}
Hence, 
\begin{eqnarray*}
\CE[C_t]
& \leq & (1 - \eta_t\mu)C_{t-1} + \frac{1}{n}\eta_t^2(1 + \rho_t^{-1})\CE\left[\left\|(\phi_{i_t}'(\langle \varphi_{i_t}^t, \vartheta_{i_t}^t\rangle)) -  \phi_{i_t}'(\hat{x}_{i_t} ^T\vartheta_{i_t}^t))\hat{x}_{i_t}\right\|^2\right] \\
& \leq & (1 - \eta_t\mu)C_{t-1} + \frac{1}{n}\eta_t^2(1 + \frac{L}{3\mu n + 2L})\CE\left[\left\|(\phi_{i_t}'(\langle \varphi_{i_t}^t, \vartheta_{i_t}^t\rangle)) -  \phi_{i_t}'(\hat{x}_{i_t} ^T\vartheta_{i_t}^t))\hat{x}_{i_t}\right\|^2\right] \\
& \leq & (1 - \eta_t\mu)C_{t-1} + \frac{2\eta_t^2}{n}\CE\left[\left\|(\phi_{i_t}'(\langle \varphi_{i_t}^t, \vartheta_{i_t}^t\rangle)) -  \phi_{i_t}'(\hat{x}_{i_t} ^T\vartheta_{i_t}^t))\hat{x}_{i_t}\right\|^2\right].
\end{eqnarray*}
Taking total expectation, we have
\begin{eqnarray*}
\CE[C_t]
& \leq & (1 - \eta_t\mu)C_{t-1} + \frac{2\eta_t^2}{n}\CE\left[\left\|(\phi_{i_t}'(\langle \varphi_{i_t}^t, \vartheta_{i_t}^t\rangle)) -  \phi_{i_t}'(\hat{x}_{i_t} ^T\vartheta_{i_t}^t))\hat{x}_{i_t}\right\|^2\right] \\
& = & (1 - \eta_t\mu)C_{t-1} + \frac{2\eta_t^2}{n}\frac{1}{n}\sum_{i=1}^n\CE\left[\CE_{\xi_i,\xi_i'}\left[\left\|(\phi_{i}'(\langle \hat{x}_i', \vartheta_{i}^t\rangle)) -  \phi_{i}'(\hat{x}_{i} ^T\vartheta_{i}^t))\hat{x}_{i}\right\|^2\right] | \xi_i,\xi_i' \right] \\
& \leq & (1 - \eta_t\mu)\CE[C_{t-1}] + \frac{2\eta_t^2\sigma_c^2}{n},
\end{eqnarray*}
where the second-to-last equality holds as $\vartheta_{i}^t$ is independent from both $\varphi_{i}^t$ and $\hat{x}_{i}$ for each $i$, 
and $\vartheta_{i}^t \in \{\theta_0, \dots, \theta_{t-2}\}$.
Similar to the proof of Theorem~\ref{em_conv}, we can prove following by induction:
\begin{eqnarray*}
\CE[C_t] \leq \frac{\omega}{k + 1},
\end{eqnarray*}
where 
\begin{eqnarray*}
\omega \equiv \max\left(\frac{2c^2\sigma_c^2}{n(c\mu - 1)}, (\gamma + 1)C_0\right).
\end{eqnarray*}
Then, we have
\begin{eqnarray*}
\CE[\|\theta_t - \theta_*\|^2] \leq \frac{\nu_2}{k + 1},
\end{eqnarray*}
\begin{eqnarray*}
\nu_2 \equiv \max\left(\frac{4c^2\sigma_c^2}{c\mu - 1}, (\gamma + 1)C\right),
\end{eqnarray*}
where 
\[C \equiv \|\theta_0 - \theta_*\|^2  + \frac{2n}{3(\mu n + L)}\left[F(\theta_0) - F(\theta_*) - \langle \nabla F(\theta_*), \theta_0 - \theta_* \rangle\right].\]
\end{proof}


\subsection{Proof of Theorem~\ref{em_a_conv_ssaga}}
\begin{proof} 
Following (\ref{eq:p_recursive_n}), we have
\begin{eqnarray*}
\lefteqn{\CE[\|\theta_t - \theta_*\|^2]} \nonumber\\
& \leq & (1 - \eta_t\mu)\|\theta_{t-1} - \theta_*\|^2 - 2\eta_t\CE[P(\theta_{t}) - P(\theta_*)] + 2\eta_t^2\CE[\|v_t - \nabla F(\theta_{t-1})\|^2].
\end{eqnarray*}
A small change of the argument in Lemma~\ref{lemma:variance_asaga} leads to
\begin{eqnarray*} 
\lefteqn{\CE[\|v_t - \nabla F(\theta_{t-1})\|^2]}\\
   & \leq & 2(1 + \rho_t^{-1})\CE\left[\left\|(\phi_{i_t}'(\langle \varphi_{i_t}^t, \vartheta_{i_t}^t\rangle)) -  \phi_{i_t}'(\hat{x}_{i_t} ^T\vartheta_{i_t}^t))\hat{x}_{i_t}\right\|^2\right] + 2(1 + \rho_t^{-1})\CE\left[\left\|(\phi_{i_t}'(\hat{x}_{i_t} ^T\vartheta_{i_t}^t)) -  \phi_{i_t}'(\hat{x}_{i_t} ^T\theta_{*}))\hat{x}_{i_t}\right\|^2\right]  \\
&&   + (1 + \rho_t)\CE\left[\left\|\nabla f_{i_t}(\theta_{t-1}; \xi_{i_t}) - \nabla f_{i_t}(\theta_*; \xi_{i_t})\right\|^2\right].
 \end{eqnarray*}
Applying this and multiplying by some constant $w$, we obtain
\begin{eqnarray*}
\lefteqn{w\CE[\|\theta_t - \theta_*\|^2]} \nonumber\\
& \leq & w(1 - \eta_t\mu)\|\theta_{t-1} - \theta_*\|^2 - 2w\eta_t\CE[P(\theta_{t}) - P(\theta_*)] \\
&& + 4w(1 + \rho_t^{-1})\eta_t^2\CE\left[\left\|(\phi_{i_t}'(\langle \varphi_{i_t}^t, \vartheta_{i_t}^t\rangle)) -  \phi_{i_t}'(\hat{x}_{i_t} ^T\vartheta_{i_t}^t))\hat{x}_{i_t}\right\|^2\right] + 4w(1 + \rho_t^{-1})\eta_t^2\CE\left[\left\|(\phi_{i_t}'(\hat{x}_{i_t} ^T\vartheta_{i_t}^t)) -  \phi_{i_t}'(\hat{x}_{i_t} ^T\theta_{*}))\hat{x}_{i_t}\right\|^2\right]  
\\ &&  + 2w(1 + \rho_t)\eta_t^2\CE\left[\left\|\nabla f_{i_t}(\theta_{t-1}; \xi_{i_t}) - \nabla f_{i_t}(\theta_*; \xi_{i_t})\right\|^2\right] \\
& \leq & w(1 - \eta_t\mu)\|\theta_{t-1} - \theta_*\|^2 - 2w\eta_t\CE[P(\theta_{t}) - P(\theta_*)]  + 4w(1 + \rho_t^{-1})\eta_t^2\CE\left[\left\|(\phi_{i_t}'(\langle \varphi_{i_t}^t, \vartheta_{i_t}^t\rangle)) -  \phi_{i_t}'(\hat{x}_{i_t} ^T\vartheta_{i_t}^t))\hat{x}_{i_t}\right\|^2\right] \\
&&+ 8w(1 + \rho_t^{-1})\eta_t^2L\left[\frac{1}{n}\sum_{i=1}^nf_i(\vartheta_{i}^t) - F(\theta_*) - \frac{1}{n}\sum_{i=1}^n\langle \nabla f_i(\theta_*), \vartheta_{i}^t - \theta_* \rangle\right].  
\\ &&  + 2w(1 + \rho_t)\eta_t^2\CE\left[\left\|\nabla f_{i_t}(\theta_{t-1}; \xi_{i_t}) - \nabla f_{i_t}(\theta_*; \xi_{i_t})\right\|^2\right].
\end{eqnarray*}
Adding this to (\ref{eq:ct}) yields
\begin{eqnarray*}
\lefteqn{\CE[A_t]} \\
& \leq & (1 - \eta_t\mu)A_{t-1} + (2b + 4w)\eta_t^2(1 + \rho_t^{-1})\CE\left[\left\|(\phi_{i_t}'(\langle \varphi_{i_t}^t, \vartheta_{i_t}^t\rangle)) -  \phi_{i_t}'(\hat{x}_{i_t} ^T\vartheta_{i_t}^t))\hat{x}_{i_t}\right\|^2\right]   - 2w\eta_t\CE[P(\theta_{t}) - P(\theta_*)] \\
&& + (\frac{\alpha_t}{n} - \frac{2b\eta_t(L -\mu)}{L} - 2b\eta_t^2\mu\rho_t)[F(\theta_{t-1}) - F(\theta_*) - \langle \nabla F(\theta_*), \theta_{t-1} - \theta_*\rangle] 
\\ 
&&
+ \eta_t(\eta_tb(1 + \rho_t) +  2\eta_t w(1 + \rho_t)- \frac{b}{L})\CE\left[\left\|\nabla f_{i_t}(\theta_{t-1}; \xi_{i_t}) - \nabla f_{i_t}(\theta_*; \xi_{i_t})\right\|^2\right] 
\\
&& 
+ (\alpha_{t-1}\eta_t\mu + \alpha_t + 4b\eta_t^2L(1 + \rho_t^{-1}) + 8w\eta_t^2L(1 + \rho_t^{-1}) -\alpha_{t-1} - \frac{\alpha_t}{n} )\left[\frac{1}{n}\sum_{i=1}^nf_i(\vartheta_{i}^t) - F(\theta_*) - \frac{1}{n}\sum_{i=1}^n\langle \nabla f_i(\theta_*), \vartheta_{i}^t - \theta_* \rangle\right],
\end{eqnarray*}
where Lyapunov function $A_t$ is defined as  
\begin{eqnarray*}
A_t = (b + w)\|\theta_t - \theta_*\|^2 + \alpha_t\left[\frac{1}{n}\sum_{i=1}^nf_i(\vartheta_{i}^{t+1}) - F(\theta_*) - \frac{1}{n}\sum_{i=1}^n\langle \nabla f_i(\theta_*), \vartheta_{i}^{t+1} - \theta_* \rangle\right].
\end{eqnarray*} 
Let us assume that
\begin{eqnarray*}
\eta_t & = & \frac{c}{\gamma + t} \text{ for some } c > \frac{1}{\mu} \text{ and } \gamma > 0 \text{ such that } \eta_1 \leq \frac{1}{7(\mu n + L)}, \\
\rho_t & = & \frac{\gamma+t}{2cL} -  1, \\
b & = & \frac{1}{2n}, \\
w & = & \frac{1}{4n}, \\
\alpha_t &=& \eta_t(1 - \eta_t\mu - \frac{\mu}{2L}) \text{ and } \alpha_0 = \alpha_1.
\end{eqnarray*}
Then, we obtain
\begin{eqnarray*}
\CE[A_t]
& \leq & (1 - \eta_t\mu)A_{t-1} + (2b + 4w)\eta_t^2(1 + \rho_t^{-1})\CE\left[\left\|(\phi_{i_t}'(\langle \varphi_{i_t}^t, \vartheta_{i_t}^t\rangle)) -  \phi_{i_t}'(\hat{x}_{i_t} ^T\vartheta_{i_t}^t))\hat{x}_{i_t}\right\|^2\right]   - 2w\eta_t\CE[P(\theta_{t}) - P(\theta_*)] \\
& = & (1 - \eta_t\mu)A_{t-1} + \frac{2\eta_t^2(1 + \rho_t^{-1})}{n}\CE\left[\left\|(\phi_{i_t}'(\langle \varphi_{i_t}^t, \vartheta_{i_t}^t\rangle)) -  \phi_{i_t}'(\hat{x}_{i_t} ^T\vartheta_{i_t}^t))\hat{x}_{i_t}\right\|^2\right]   - \frac{\eta_t}{2n}\CE[P(\theta_{t}) - P(\theta_*)] \\
& \leq & (1 - \eta_t\mu)A_{t-1} + \frac{2\eta_t^2}{n}\CE\left[\left\|(\phi_{i_t}'(\langle \varphi_{i_t}^t, \vartheta_{i_t}^t\rangle)) -  \phi_{i_t}'(\hat{x}_{i_t} ^T\vartheta_{i_t}^t))\hat{x}_{i_t}\right\|^2\right] (1 + \frac{2L}{7\mu n  + 5 L})  - \frac{\eta_t}{2n}\CE[P(\theta_{t}) - P(\theta_*)] \\
& \leq & (1 - \eta_t\mu)A_{t-1} + \frac{4\eta_t^2}{n}\CE\left[\left\|(\phi_{i_t}'(\langle \varphi_{i_t}^t, \vartheta_{i_t}^t\rangle)) -  \phi_{i_t}'(\hat{x}_{i_t} ^T\vartheta_{i_t}^t))\hat{x}_{i_t}\right\|^2\right]   - \frac{\eta_t}{2n}\CE[P(\theta_{t}) - P(\theta_*)] \\
\end{eqnarray*}
Let $\eta_t = 2/(\mu(\gamma + t))$. Following similar arguments in the proof of Theorem~\ref{em_ap_conv}, we obtain
\begin{eqnarray*}
\lefteqn{\CE[P(\theta_{t}) - P(\theta_*)]}\\
& \leq & (\frac{1}{\eta_t} - \mu)2n\CE[A_{t-1}] - \frac{2n}{\eta_t}\CE[A_t] +  8\eta_t\sigma_c^2\\
& = & (\gamma + t - 2)n\mu\CE[A_{t-1}] - (\gamma+t)n\mu\CE[A_t]  +  \frac{16\sigma_c^2}{\mu (\gamma + t)}.
\end{eqnarray*}
Multiplying by $\gamma + t - 1$ yields
\begin{eqnarray*}
\lefteqn{(\gamma+t-1)\CE[P(\theta_{t}) - P(\theta_*)]}\\
& \leq & (\gamma+t-1)(\gamma + t - 2)n\mu\CE[A_{t-1}] - (\gamma+t)(\gamma+t-1)n\mu\CE[A_t] + \frac{(\gamma+t-1)}{\gamma + t}\frac{16\sigma_c^2}{\mu}\\
& \leq & (\gamma+t-1)(\gamma + t - 2)n\mu\CE[A_{t-1}] - (\gamma+t)(\gamma+t-1)n\mu\CE[A_t] + \frac{16\sigma_c^2}{\mu}.
\end{eqnarray*}
By summing the above inequality from $t = 1$ to $t= T$, we have a telescoping sum that simplifies as follows:
\begin{eqnarray*}
\lefteqn{\sum_{t=1}^T(\gamma+t-1)\CE[P(\theta_{t}) - P(\theta_*)]}\\
& \leq & \gamma(\gamma - 1)n\mu A_0 - (\gamma+T)(\gamma+T-1)n\mu\CE[A_T]  + \frac{16T\sigma_c^2}{\mu} \\
& \leq & \gamma(\gamma - 1)n\mu A_0   + \frac{16T\sigma_c^2}{\mu}.
\end{eqnarray*}
Dividing by $\sum_{t=1}^T(\gamma+t-1) = (2T\gamma + T(T-1))/2$ and using Jensen's inequality yields
\begin{eqnarray*}
\CE[P(\bar{\theta}_T) - P(\theta_*)]
& \leq & \frac{2n\mu\gamma(\gamma - 1)}{T(2\gamma + T - 1)}A_0  + \frac{32\sigma_c^2}{\mu(2\gamma + T - 1)} \\
& = & \frac{\mu\gamma(\gamma - 1)}{2T(2\gamma + T - 1)}C_4  + \frac{32\sigma_c^2}{\mu(2\gamma + T - 1)},
\end{eqnarray*}
where 
\[C_4 \equiv 3\|\theta_0 - \theta_*\|^2  + \frac{4n}{7(\mu n + L)}\left[F(\theta_0) - F(\theta_*) - \langle \nabla F(\theta_*), \theta_0 - \theta_* \rangle\right].\]

\end{proof}

\end{document}